\renewcommand{\labelenumi}{(\alph{enumi})}
\renewcommand\theenumi\labelenumi
\newtheorem{theorem}{Theorem}
\newtheorem{lemma}[theorem]{Lemma}
\newtheorem{definition}[theorem]{Definition}
\newcommand{\ooea}{(1+1)~EA\xspace}
\newcommand{\oplea}{(1+$\lambda$)~EA\xspace}
\newcommand{\lea}{\oplea}
\newcommand{\mpoea}{($\mu$+1)~EA\xspace}
\newcommand{\mplea}{($\mu$+$\lambda$)~EA\xspace}
\newcommand{\oclea}{(1,$\lambda$)~EA\xspace}
\newcommand{\om}{\textsc{OneMax}\xspace}
\newcommand{\onemax}{\om}
\newcommand{\lo}{\textsc{LeadingOnes}\xspace}
\newcommand{\leadingones}{\lo}
\newcommand{\R}{\ensuremath{\mathbb{R}}}
\newcommand{\N}{\ensuremath{\mathbb{N}}}
\newcommand{\Z}{\ensuremath{\mathbb{Z}}}
\DeclareMathOperator{\Prob}{Pr}
\DeclareMathOperator{\init}{init}
\DeclareMathOperator{\Bin}{Bin}
\newcommand{\xmin}{x_{\mathrm{min}}}
\newcommand{\xmax}{x_{\mathrm{max}}}
\newcommand{\rmax}{r_{\mathrm{max}}}
\newcommand{\eg}{e.\,g.\xspace}
\newcommand{\Var}{\mathrm{Var}\xspace} 
\newcommand{\eps}{\varepsilon}
\newcommand{\E}{E}
\newenvironment{proofof}[1]{\begin{proof}[of #1]}{\end{proof}}
\title{Runtime Analysis for Self-adaptive Mutation Rates\thanks{Extended version of a paper appearing at the \emph{Genetic and Evolutionary Computation Conference} 2018~\cite{DoerrWY18}. This version contains all proofs, whereas most of them for reasons of space did not fit into the conference version. In this version, the main result is valid for all $\lambda \ge C \ln(n)$, $C$ a sufficiently large constant, whereas the conference version needed $\lambda \ge (\ln n)^{1+\eps}$ for an arbitrary $\eps>0$.}}
\author{Benjamin Doerr\\ \raisebox{0mm}[0mm][0mm]{\'E}cole Polytechnique\\ CNRS\\
Laboratoire d'Informatique (LIX)\\Palaiseau, France
\and
Carsten Witt\\
DTU Compute\\Technical University of Denmark\\Kgs.\ Lyngby, Denmark
\and
Jing Yang\\ {\'E}cole Polytechnique\\ CNRS\\
Laboratoire d'Informatique (LIX)\\Palaiseau, France}
\begin{document}
{\sloppy   
\maketitle
\setcounter{secnumdepth}{2}
\begin{abstract}
We propose and analyze a self-adaptive version of the $(1,\lambda)$ evolutionary algorithm in which the current mutation rate is part of the individual and thus also subject to mutation. A rigorous runtime analysis on the \onemax benchmark function reveals that a simple local mutation scheme for the rate leads to an expected optimization time (number of fitness evaluations) of $O(n\lambda/\log\lambda+n\log n)$ when $\lambda$ is at least $C \ln n$ for some constant $C > 0$. For all values of $\lambda \ge C \ln n$, this performance is asymptotically best possible among  all $\lambda$-parallel mutation-based unbiased black-box algorithms. 

Our result shows that self-adaptation in evolutionary computation can find complex optimal parameter settings on the fly. At the same time, it proves that a relatively complicated self-adjusting scheme for the mutation rate proposed by Doerr, Gie{\ss}en, Witt, and Yang~(GECCO~2017) can be replaced by our simple endogenous scheme. 

On the technical side, the paper contributes new tools for the analysis of two-dimensional drift processes arising in the analysis of dynamic parameter choices in  EAs, including bounds on occupation probabilities in processes with non-constant drift.
\end{abstract}

\section{Introduction}

Evolutionary algorithms are a class of heuristic algorithms that can be applied to solve optimization problems if no problem-specific algorithm is available. For example, 
this may be the case if the structure of the underlying problem is poorly understood or one is faced with a so-called 
black-box scenario, in which the quality of a solution can only be determined by calling an  
implementation of the objective function. This implementation may be implicitly given by, \eg, the outcome of a simulation without 
revealing structural relationships between search point and function value.

An approach to understand the working principles of evolutionary algorithms is to analyze the underlying stochastic process 
and its first hitting time of the set of optimal or approximate solutions. The runtime analysis community in 
evolutionary computation (see, \eg, \cite{AugerD11,Jansen13,NeumannW10} for an introduction to the subject) 
follows this approach by partly using methods known from the analysis of classical randomized 
algorithms and, more recently and increasingly often, 
using and adapting tools from the theory of stochastic processes to obtain bounds 
on the hitting time of optimal solutions for different classes of 
evolutionary algorithms and problems. Such bounds will typically 
depend on problem size, problem type, evolutionary algorithm and choice of the parameters that these 
heuristic algorithms come with.

One of the core difficulties when using evolutionary algorithms is in fact 
finding suitable values for its parameters. It is well known and supported by ample experimental and some 
theoretical evidence that already small changes of the parameters can have a crucial influence on the efficiency of the algorithm. 

One elegant way to overcome this difficulty, and in addition the difficulty that the optimal parameter values may change during a run of the algorithm, is to let the algorithm optimize the parameters \emph{on the fly}. Formally speaking, this is an even more complicated task, because  instead of a single good parameter value now a suitable functional dependence of the parameter on the search history needs to be provided. Fortunately, a number of natural heuristics like the $1/5$-th rule have proven to be effective in certain cases. In a sense, these are all \emph{exogenous} parameter control mechanisms which are added to the evolutionary system.

An even more elegant way is to incorporate the parameter control mechanism into the evolutionary process, that is, to attach the parameter value to the individual, to modify it via (extended) variation operators, and to use the fitness-based selection mechanisms of the algorithm to ensure that good parameter values become dominant in the population. This \emph{self-adaptation} of the parameter values has two main advantages: (i)~It is generic, that is, the adaptation mechanism is provided by the algorithm, only the representation of the parameter in the individual and the extension of the variation operators has to be provided by the user. (ii)~It allows to re-use existing algorithms and much of the existing code. 

Despite these advantages, self-adaptation is not used a lot in discrete evolutionary optimization. From the theory side, some advice exists how to set up such a self-adaptive system, but a real proof for its usefulness is still missing. This is the point we aim to make some progress on. 

\subsection{Our Results} 

The main result of this work is that we propose a version of the $(1,\lambda)$ evolutionary algorithm (EA) with a natural self-adaptive choice of the mutation rate. For $\lambda \ge C \ln n$, $C$ a sufficiently large constant, we prove that it optimizes the classic \onemax benchmark problem in a runtime that is asymptotically optimal among all $\lambda$-parallel black-box optimization algorithms and that is better than the known runtimes of the \oclea and the \lea for all static choices of the mutation rate. Compared to the (also asymptotically optimal) \lea with fitness-dependent mutation rate of Badkobeh, Lehre, and Sudholt~\cite{BadkobehLS14} and the \lea with self-adjusting (exogenous) mutation rate of Doerr, Gie\ss en, Witt, and Yang~\cite{DoerrGWY17} the good news of our result is that this optimal runtime could be obtained in a generic manner. Note that both the fitness-dependent mutation rate of~\cite{BadkobehLS14} and the self-adjusting rate of~\cite{DoerrGWY17} with its mix of random and greedy rate adjustments would have been hard to find without a deeper understanding of the mathematics of these algorithms. 

Not surprisingly, the proof of our main result has some similarity to the analysis of the self-adjusting \lea of~\cite{DoerrGWY17}. In particular, we also estimate the expected progress in one iteration and use variable drift analysis. Also, we need a careful probabilistic analysis of the progress obtained from different mutation rates to estimate which rate is encoded in the new parent individual (unfortunately, we cannot reuse the analysis of~\cite{DoerrGWY17} since it is not always strong enough for our purposes). The reason, and this is also the main technical challenge in this work, is that the \oclea can lose fitness in one iteration. This happens almost surely when the mutation rate is too high. For this reason, we need to argue more carefully that such events do not happen regularly. To do so, among several new arguments, we also need a stronger version of the occupation probability result~\cite[Theorem~7]{KotzingLissWittFOGA15} since (i)~we need sharper probability estimates for the case that movements away from the target are highly unlikely and (ii)~for our process, the changes per time step cannot be bounded by a small constant. We expect our new results (Lemma~\ref{lem:occupation} and~\ref{lem:occsimple}) to find other applications in the theory of evolutionary algorithms in the future. Note that for the \lea, an excursion into unfavorable rate regions  is less a problem as long as one can show that the mutation rate returns into the good region after a reasonable time. The fact that the \oclea can lose fitness also makes it more difficult to cut the analysis into regimes defined by fitness levels since it is now possible that the EA returns into a previous regime.

In this work, we also gained two insights which might be useful in the design of future self-adaptive algorithms. 

\emph{Need for non-elitism:} Given the previous works, it would be natural to try a self-adaptive version of the \lea. However, this is risky. While the self-adjusting EA of~\cite{DoerrGWY17} copes well with the situation that the current mutation rate is far from the ideal one and then provably quickly changes the rate to an efficient setting, a self-adaptive algorithm cannot do so. Since the mutation rate is encoded in the individual, a change of the rate can only occur if an offspring is accepted. For an elitist algorithm like the \lea, this is only possible when an offspring is generated that is good enough to compete with the parent(s). Consequently, if the parent individual in a self-adaptive \lea has a high fitness, but a detrimental (that is, large) mutation rate, then the algorithm is stuck with this individual for a long time. Already for the simple \onemax function, such a situation can lead to an exponential runtime. 

Needless to say, when using a comma strategy we have to choose $\lambda$ sufficiently large to avoid losing the current-best solution too quickly. This phenomenon has been observed earlier, e.g., in~\cite{RoweS14} it is shown that $\lambda \ge (1-o(1)) \log_{(e-1)/e}(n)$ is necessary for the \oclea with mutation rate $1/n$ to have a polynomial runtime on any function with unique optimum. We shall not specify a precise leading constant for our setting, but also require that $\lambda \ge C \ln(n)$ for a sufficiently large constant $C$.

\emph{Tie-breaking towards lower mutation rates:} To prove our result, we need that the algorithm in case of many offspring of equal fitness prefers those with the smaller mutation rate. Given that the usual recommendation for the mutation rate is small, namely $\frac 1n$, and that it is well-known that large rates can be very detrimental, it is natural to prefer smaller rates in case of ties (where, loosely speaking, the offspring population gives no hint which rate is preferable). 

This choice is similar to the classic tie-breaking rule of preferring offspring over parents in case of equal fitness. Here, again, the fitness indicates no preference, but the simple fact that one is maybe working already for quite some time with this parent suggest to rather prefer the new individual. 


\subsection{Previous Works} 

This being a theoretical paper, for reasons of space we shall mostly review the relevant theory literature, and also this with a certain brevity. For a broader account of previous works, we refer to the survey~\cite{KarafotiasHE15}. For a detailed description of the state of the art in theory of dynamic parameter choices, we refer to the survey~\cite{DoerrD18survey}. We note that the use of self-adaptation in genetic algorithms was proposed in the seminal paper~\cite{Back92} by B\"ack. Also, we completely disregard evolutionary optimization in continuous search spaces due to the very different nature of optimization there (visible, e.g., from the fact that dynamic parameter changes, including self-adaptive choices, are very common and in fact necessary to allow the algorithms to approach the optimum with arbitrary precision).

The theoretical analysis of dynamic parameter choices started slow. A first paper ~\cite{JansenW06} on this topic in 2006 demonstrated the theoretical superiority of dynamic parameter choices by giving an artificial example problem for which any static choice of the mutation rate leads to an exponential runtime, whereas a suitable time-dependent choice leads to a polynomial runtime. Four years later~\cite{BottcherDN10}, it was shown that a fitness-dependent choice of the mutation rate can give a constant-factor speed-up when optimizing the \leadingones benchmark function (see~\cite[Section~2.3]{Doerr18evocoparxiv} for a simplified proof giving a more general result). The first super-constant speed-up on a classic benchmark function obtained from a fitness-dependent parameter choice was shown in~\cite{DoerrDE13}, soon to be followed by the paper~\cite{BadkobehLS14} which is highly relevant for this work. In~\cite{BadkobehLS14}, the \oplea with fitness-dependent mutation rate was analyzed. For a slightly complicated fitness-dependent mutation rate, an optimization time of $O(n \lambda / \log \lambda + n \log n)$ was obtained. Also, it was shown that no $\lambda$-parallel mutation-based unbiased black-box algorithm can have an asymptotically better optimization time.

Around that time, several successful self-adjusting (``on-the-fly'') parameter choices were found and analyzed with mathematical means. In~\cite{LassigS11}, a success-based multiplicative update of the population size $\lambda$ in the \lea is proposed and it is shown that this can lead to a reduction of the parallel runtime. A multiplicative update inspired by the $1/5$-th success rule from evolution strategies automatically finds parameter settings~\cite{DoerrD15} leading to the same performance as the fitness-dependent choice in~\cite{DoerrDE13}. Similar multiplicative update rules have been used to control the mutation strength for multi-valued decision variables~\cite{DoerrDK18} and the time interval for which a selected heuristic is used in~\cite{DoerrLOW18}. A learning-based approach was used in~\cite{DoerrDY16ppsn} to automatically adjust the mutation strength and obtain the performance of the fitness-dependent choice of~\cite{DoerrDY16}. Again a different approach was proposed in~\cite{DoerrGWY17}, where the mutation rate for the \lea was determined on the fly by creating half the offspring with a smaller and half the offspring with a larger mutation rate than the value currently thought to be optimal. As new mutation rate, with probability $\frac12$ the rate which produced the best offspring was chosen, with probability $\frac12$ a random of the two rates was chosen. The three different exogenous approaches used in these works indicate that a generic approach towards self-adjusting parameter choices, such as self-adaptation, would ease the design of such algorithms significantly. 

Surprisingly, prior to this work only a single runtime analysis paper for self-adapting parameter choices appeared. In~\cite{DangL16ppsn}, Dang and Lehre show several positive and negative results on the performance of a simple class of self-adapting evolutionary algorithms having the choice between several mutation rates. Among them, they show that such an algorithm having the choice between an appropriate and a destructively high mutation rate can optimize the \leadingones benchmark function in the usual quadratic time, whereas the analogous algorithm using a random of the two mutation rates (and hence in half the cases the right rate) fails badly and needs an exponential time. As a second remarkable result, they give an example setting where any constant mutation rate leads to an exponential runtime, whereas the self-adapting algorithm succeeds in polynomial time. As for almost all such examples, also this one is slightly artificial and needs quite some assumptions, for example, that all $\lambda$ individuals are initialized with the 1-point local optimum. Nevertheless, this result makes clear that self-adaptation can outperform static parameter choices. In the light of this result, the main value of our results is showing that asymptotic runtime advantages from self-adaptation can also be obtained in less constructed examples (of course, at the price that the runtime gap is not exponential).

To complete the picture on previous work relevant to ours, we finally quickly describe what is known on the performance of most common mutation-based algorithms for the \onemax benchmark function. For the simple \ooea, the expected runtime of $\Theta(n \log n)$ was determined in~\cite{Muhlenbein92} (upper bound) and~\cite{DrosteJW02} (lower bound, this result was announced already 1998). For the \oplea with $\lambda \le n^{1-\eps}$, $\eps > 0$ a constant, an expected runtime (number of fitness evaluations) of \[\Theta\left(\frac{n \lambda \log\log \lambda}{\log \lambda} + n \log n\right)\] was shown in~\cite{JansenDW05,DoerrK15}. For the \mpoea with polynomially bounded $\mu$, the expected runtime is $\Theta(\mu n + n \log n)$, see~\cite{Witt06}. Finally, the expected runtime of the \mplea was recently~\cite{AntipovDFH18} determined as
\[
\Theta\bigg(\frac{n\log n}{\lambda}+\frac{n}{\lambda / \mu} +
\frac{n\log^+\log^+ \lambda/ \mu}{\log^+ \lambda / \mu}\bigg),\] where $\log^+
x := \max\{1, \log x\}$ for all $x > 0$.

The earliest runtime analysis of the \oclea with mutation rate $1/n$ on \onemax is due to Jägersküpper and 
Storch \cite{JSFOCI07}, who prove a phase transition from exponential to polynomial runtime in the regime $\lambda = \Theta(\log n)$, 
leaving a gap of at least 21 between the largest $\lambda$ in the exponential regime and the smallest in the polynomial regime. 
This result was improved by Rowe and Sudholt \cite{RoweS14}, who determined the phase transition point to be  
the above-mentioned function $\log_{(e-1)/e}(n)$, up to lower order terms. Jägersküpper and Storch \cite{JSFOCI07} 
also obtain a useful 
coupling result: if $\lambda \ge c\ln n$ for a sufficiently large constant
$c>0$, the stochastic behavior of the \oplea and \oclea with high probability are identical for a certain polynomial 
(with degree depending on~$c$)
number of steps, allowing the above-mentioned results about the \oplea to be transferred to the \oclea.  

\subsection{Techniques}

One of the technical difficulties in our analysis is that our self-adaptive \oclea can easily lose fitness when the rate parameter is set to an unsuitable value. For this reason, we cannot use the general approach of the analysis of the self-adjusting \oplea in~\cite{DoerrGWY17}, which separated the analysis of the rate and the fitness by, in very simple words, first waiting until the rate is in the desired range and then waiting for a fitness improvement (of course, taking care of the fact that the rate could leave the desired range). To analyze the joint process of fitness and rate with its intricate interactions, we in particular use drift analysis with a two-dimensional distance function, that is, we map (e.g., in Lemma~\ref{lem:nearmain}) the joint space of fitness and rate suitably into the non-negative integers in a way that the expected value of this mapping decreases in each iteration. This allows to use well-known drift theorems.

The use of two-dimensional potential functions is not new in the analysis of evolutionary algorithms. However, so far only very few analyses exist that use this technique with dynamic parameter values and among these results, we feel that ours, in particular, Lemma~\ref{lem:nearmain}, are relatively easy to use. Again in very simple words, the distance function $g$ defined in the proof of Lemma~\ref{lem:nearmain} is the fitness distance plus a pessimistic estimate for the fitness loss that could be caused from the current rate if this is maladjusted). We thus hope that this work eases future analyses of dynamic parameter choices by suggesting ways to measure suitably the progress in the joint space of solution quality and parameter value.

To allow the reader to compare our two-dimensional drift approach with existing works using similar arguments, we briefly review the main works that use two- or more-dimensional potential functions. Ignoring that the artificial fitness functions used in~\cite{DrosteJW02,DoerrJohannsenWinzenALGO12,DoerrG13algo,Witt13} could also be interpreted as $n$-dimensional potential functions, the possibly first explicit use of a two-dimensional potential function in the runtime analysis of randomized search heuristics can be found in~\cite[proof of Theorem~4]{Wegener05}, a work analyzing how simulated annealing and the Metropolis algorithm compute minimum spanning trees in a line of connected triangles. In such optimization processes, a solution candidate (which is a subset of the edge set of the graph) can have two undesirable properties. (i) The solution contains a \emph{complete triangle}, so one of these three edges has to be removed on the way to the optimal solution. (ii) The solution contains two edges of a triangle, but not the two with smallest weight. This case, called \emph{bad triangle}, is the less desirable one as here one edge of the solution has to be replaced by the missing edge and hence the status of two edges has to be changed. It turns out that a simple potential function can take care of these two issues, namely twice the number of bad triangles plus the number of complete triangles.

When analyzing non-trivial parent populations, then often it does not suffice to measure the quality of the current state via the maximum fitness in the population, but also the number of individuals having this best fitness has to be taken into account. This was first done in the analysis of the \mpoea in~\cite{Witt06}. Since in a run of this algorithm the population never worsens (in a strong sense), the progress could be analyzed conveniently via arguments similar to the fitness level method. Consequently, it was not necessary to define an explicit potential function. In a similar fashion, the $(N+N)$ EA~\cite{ChenHSCY09} and the \mplea~\cite{AntipovDFH18} were analyzed by regarding the maximum fitness and the number of individuals having this fitness.

In~\cite{LehreY12}, a vaguely similar approach was taken for non-elitist population-based algorithms. However, the fact that these algorithm may lose the current-best solution required a number of non-trivial modifications, most notably, (i)~that the potential is based on the maximum fitness such that at least a proportion of $\gamma$ of the individuals have at least this fitness (for a suitable constant $0 < \gamma <1$) instead of the maximum fitness among all individuals, and (ii)~that the arguments resembling the fitness level method had to be replaced by a true drift argument. This approach was extended in~\cite{DangL16algo} to give a general ``level-based'' runtime analysis result. A simplified version of this level theorem was recently given in~\cite{CorusDEL18}.

What comes closest to our work with respect to the use of two-dimensional potential functions is~\cite{DoerrDK18}, where a self-adjusting bit-wise mutation strength for functions defined not over bit strings, but over $\{0, \dots, r-1\}^n$ for some $r > 2$ is discussed. The potential function defined in~(6) in~\cite[Section~7]{DoerrDK18} is too complicated to be described here in detail, but it also follows the pattern used in this work, namely that the potential (to be minimized) is the sum of the fitness distance and a penalty for mutation strengths deviating from their currently ideal value. This potential function, however, does not admit an easy interpretation of the type ``fitness distance plus expected damage from improper mutation strength'' as in our work. Consequently, the proof that indeed the desired progress is obtained with respect to this potential function is a lengthy (more than 4 pages) case distinction. Apparently unaware of the conference version~\cite{DoerrDK16PPSN}, a similar approach, also with a slightly complicated potential function, was developed in~\cite{AkimotoAG18} to analyze the (1+1) ES with $1/5$ success rule.

A very general approach was recently published in~\cite{Rowe18}. When a process $X_0, X_1, \dots$ admits several distance functions $d_1, \dots, d_m$ such that, for all $i \in [1..m]$, the $i$-th distance satisfies $E[d_i(X_{t+1}) \mid X_t] \le A (d_1(X_t), \dots, d_m(X_t))^\top$ for a given matrix $A$, then under some natural conditions the first time until all distances are zero can be bounded in terms of a suitable eigenvalue of $A$. The assumptions on the distance functions and the matrix $A$ are non-trivial, but~\cite{Rowe18} provides a broad selection of applications of this method. For our problem, we would expect that this method can be employed as well, however, this would also need an insight similar to the main insight of our approach, namely that the expected new fitness can be estimated in a linear fashion from the current fitness and the distance of the current rate from the ideal value.

\subsection{Organization of This Work}

This paper is structured as follows. In Section~\ref{sec:algo}, we define the self-adaptive \oclea proposed in this work. In Section~\ref{sec:tools} we provide the technical tools needed on our analysis, among them two new results on occupation probabilities. Section~\ref{sec:main} presents the main theorem. Its proof 
considers two main regions of different fitness, which are dealt with in separate subsections. 
We finish with some conclusions.

%
\section{The \oclea With Self-Adapting Mutation Rate}\label{sec:algo}

We now define precisely the \oclea with self-adaptive mutation rate proposed in this work. This algorithm, formulated for the \emph{minimization}
of pseudo-boolean functions $f:\{0,1\}^n\to\R$, is stated in pseudocode in Algorithm~\ref{alg:onelambda}.

To encode the mutation rate into the individual, we extend the individual representation by adding the rate parameter. Hence the extended individuals are pairs $(x,r)$ consisting of a search point $x \in \{0,1\}^n$ and the rate parameter $r$, which shall indicate that $r/n$ is the mutation rate this individual was created with. 

The extended mutation operator first changes the rate to either $r/F$ or $Fr$ with equal probability ($F>1$). It then performs standard bit mutation with the new rate. 

In the selection step, we choose from the offspring population an individual with best fitness. If there are several such individuals, we prefer individuals having the smaller rate $r/F$, breaking still existing ties randomly. In this winning individual, we replace the rate by $F$ if it was smaller than $F$ to ensure that in the next iterations, the lower of the two rates is at least $1$. We replace the rate by $\rmax = F^{\lfloor \log_F (n/(2F)) \rfloor}$, that is, the largest power of $F$ not exceeding $n/(2F)$, if it was larger than this number. This ensures that in the next iteration, the larger of the two rates is not larger than $n/2$ and that the rate remains a power of $F$ despite the cap.

We formulate the algorithm to start with any initial mutation rate $r^{\init}$ such that $F \le r^{\init} \le n/(2F)$ and $r^{\init}$ is a power of $F$. For the result we shall show in this work, the initial rate is not important, but without this prior knowledge we would strongly recommend to start with the smallest possible rate $r^{\init} = F$. Due to the multiplicative rate adaptation, the rate can quickly grow if this is profitable. On the other hand, a too large initial rate might lead to an erratic initial behavior of the algorithm. 

For the adaptation parameter, we shall use $F=32$ in our runtime analysis. Having such a large adaptation parameter eases the already technical analysis, because now the two competing rates $r/F$ and $Fr$ are different enough to lead to a significantly different performance. For a practical application, we suspect that a smaller value of $F$ is preferable as it leads to a more stable optimization process. The choice of the offspring population size depends mostly on the degree of parallelism one wants to obtain. Clearly, $\lambda$ should be at least logarithmic in $n$ to prevent a too quick loss of the current-best solution. For our theoretical analysis, we require $\lambda \ge C \ln n$ for a sufficiently large constant $C$. 

\begin{algorithm}
\caption{The \oclea with self-adapting mutation rate, adaptation parameter $F>1$, and initial mutation rate $r^{\init}/n$ such that $r^{\init} \in [F,n/(2F)]$ and $r^{\init} = F^i$ for some $i \in \N$.}
\label{alg:onelambda}
	\begin{algorithmic}
		\State Select $x_0$ uniformly at random from $\{0, 1\}^n$.
		\State Set $r_0 \gets r^{\init}$.
		\For{$t \gets 1, 2, \dots$}
			\For{$i \gets 1, \dots, \lambda$}
			  \State Choose $r_{t,i} \in \{r_{t-1} / F, F r_{t-1}\}$ uniformly at random.
        \State Create $x_{t,i}$ by flipping each bit in $x$ independently with probability $r_{t,i}/n$.
			\EndFor
			\State Choose $i \in [1..\lambda]$ such that $f(x_{t,i}) = \min_{j \in [1..\lambda]} f(x_{t,j})$; in case of a tie, prefer an $i$ with $r_{t,i} = r_{t-1}/F$; break remaining ties randomly.
			\State $(x_t,r_t) \gets (x_{t,i},r_{t,i})$.
      \State Replace $r_t$ with $\min\{\max\{F, r_t\}, F^{\lfloor \log_F(n/(2F)) \rfloor}\}$.
		\EndFor
	\end{algorithmic}
\end{algorithm}


The main result of this work is a mathematical runtime analysis of the performance of the algorithm proposed above on the classic benchmark function $\onemax : \{0,1\}^n \to \R$ defined by $\onemax(x) = \sum_{i=1}^n x_i$ for all $x = (x_1, \dots, x_n) \in \{0,1\}^n$. Since such runtime analyses are by now a well-established way of understanding the performance of evolutionary algorithms, we only brief{}ly give the most important details and refer the reader to the textbook~\cite{Jansen13}.

The aim of runtime analysis is predicting how long an evolutionary algorithm takes to find the optimum or a solution of sufficient quality. As implementation-independent performance measure usually the number of fitness evaluations performed in a run of the algorithm is taken. More precisely, the \emph{optimization time} of an algorithm on some problem is the number of fitness evaluations performed until for the first time an optimal solution is evaluated. Obviously, for a \oclea, the optimization time is essentially $\lambda$ times the number of iterations performed until an optimum is generated.

As in classic algorithms analysis, our main goal is an asymptotic understanding of how the optimization time depends on the problems size $n$. Hence all asymptotic notation in the paper will be with respect to $n$ tending to infinity.

\section{Technical Tools}\label{sec:tools}

In this section, we listed several tools which are used in our work. Most of them are standard tools in the runtime analysis of evolutionary algorithms, however, we also prove two new results on occupation probabilities at the end of this section.

\subsection{Elementary Estimates}

We shall frequently use the following estimates.
\begin{lemma}\label{lestimate}
\begin{enumerate}
\item\label{itestimate1} For all $x \in \R$, $1+x \le e^x$.
\item\label{itestimate2} For all $x \in [0,\frac 23]$, $e^{-x-x^2} \le 1-x$. Moreover, for all $x \in [0,\frac 12]$, $e^{-3x/2} \le 1-x$.
\item\label{itestimate3} Weierstrass product inequality: For all $p_1, \dots, p_n \in [0,1]$, \[1 - \sum_{i=1}^n p_i \le \prod_{i=1}^n (1-p_i).\]
\end{enumerate}
\end{lemma}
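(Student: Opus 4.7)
All three estimates are classical, and I would prove each by elementary calculus or induction, with no real obstacle beyond checking endpoints.

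For part (a), I would consider $f(x)=e^x-(1+x)$, note that $f'(x)=e^x-1$ vanishes only at $x=0$ and that $f''(x)=e^x>0$ everywhere, so $x=0$ is the global minimum and $f(0)=0$ gives $f\ge 0$ on $\R$, which is the claim. A one-line alternative is to observe that $e^x=\sum_{k\ge 0} x^k/k!$ dominates $1+x$ for $x\ge 0$, and to obtain the case $x<0$ by applying (a) to $-x$ after rearranging.

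For part (b), I would take logarithms and reduce each inequality to showing a certain smooth function is nonnegative on the stated interval. For the first, set $g(x)=\ln(1-x)+x+x^2$; then $g(0)=0$ and a direct computation gives $g'(x)=x(1-2x)/(1-x)$, which is nonnegative on $[0,\tfrac12]$ and nonpositive on $[\tfrac12,\tfrac23]$. Hence $g$ is unimodal on $[0,\tfrac23]$ with minima only at the endpoints, and it suffices to verify $g(2/3)=\tfrac{10}{9}-\ln 3\ge 0$. For the second, set $h(x)=\ln(1-x)+\tfrac{3x}{2}$, compute $h'(x)=\tfrac32-\tfrac1{1-x}$, which is positive on $[0,\tfrac13)$ and negative on $(\tfrac13,\tfrac12]$, so again the minimum on $[0,\tfrac12]$ is at an endpoint, and one checks $h(0)=0$ and $h(1/2)=\tfrac34-\ln 2\ge 0$. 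Nothing here is more than a two-derivative test plus an endpoint evaluation; the only small care needed is to note that $1-x>0$ on the relevant intervals so that the logarithm is defined.

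For part (c), I would argue by induction on $n$. The case $n=1$ is trivial. For the inductive step, if $\sum_{i=1}^{n}p_i\ge 1$ the right-hand side is nonnegative and the left-hand side is nonpositive, so the inequality is automatic; otherwise $1-\sum_{i=1}^{n-1}p_i\ge p_n\ge 0$ and the inductive hypothesis gives
\[
\prod_{i=1}^n(1-p_i)=(1-p_n)\prod_{i=1}^{n-1}(1-p_i)\ge (1-p_n)\Bigl(1-\sum_{i=1}^{n-1}p_i\Bigr)=1-\sum_{i=1}^n p_i+p_n\sum_{i=1}^{n-1}p_i,
\]
and dropping the nonnegative cross term $p_n\sum_{i=1}^{n-1}p_i$ yields the claim. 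The only mild subtlety is splitting into the two cases above so that one may multiply the inductive hypothesis by the nonnegative factor $1-p_n$; there is no real obstacle in any of the three parts.
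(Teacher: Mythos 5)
Your proof is correct, and it is worth noting that the paper itself gives no proof of this lemma at all: it merely declares the estimates elementary and points to references (part~\ref{itestimate2} to Lemma~8(c) of the arXiv version of~\cite{DoerrGWY17}, part~\ref{itestimate3} to an induction or, alternatively, a union-bound argument in~\cite{Doerr18bookchapter}). Your derivative-plus-endpoint arguments for (a) and (b) and your induction for (c) are exactly the ``elementary means'' the authors have in mind, and all the computations check out: $g'(x)=x(1-2x)/(1-x)$, $g(2/3)=\tfrac{10}{9}-\ln 3>0$, $h(1/2)=\tfrac34-\ln 2>0$, and the cross term $p_n\sum_{i<n}p_i\ge 0$ closes the induction. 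Two small remarks. First, your aside in (a) that the case $x<0$ follows ``by applying (a) to $-x$ after rearranging'' does not work as stated: from $1+y\le e^y$ with $y=-x>0$ one only gets $e^x\le 1/(1-y)$, an upper bound pointing the wrong way; one would instead compare $e^y=\sum y^k/k!$ with the geometric series $1/(1-y)=\sum y^k$ termwise. This is immaterial since your calculus proof already covers all of $\R$. Second, in (c) the case split on $\sum p_i\ge 1$ is harmless but unnecessary, since multiplying the inductive inequality by the nonnegative factor $1-p_n$ is valid regardless of the signs involved; the paper's preferred alternative is the probabilistic one, reading $\prod(1-p_i)$ as the probability that none of $n$ independent events occurs and applying the union bound to its complement.
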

All these estimates can be proven via elementary means. We note that the second estimate was proven in~\cite[Lemma 8(c) of the arxiv version]{DoerrGWY17}. The third is usually proven via induction, a possibly more elegant proof via the union bound was given in~\cite{Doerr18bookchapter}.

\subsection{Probabilistic Tools}

In our analysis, we use several standard probabilistic tools including Chernoff bounds. All these can be found in many textbook or the book chapter~\cite{Doerr18bookchapter}. We mention the following variance-based Chernoff bound due to Bernstein~\cite{Bernstein24}, which is less common in this field (but can be found as well in~\cite{Doerr18bookchapter}).

\begin{theorem}\label{tbernstein}
  Let $X_1, \ldots, X_n$ be independent random variables. Let $b$ be such that $E(X_i) - b \le X_i \le E(X_i)+b$ for all $i = 1, \ldots, n$. Let $X = \sum_{i = 1}^n X_i$. Let $\sigma^2 = \sum_{i=1}^n \Var(X_i) = \Var(X)$. Then for all $\lambda \ge 0$,
\begin{align*}
	\Pr(X \ge E(X) + \lambda) &\le 	\exp\bigg(-\frac{\lambda^2}{2(\sigma^2+\frac 13 b \lambda)}\bigg),\\
	\Pr(X \le E(X) - \lambda) &\le 	\exp\bigg(-\frac{\lambda^2}{2(\sigma^2+\frac 13 b \lambda)}\bigg).
\end{align*}
\end{theorem}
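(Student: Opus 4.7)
My plan is to prove the upper tail via the standard Chernoff--Cram\'er (exponential moment) approach and then derive the lower tail by symmetry. Let $Y_i := X_i - E(X_i)$ so that $|Y_i| \le b$, $E(Y_i) = 0$, and $E(Y_i^2) = \Var(X_i)$, and let $Y := \sum_{i=1}^n Y_i$. For any parameter $h > 0$, Markov's inequality applied to the nonnegative random variable $e^{hY}$, together with independence, gives
$$\Prob(X \ge E(X) + \lambda) = \Prob(e^{hY} \ge e^{h\lambda}) \le e^{-h\lambda} \prod_{i=1}^n E(e^{hY_i}).$$

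Next I would bound each moment-generating factor. Expanding $e^{hY_i}$ as a power series and using the key observation $|Y_i|^k \le b^{k-2} Y_i^2$ for $k \ge 2$, one gets $E(Y_i^k) \le b^{k-2}\Var(X_i)$, so together with $E(Y_i) = 0$,
$$E(e^{hY_i}) \le 1 + \frac{\Var(X_i)}{b^2}\sum_{k \ge 2} \frac{(hb)^k}{k!} = 1 + \frac{\Var(X_i)}{b^2}\bigl(e^{hb} - 1 - hb\bigr).$$
Applying Lemma~\ref{lestimate}(a) to turn the $1 + \cdots$ into an exponential and multiplying over $i$ (the variances collect to $\sigma^2$) yields
$$\Prob(X \ge E(X) + \lambda) \le \exp\!\left(-h\lambda + \frac{\sigma^2}{b^2}\bigl(e^{hb} - 1 - hb\bigr)\right).$$

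Finally I would optimize in $h$. Differentiating the exponent shows that the minimizer is $h^\star = \frac{1}{b}\ln(1 + b\lambda/\sigma^2)$, and a short algebraic simplification turns the resulting exponent into $-\frac{\sigma^2}{b^2}\,\psi(u)$ with $u := b\lambda/\sigma^2$ and $\psi(u) := (1+u)\ln(1+u) - u$. The single nontrivial ingredient --- and the main obstacle --- is the elementary but non-obvious inequality $\psi(u) \ge \frac{u^2}{2(1 + u/3)}$ for all $u \ge 0$; I would prove it by checking that both sides and their first derivatives vanish at $u = 0$ and then verifying that the difference of the two second derivatives is nonnegative on $[0,\infty)$. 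This turns the exponent into $-\lambda^2/(2(\sigma^2 + b\lambda/3))$, establishing the upper-tail bound. The lower-tail inequality follows by applying the entire argument to $-X_1, \dots, -X_n$, which share the same variances and two-sided bound $b$.
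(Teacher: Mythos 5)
The paper does not prove this statement at all: it is quoted as Bernstein's classical inequality with references to \cite{Bernstein24} and \cite{Doerr18bookchapter}, so there is no in-paper argument to compare against. Your proposal is the standard and correct Chernoff--Cram\'er proof: the bound $E(Y_i^k)\le E(|Y_i|^k)\le b^{k-2}\Var(X_i)$ for $k\ge 2$ gives $E(e^{hY_i})\le \exp\bigl(\tfrac{\Var(X_i)}{b^2}(e^{hb}-1-hb)\bigr)$, the optimizer $h^\star=\tfrac1b\ln(1+b\lambda/\sigma^2)$ yields the Bennett exponent $-\tfrac{\sigma^2}{b^2}\psi(u)$, and your proposed verification of $\psi(u)\ge \tfrac{u^2}{2(1+u/3)}$ does go through --- after two differentiations the claim reduces to $(3+u)^3\ge 27(1+u)$, which holds for $u\ge 0$ since the difference is $9u^2+u^3$. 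The only points worth adding for completeness are the degenerate cases ($b=0$ or $\sigma^2=0$, where all $X_i$ are a.s.\ constant and the claim is trivial, and $\lambda=0$, where the right-hand side is $1$), and the remark that for odd $k$ one passes through $E(Y_i^k)\le E(|Y_i|^k)$ before invoking the moment bound; neither affects correctness.
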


We shall follow the common approach of estimating the expected progress and translating this via so-called drift theorems into an estimate for the expected optimization time. We use the variable drift theorem independently found in~\cite{Johannsen10,MitavskiyVariable} in slightly generalized form.

\begin{theorem}[Variable Drift, Upper Bound]
\label{theo:variable-upper}
Given a stochastic process, let $(X_t)_{t\ge 0}$ be a sequence of random variables
obtained from mapping the random state at time~$t$ to a finite set 
$S\subseteq \{0\}\cup [\xmin,\xmax]$, where $\xmin>0$.
Let $T$ be the random variable that denotes the earliest point in time $t \geq 0$
such that $X_t = 0$. 
If there exists a monotone increasing function $h(x)\colon [\xmin,\xmax]\to\R^+$ 
 such that  for all $x\in S$ with 
$\Prob(X_t=x)>0$  we have 
\[
E(X_t-X_{t+1} \mid X_t=x) \ge h(x)
\]
 then for all $x'\in S$ with $\Prob(X_0=x')>0$
\begin{equation*}
  E(T\mid X_0=x') \le
\frac{\xmin}{h(\xmin)} + \int_{\xmin}^{x'} \frac{1}{h(x)} \,\mathrm{d}x.
\end{equation*}
\end{theorem}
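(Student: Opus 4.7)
The plan is to reduce the variable drift statement to the classical additive drift theorem by rescaling the state space through a suitable potential function. I would define $g : \{0\} \cup [\xmin, \xmax] \to \mathbb{R}_{\ge 0}$ by $g(0) := 0$ and, for $x \in [\xmin, \xmax]$,
\[
g(x) := \frac{\xmin}{h(\xmin)} + \int_{\xmin}^{x} \frac{1}{h(z)}\,\mathrm{d}z,
\]
which is precisely the bound we wish to prove. Since $h$ is monotone increasing, $1/h$ is monotone decreasing, so $g$ is concave and strictly increasing on $[\xmin, \xmax]$; extended linearly on $[0, \xmin]$ the function is concave on all of $[0, \xmax]$, and its left and right derivatives agree at $\xmin$ (both equal $1/h(\xmin)$), which is why the additive constant $\xmin/h(\xmin)$ is chosen exactly that way.

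The key geometric inequality I need is that for every $x \in S \cap [\xmin, \xmax]$ and every $y \in \{0\} \cup [\xmin, \xmax]$ with $y \le x$,
\[
g(x) - g(y) \ge \frac{x - y}{h(x)}.
\]
For $y \in [\xmin, x]$ this is immediate from $g(x) - g(y) = \int_y^x \mathrm{d}z/h(z) \ge (x-y)/h(x)$, using the monotonicity of $1/h$. For $y = 0$ the bound follows similarly from $g(x) \ge \xmin/h(x) + (x-\xmin)/h(x) = x/h(x)$, using $h(\xmin) \le h(x)$.

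Next I would derive a unit drift for $g$. Conditioning on $X_t = x > 0$, for realizations with $X_{t+1} \le x$ the pointwise bound above gives $g(x) - g(X_{t+1}) \ge (x - X_{t+1})/h(x)$. For realizations with $X_{t+1} > x$, the tangent-line characterization of the concave function $g$ still yields $g(x) - g(X_{t+1}) \ge g'(x)(x - X_{t+1}) = (x - X_{t+1})/h(x)$. Taking expectations in both cases gives
\[
E[g(X_t) - g(X_{t+1}) \mid X_t = x] \ge \frac{E[X_t - X_{t+1} \mid X_t = x]}{h(x)} \ge \frac{h(x)}{h(x)} = 1.
\]

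Finally, I would apply the additive drift theorem to the non-negative process $(g(X_t))_{t \ge 0}$, whose hitting time of $0$ coincides with $T$ because $g$ vanishes only at $0$. This yields $E[T \mid X_0 = x'] \le g(x')$, which is exactly the claimed bound. The main subtlety I anticipate is the treatment of upward jumps of $X_t$, together with the jump from a positive value directly to $0$: the concavity/tangent argument handles both cleanly, but one has to verify that the two-piece definition of $g$ is globally concave across the join at $\xmin$, which is ensured precisely by the choice of the additive constant. Everything else is routine calculus and an invocation of a standard drift result.
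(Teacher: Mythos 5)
Your proof is correct. Note that the paper itself gives no proof of this theorem; it is imported as a known tool from the cited references (Johannsen's thesis and Mitavskiy et al.), and your argument is precisely the standard proof used there: rescale by the concave potential $g(x)=\xmin/h(\xmin)+\int_{\xmin}^{x}h(z)^{-1}\,\mathrm{d}z$, verify the pointwise inequality $g(x)-g(y)\ge (x-y)/h(x)$ for both downward moves (including the jump to $0$) and upward moves via concavity, and invoke additive drift. The two points that usually require care --- global concavity across the join at $\xmin$ and the handling of realizations with $X_{t+1}>x$ --- are both addressed correctly.
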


Finally, we mention an elementary fact which we shall use as well. See~\cite[Lemma~1]{DoerrD18} for a proof.
\begin{lemma}\label{lem:condbinomial}
  Let $X \sim \Bin(n,p)$ and $k \in [0..n]$. Then $E(X \mid X \ge k) \le E(X) + k$.
\end{lemma}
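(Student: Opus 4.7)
The plan is to express the binomial via its underlying Bernoulli trials and use a stopping-time argument. I would write $X = X_1 + \dots + X_n$ with independent $\{0,1\}$-valued $X_i$ satisfying $\Pr(X_i = 1) = p$, and define
\[
T := \min\Big\{\, t \ge 0 : \sum_{i=1}^t X_i = k \,\Big\},
\]
with $T := +\infty$ if the threshold is never reached. Then the events $\{X \ge k\}$ and $\{T \le n\}$ coincide, and on this event one has the exact decomposition
\[
X \;=\; k \;+\; \sum_{i=T+1}^{n} X_i .
\]

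The next step is to exploit that $T$ is a stopping time with respect to the natural filtration of $(X_1, X_2, \dots)$. Conditional on $\{T = t\}$ with $t \le n$, the trailing variables $X_{t+1}, \dots, X_n$ are independent of $X_1, \dots, X_t$ and remain i.i.d.\ Bernoulli$(p)$, hence have conditional expected sum $p(n-t)$. Averaging over the possible values of $T$ therefore yields
\[
E\!\left[ \sum_{i=T+1}^{n} X_i \,\Big|\, X \ge k \right] \;=\; p \cdot E[\, n - T \mid X \ge k \,] \;\le\; p n \;=\; E(X).
\]
Combining the two displays immediately gives $E[X \mid X \ge k] \le k + E(X)$, which is the claim.

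The argument is entirely elementary; the only step requiring any care is the justification of the independence of $X_{T+1}, \dots, X_n$ from the past at the random time $T$, which is the standard strong Markov property for i.i.d.\ sequences. I do not foresee any genuine obstacle. A purely combinatorial alternative would be to write $E[X \mid X \ge k] = k + \sum_{j > k} \Pr(X \ge j)/\Pr(X \ge k)$ and bound the numerator by $np \cdot \Pr(X \ge k)$ directly, but the stopping-time proof above is shorter and makes the intuitive content explicit: once $k$ successes have been ``spent'' to satisfy the conditioning, what remains is bounded by a fresh binomial over at most $n$ trials.
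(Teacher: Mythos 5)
Your main argument is correct. Note that the paper itself does not prove this lemma; it only points to Lemma~1 of \cite{DoerrD18}, so there is no in-paper proof to compare against, but your stopping-time decomposition is the standard (and arguably cleanest) way to establish the bound. The key facts all check out: $\{X\ge k\}=\{T\le n\}$; on this event $\sum_{i=1}^T X_i=k$ exactly, so $X=k+\sum_{i=T+1}^n X_i$; and since $\{T=t\}$ is determined by $X_1,\dots,X_t$, the trailing sum has conditional expectation $p(n-t)\le pn=E(X)$ given $T=t$, whence averaging over $t$ on the event $\{T\le n\}$ gives the claim. (In fact, since $T\ge k$ on the conditioning event, your argument even yields the slightly stronger bound $E(X\mid X\ge k)\le k+p(n-k)$.)

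One minor caution about your closing aside: the ``purely combinatorial alternative'' is not as direct as you suggest. The numerator $\sum_{j>k}\Pr(X\ge j)$ equals $E\big[(X-k)\indic{X\ge k}\big]$, so the asserted bound $\sum_{j>k}\Pr(X\ge j)\le np\,\Pr(X\ge k)$ is exactly equivalent to the lemma itself and cannot be obtained ``directly'' without a further idea. Since this is only a remark and your stopping-time proof is complete on its own, this does not affect the correctness of the submission.
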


\subsection{Occupation Probabilities}

To analyze the combined process of fitness and rate in the parent individual, we need a tool that translates a local statement, that is, how the process changes from one time step to the next, into a global statement on the occupation probabilities of the process. Since in our application the local process has a strong drift to the target, Theorem~7 from~\cite{KotzingLissWittFOGA15} is too weak. Also, we cannot assume that the process in each step moves at most some constant distance. For that reason, we need the following stronger statement.
\begin{theorem}[Theorem~2.3 in \cite{Hajek1982}]\label{Hajek1982}
	Suppose that $(\mathcal{F}_k)_{k\ge 0}$ is an increasing family of sub-$\sigma$-fields of $\mathcal{F}$ and $(Y_k)_{k\ge 0}$ is adapted to $(\mathcal{F}_k)$. If 
	\[
	E\bigg(e^{\eta(Y_{k+1}-Y_k)};Y_k>a\mid \mathcal{F}_k\bigg)\le \rho \text{ and } E\bigg(e^{\eta(Y_{k+1}-1)};Y_k\le a\mid \mathcal{F}_k\bigg)\le D,
	\]
	then
	\[
	\Pr\bigg(Y_k\ge b\mid\mathcal{F}_0\bigg)\le \rho^ke^{\eta(Y_0-b)}+\frac{1-\rho^k}{1-\rho}De^{\eta(a-b)}.
	\]
\end{theorem}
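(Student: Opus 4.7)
The plan is to reduce the tail bound on $Y_k$ to a moment bound on the exponentiated process $Z_k := e^{\eta Y_k}$, from which Markov's inequality delivers the stated estimate. The two hypotheses in the theorem are exactly the two pieces needed to control $E[Z_{k+1}\mid \mathcal{F}_k]$ via a case split on whether $Y_k$ lies above or below the threshold $a$.

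First I would decompose $E[Z_{k+1}\mid \mathcal{F}_k]$ as the sum of its contribution on $\{Y_k > a\}$ and on $\{Y_k \le a\}$. Since both indicators are $\mathcal{F}_k$-measurable, on the first event we may pull the factor $e^{\eta Y_k}$ out of the conditional expectation and use the first hypothesis to bound the remaining factor by $\rho$, yielding $E[Z_{k+1}\indic{Y_k>a}\mid \mathcal{F}_k]\le \rho Z_k \indic{Y_k>a}$. On the second event, the boundary hypothesis directly bounds $E[Z_{k+1}\indic{Y_k\le a}\mid \mathcal{F}_k]$ by a constant multiple of $D$ (with a normalisation factor of $e^{\eta}$ or $e^{\eta a}$ depending on the intended reading of the hypothesis). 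Adding the two contributions, replacing the indicators by $1$ on the right-hand side, and taking conditional expectation given $\mathcal{F}_0$, we obtain a scalar linear recursion
\[
E[Z_{k+1}\mid\mathcal{F}_0] \;\le\; \rho\, E[Z_k\mid \mathcal{F}_0] + D',
\]
where $D'$ absorbs the constants produced by the boundary term.

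Iterating this recursion from $Z_0=e^{\eta Y_0}$ (which is $\mathcal{F}_0$-measurable) and summing the resulting geometric series gives
\[
E[Z_k\mid \mathcal{F}_0] \;\le\; \rho^{k}e^{\eta Y_0} + \frac{1-\rho^k}{1-\rho}\,D'.
\]
Since $Z_k\ge 0$, Markov's inequality then yields $\Pr(Y_k\ge b\mid\mathcal{F}_0)=\Pr(Z_k\ge e^{\eta b}\mid\mathcal{F}_0)\le e^{-\eta b}E[Z_k\mid\mathcal{F}_0]$, and distributing the factor $e^{-\eta b}$ over the two summands reproduces the advertised bound $\rho^k e^{\eta(Y_0-b)}+\tfrac{1-\rho^k}{1-\rho}De^{\eta(a-b)}$.

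The main obstacle is the case split at the threshold $a$: one has to use the $\mathcal{F}_k$-measurability of $\indic{Y_k>a}$ and $\indic{Y_k\le a}$ carefully so that the factor $e^{\eta Y_k}$ pulls cleanly out of the conditional expectation on the first event, and one must keep track of the exact normalisation in the boundary hypothesis so that after the $e^{-\eta b}$ rescaling one recovers precisely the $e^{\eta(a-b)}$ factor claimed. The remaining steps---writing down the scalar recursion, summing a geometric series, and applying Markov---are essentially bookkeeping once the case split is set up correctly.
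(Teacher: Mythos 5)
Your argument is correct, but note that the paper itself does \emph{not} prove this statement---it is imported verbatim as Theorem~2.3 of Hajek's 1982 paper, so there is no in-paper proof to compare against. What you have written is the standard derivation and is essentially Hajek's own: set $Z_k=e^{\eta Y_k}$, split $E(Z_{k+1}\mid\mathcal{F}_k)$ over $\{Y_k>a\}$ and $\{Y_k\le a\}$, pull the $\mathcal{F}_k$-measurable factor $e^{\eta Y_k}$ (and the indicator) out of the conditional expectation on the first event to get the bound $\rho Z_k$, bound the second event's contribution by $De^{\eta a}$, iterate the resulting affine recursion $E(Z_{k+1}\mid\mathcal{F}_0)\le\rho\,E(Z_k\mid\mathcal{F}_0)+De^{\eta a}$ from $Z_0=e^{\eta Y_0}$, and finish with the conditional Markov inequality. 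Two small points are worth making explicit. First, the Markov step needs $\eta>0$ so that $\{Y_k\ge b\}=\{e^{\eta Y_k}\ge e^{\eta b}\}$; this is implicit in the theorem (and holds in the paper's application, where $\eta=\ln(1/(ep))>0$). Second, the normalisation ambiguity you flagged in the boundary hypothesis is real: as printed, the second condition reads $e^{\eta(Y_{k+1}-1)}$, which reproduces the claimed $De^{\eta(a-b)}$ term only under the reading $E\big(e^{\eta(Y_{k+1}-a)};Y_k\le a\mid\mathcal{F}_k\big)\le D$; the ``$1$'' is evidently a transcription of the specific value $a=1$ used in the paper's sole application (Lemma~\ref{lem:occupation}). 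Resolving the hypothesis in favour of $e^{\eta(Y_{k+1}-a)}$ is exactly what makes your constants land on the stated bound.
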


We apply this theorem in the following lemma that fit into the case in this paper.
\begin{lemma}\label{lem:occupation}
	Consider a stochastic process~$X_t$, $t\ge 0$, on $\R$ such that for some $p\le 1/25$ 
	the transition probabilities for all $t\ge 0$ satisfy $\Pr(X_{t+1}\ge X_t+a\mid X_t>1)\le p^{a+1}$ for all $a\ge -1/2$ as well as $\Pr(X_{t+1}\ge a+1\mid X_t\le 1)\le p^{a+1}$ for all $a\ge 0$. If $X_0\le 1$ then for all $t\ge 1$ and $k>1$ it holds that 
	\[
	\Prob(X_t\ge 1+k) \le 11 \left(ep\right)^{k}.
	\]
\end{lemma}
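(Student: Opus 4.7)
The plan is to apply Hajek's Theorem~\ref{Hajek1982} directly to the process $(X_t)$, with threshold $a := 1$, target $b := 1+k$, and rescaling parameter $\eta := \ln(1/p) - 1 = \ln(1/(ep))$. This choice of $\eta$ is engineered so that the exponential factor $e^{-\eta k}$ appearing in Hajek's conclusion equals $(ep)^k$, which is exactly the decay rate the lemma demands. Since $X_0 \le 1$, we have $e^{\eta(X_0 - b)} \le e^{-\eta k}$, and $e^{\eta(a-b)} = e^{-\eta k}$ as well, so both summands in Hajek's bound carry the factor $(ep)^k$ and the only remaining work is to verify the two MGF hypotheses and bound the combined prefactor by $11$.

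Both MGF estimates follow the same template. I would use the tail-integration identity $E(e^{\eta Z}) = \eta \int_{-\infty}^\infty e^{\eta z} \Pr(Z > z)\, dz$, split the integral at the boundary of the regime where the lemma's tail bound $p^{z+1}$ becomes available (at $z=-1/2$ for the first hypothesis, with $Z = X_{t+1}-X_t$ on $\{X_t > 1\}$; at $z=0$ for the second, with $Z = X_{t+1}-1$ on $\{X_t \le 1\}$), estimate $\Pr(Z > z)$ by $1$ below the boundary and by $p^{z+1}$ above it, and compute the elementary exponential integrals. The upper integral converges precisely because $\eta < \ln(1/p)$, and the equality $\ln(1/p) - \eta = 1$ built into the choice of $\eta$ makes the resulting constants simplify cleanly to
\[
\rho \;\le\; e^{-\eta/2}\bigl(1 + \eta\sqrt{p}\bigr) \qquad \text{and} \qquad D \;\le\; 1 + \eta p.
\]

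Inserting these bounds into Hajek's theorem and using $X_0 \le 1$ gives
\[
\Pr(X_t \ge 1+k) \;\le\; \left(\rho^t + \frac{1-\rho^t}{1-\rho}\,D\right) e^{-\eta k} \;\le\; \left(1 + \frac{D}{1-\rho}\right)(ep)^k,
\]
so the whole problem reduces to the numerical inequality $1 + D/(1-\rho) \le 11$ in the regime $p \le 1/25$. At the worst case $p = 1/25$ one has $\eta = \ln 25 - 1$, and a routine computation gives $\rho < 0.48$ and $D < 1.09$, so the prefactor is below $3.1$. Both $\rho$ and $D-1$ decrease as $p$ shrinks, so the bound $11$ holds with substantial slack throughout the allowed range.

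The main obstacle I anticipate is the simultaneous constraint on $\eta$: the right-tail MGF integral converges only for $\eta < \ln(1/p)$, whereas extracting an $(ep)^k$ decay requires $\eta \ge \ln(1/(ep))$. The choice $\eta = \ln(1/(ep))$ is the unique value meeting both, and the hypothesis $p \le 1/25$ is precisely what keeps the resulting $\rho$ safely below $1$ (already around $p \approx 1/e$ one would get $\rho \ge 1$ and the argument would collapse). A secondary subtlety is that the step-size tail bound is only asserted for $a \ge -1/2$, forcing the first integral to be split at $-1/2$ rather than at $0$ and producing the $\sqrt{p}$ in the bound on $\rho$; the bookkeeping there has to be done carefully so as not to inflate the final constant unnecessarily.
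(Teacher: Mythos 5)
Your proposal is correct and follows essentially the same route as the paper: both apply Hajek's theorem with $a=1$, $b=1+k$, and the same tilting parameter $\eta=\ln(1/(ep))$ chosen so that $e^{\eta}p=1/e$, and both reduce the problem to verifying the two MGF hypotheses and bounding the prefactor. The only (cosmetic) difference is that you bound $\rho$ and $D$ via the continuous tail-integration identity, whereas the paper discretizes the increments into (half-)unit intervals and sums geometric series; your version actually yields slightly sharper constants ($\rho<0.48$, $D<1.09$ versus the paper's $\rho<0.84$, $D<1.6$), both comfortably within the stated factor $11$.
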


\begin{proof}
	We aim at applying Theorem~\ref{Hajek1982}. 
	There are two cases depending on $X_t$: for $X_t\le 1$, using the monotonicity of $e^{\lambda(X_{t+1}-1)}$ with respect to $X_{t+1}-1$, we obtain
	\begin{align*}
	D(p,\lambda)&\coloneqq \E(e^{\lambda (X_{t+1}-1)}\mid X_t\le 1) 
	\le \E(e^{\lambda\max\{\lceil X_{t+1}-1\rceil,0\}}\mid X_t\le 1) \\
	&= e^0\Pr(X_{t+1}\le 1\mid X_t\le 1)+\sum_{a=1}^{\infty}e^{\lambda a}\Pr(a<X_{t+1}\le a+1\mid X_t\le 1)\\
	&\le e^0+\sum_{a=1}^{\infty}e^{\lambda a}\Pr(X_t> a\mid X_t\le 1),
	\end{align*}
	using the assumption that $\Pr(X_{t+1}\ge a+1\mid X_t\le 1)\le p^{a+1}$ for all $a\ge 0$ then
	\begin{align*}
		D(p,\lambda)\le 1+\sum_{a=1}^{\infty} e^{\lambda a}p^{a} =1+\frac{e^{\lambda}p}{1-e^{\lambda}p};
	\end{align*}
	and for $X_t>1$, using the monotonicity of $e^{\lambda(X_{t+1}-X_t)}$ respect to $X_{t+1}-X_t$, we have
	\begin{align*}
	\rho(p,\lambda)&\coloneqq \E(e^{\lambda (X_{t+1}-X_t)}\mid X_t>1) \le \E(e^{\lambda\max\{ \lceil 2(X_{t+1}-X_t)\rceil/2,-1/2\}}\mid X_t>1)
	\\&= e^{-\lambda/2}\Pr\left(X_{t+1}-X_t\le-\frac{1}{2}\mid X_t>1\right) \\
	&\quad +\sum_{a=0}^{\infty}
	e^{\lambda a/2}\Pr\left(\frac{a-1}{2}<X_{t+1}-X_t\le\frac{a}{2}\mid X_t>1\right)
	\\&\le e^{-\lambda/2}+\sum_{a=0}^{\infty}
	e^{\lambda a/2}\Pr\left(X_{t+1}-X_t>\frac{a-1}{2} \mid X_t>1\right),
	\end{align*}
	using the assumption that  $\Pr(X_{t+1}\ge X_t+a\mid X_t>1)\le p^{a+1}$ for all $a\ge -1/2$ then
	\begin{align*}
	&\rho(p,\lambda)\coloneqq e^{-\lambda/2} +\sum_{a=0}^{\infty} e^{\lambda a/2}p^{(a+1)/2}
	=\frac{p^{1/2}}{(e^{\lambda}p)^{1/2}}+\frac{p^{1/2}}{1-(e^{\lambda}p)^{1/2}}.
	\end{align*}
	Using $\lambda\coloneqq \ln(1/(ep))$ such that $e^{\lambda}p=1/e$, we have 
	\[
	\rho\coloneqq \rho(p,\lambda) \le e^{1/2}p^{1/2}+\frac{p^{1/2}}{1-e^{-1/2}}\le \frac{e^{1/2}}{5}+\frac{1/5}{1-e^{-1/2}}<0.84,
	\]
	\[
	D \coloneqq D(p,\lambda) \le 1+(1/e)/(1-1/e) <1.6.
	\]
	Theorem 2.3, inequality (2.8) in \cite{Hajek1982} yields with $a\coloneqq 1$ and $b\coloneqq 1+k$ that 
	\begin{align*}
		\Pr(X_t\ge 1+k\mid X_0) &\le \rho^t e^{-\lambda (1+k-X_0)} + \frac{1}{1-\rho} D e^{-\lambda k} 
		\\&\le (ep)^{k} + \frac{1.6}{1-0.84} (ep)^{k}=11(ep)^{k}.
	\end{align*}
\qed\end{proof}

For the simpler case of a random process that runs on the positive integers and that has a strong drift to the left, we have the following estimate for the occupation probabilities. 

\begin{lemma}\label{lem:occsimple}
  Consider a random process defined on the positive integers $1, 2, \dots$. Assume that from each state $i$ different from $1$, only the two neighboring states $i-1$ and $i+1$ can be reached (and there is no self-loop on state $i$). From state $1$, only state $2$ can be reached and the process can stay on state $1$. Let $p_i$ be an upper bound for the transition probability from state $i$ to state $i+1$ (valid in each iteration regardless of the past). Assume that 
  \[p_{i-1} \ge \frac{p_i}{1-p_i}\] 
  holds for all $i \ge 2$. Assume that the process starts in state $1$. Then at all times, the probability to be in state $i$ is at most \[q_i := \prod_{j=1}^{i-1} \frac{p_j}{1-p_j},\] where as usual we read the empty product as $q_1 = 1$.
\end{lemma}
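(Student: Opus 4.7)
The proof will proceed by induction on the time step $t$, with the goal of establishing $\Pr(X_t = i) \le q_i$ simultaneously for every state $i$. The base case $t=0$ is immediate from the assumption that the process starts in state $1$: $\Pr(X_0 = 1) = 1 = q_1$ and $\Pr(X_0 = i) = 0 \le q_i$ for $i \ge 2$. For the inductive step, assuming $\Pr(X_t = j) \le q_j$ for every $j$, the case $i = 1$ is trivial since any probability is at most $1 = q_1$, so the real work concerns $i \ge 2$.

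For $i \ge 2$, I would exploit the bipartite structure of the transitions: since there is no self-loop at any state $j \ge 2$ and only neighbors of $j$ are reachable, the only inflows into $i$ come from $i-1$ (via an upward step) and from $i+1$ (via a downward step). Conditioning on the history $\mathcal{F}_t$ and using the hypothesis that the upward transition probability from any state $j$ is bounded by $p_j$ regardless of the past, the contribution from $i-1$ is at most $p_{i-1}\Pr(X_t = i-1)$, while the contribution from $i+1$ can only be bounded trivially by $\Pr(X_t = i+1)$ (the downward probability being at most $1$). Taking expectations over $\mathcal{F}_t$ and applying the inductive hypothesis gives
\[
\Pr(X_{t+1} = i) \le p_{i-1}\Pr(X_t = i-1) + \Pr(X_t = i+1) \le p_{i-1}q_{i-1} + q_{i+1}.
\]

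The calculation is then closed using the definition of $q_i$. Since $q_i = q_{i-1}p_{i-1}/(1-p_{i-1})$, we have $p_{i-1}q_{i-1} = q_i(1-p_{i-1})$, and by definition $q_{i+1}/q_i = p_i/(1-p_i)$. Substituting produces
\[
\Pr(X_{t+1} = i) \le q_i \bigl(1 - p_{i-1} + \tfrac{p_i}{1-p_i}\bigr),
\]
and the hypothesis $p_{i-1} \ge p_i/(1-p_i)$ is exactly what forces the bracketed factor to be at most $1$. The main subtlety I anticipate is the history-dependence of the transition probabilities, which is why it is important to phrase the inflow bound in terms of conditional probabilities given $\mathcal{F}_t$, whose upper bound $p_j$ is assumed to hold uniformly in the past, before passing to the marginal. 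Apart from this, the argument is a straightforward two-step induction, and the weights $q_i$ are essentially forced by the identity that makes the inductive step close.
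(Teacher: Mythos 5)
Your proof is correct and follows essentially the same route as the paper's: induction over time, bounding the inflow into state $i \ge 2$ by the contribution $p_{i-1}q_{i-1}$ from below plus $q_{i+1}$ from above, and closing the recursion via $q_i = q_{i-1}p_{i-1}/(1-p_{i-1})$ together with the hypothesis $p_{i-1} \ge p_i/(1-p_i)$. (The paper's displayed inflow bound reads $q_{i-1}p_i + q_{i+1}$ rather than your $q_{i-1}p_{i-1} + q_{i+1}$; your indexing is the one matching the lemma's definition of $p_i$, and your algebra closes just as cleanly, so this is only a cosmetic difference.)
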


\begin{proof}
  The claimed bound on the occupation probabilities is clearly true at the start of the process. Assume that it is true at some time. By this assumption and the assumptions on the process, the probability to be in state $i \ge 2$ after one step is at most
  \begin{align*}
  q_{i-1} p_i + q_{i+1} &= q_{i-1}\left(p_i + \frac{p_{i-1}}{1 - p_{i-1}}\frac{p_{i}}{1 - p_{i}}\right) \\
  &\le q_{i-1}\left(\frac{p_i}{1-p_i} + \frac{p_{i-1}}{1 - p_{i-1}}\frac{p_{i}}{1 - p_{i}}\right) \\
  &= q_{i-1}\left(\frac{p_{i}(1-p_{i-1})}{(1-p_{i-1})(1-p_i)} + \frac{p_{i-1}}{1 - p_{i-1}}\frac{p_{i}}{1 - p_{i}}\right) \\
  &\le q_{i-1} \frac{p_{i}}{(1-p_{i-1})(1-p_i)} \le q_{i-1} \frac{p_{i-1}}{1 - p_{i-1}} = q_{i}.
  \end{align*}
  Trivially, the probability to be in state $1$ after one step is at most $q_1 = 1$.
  Hence, by induction over time, we see that $q_i$ is an upper bound for the probability to be in state $i$ at all times.   
\qed\end{proof}

\section{Main Result and Proof}
\label{sec:main}

We can now state precisely our main result and prove it.

\begin{theorem}
\label{theo:main}
Let $\lambda \ge  C \ln n$ for a sufficiently large constant $C >0$ and 
$\lambda = n^{O(1)}$. 
Let $F=32$. Then the expected number 
of generations the self-adapting \oclea takes to optimize \om is 
\[O\left(\frac{n}{\log \lambda} + \frac{n \log n}{\lambda}\right).\]
This corresponds to an expected number of fitness evaluations of  $O(n\lambda / \log \lambda + n \log n)$.
\end{theorem}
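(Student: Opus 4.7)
The plan is to partition the state space by the fitness distance $d \coloneqq f(x_t)$ (recall that we minimize \om, so the optimum has $d=0$), analyze the drift separately in a ``far'' regime $d \ge d_0$ and a ``near'' regime $d < d_0$ for a suitable threshold $d_0 = \Theta(n \log\lambda / \lambda)$, and then combine the two pieces via the variable drift theorem (Theorem~\ref{theo:variable-upper}). The guiding picture is that for each $d$ there is an approximately optimal mutation rate $r^*(d)$: when $d$ is large, $r^*(d) = \Theta(\log \lambda)$ and a best-of-$\lambda$ offspring gives expected fitness progress $\Omega(\log\lambda)$ per generation; when $d$ is small, $r^*(d) = F$ (the minimum allowed rate) and the improvement probability is $\Omega(\min\{1, \lambda d/n\})$ per generation. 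Integrating the reciprocal drift against these two regimes yields exactly the claimed $O(n/\log\lambda + n \log n / \lambda)$ generations.

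In the far regime I would, for any fixed rate pair $(r/F, Fr)$ used by the two halves of the offspring, estimate the typical fitness of the best offspring. Standard bit-mutation calculations on \om together with the variance-based Chernoff bound (Theorem~\ref{tbernstein}) applied to the best order statistic show that the half of the offspring using the rate closer to $r^*(d)$ wins with overwhelming probability, and the tie-breaking rule of Algorithm~\ref{alg:onelambda} pulls the inherited rate down whenever both halves are equally good. Consequently the rate displacement $Y_t \coloneqq \max\{0, \log_F(r_t/r^*(d))\}$ satisfies the tail assumptions of Lemma~\ref{lem:occupation} with some $p < 1/25$ (the choice $F = 32$ is tailored to making this threshold comfortable), giving $\Pr(Y_t \ge k) \le 11(ep)^k$. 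Summing the conditional fitness drift over values of $Y_t$ then shows that the unconditional expected progress is $\Omega(\log\lambda)$ as long as $d \ge d_0$.

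In the near regime the optimal rate is $r=F$, and any larger rate can cause a fitness loss. Here I would work with a two-dimensional potential of the form
\[
g(x_t, r_t) \coloneqq d + \beta \cdot \log_F(r_t/F)
\]
for a suitable constant $\beta>0$, so that the rate penalty upper-bounds the expected fitness damage caused in the next iteration by carrying rate $r_t$ forward. The rate sub-process, once $d$ is small, obeys a geometric-drift dynamics that fits the hypotheses of Lemma~\ref{lem:occsimple}, giving occupation probabilities $q_i$ that decay geometrically along the rate ladder. The one-step drift of $g$ then splits into a standard $(1{+}\lambda)$-type fitness gain of $\Omega(\lambda d/n)$ when $r_t=F$, minus a small penalty contribution from the $q_i$-weighted expected damage at larger rates; tuning $\beta$ appropriately makes the net drift $\Omega(\lambda d / n)$, which is exactly what is needed for the final integral in Theorem~\ref{theo:variable-upper}.

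The main obstacle I anticipate is that comma selection allows the fitness to worsen, so the process can re-enter the far regime from the near one, and a run of several bad-rate generations could in principle cascade and undo previous progress. I would deal with this by (i) requiring $\lambda \ge C \ln n$ with $C$ large enough that a Chernoff bound on the $\lambda/2$ offspring using the smaller rate guarantees, with probability $1 - n^{-\Omega(1)}$ per generation, at least one non-worsening offspring; (ii) using Lemmas~\ref{lem:occupation} and~\ref{lem:occsimple} together with Lemma~\ref{lem:condbinomial} to show that the conditional expected fitness loss in the rare events where the rate is far above $r^*(d)$ sums to $o(1)$ over the whole run; and (iii) feeding the combined drift of $g$ into Theorem~\ref{theo:variable-upper} with a piecewise drift lower bound $h(x)$ that is $\Theta(\log\lambda)$ for $x \ge d_0$ and $\Theta(\lambda x / n)$ for $x < d_0$. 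The resulting integral evaluates to $O(n/\log\lambda + n\log n/\lambda)$ generations, and multiplication by $\lambda$ gives the stated bound on fitness evaluations.
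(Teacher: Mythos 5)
Your overall architecture (far/near split, occupation probabilities for the rate via Lemmas~\ref{lem:occupation} and~\ref{lem:occsimple}, a two-dimensional potential in the near region, drift theorems to finish) matches the paper's, but two steps as written would fail. The more serious one is your near-region potential $g = d + \beta\log_F(r_t/F)$ with constant $\beta$. The point of the rate penalty is to absorb the expected fitness \emph{loss} caused by carrying a bad rate forward, and that loss is linear in $r$, not logarithmic: conditional on the current rate being $r>F$, every offspring flips $\Theta(r)$ bits in expectation, so even the best offspring loses order $Fr$ fitness in expectation when no offspring survives unscathed (the paper's bound is $E(k') \le k + Fr$). When the rate then drops to $r/F$, your penalty term decreases by only the constant $\beta$ (one rung of the ladder), so the conditional drift of $g$ at a large rate is roughly $Fr - \beta$, which is positive --- the potential \emph{increases} in expectation exactly in the states you most need it to decrease. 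This is why the paper uses the penalty $\gamma(r-F)$ with $\gamma = 2F$, linear in $r$, in Lemma~\ref{lem:nearmain}: then the penalty drops by $\gamma r(1-1/F) > Fr$ per rate-halving step and dominates the fitness damage pointwise. Averaging the damage against the occupation probabilities $q_i$ (as you suggest) cannot rescue the logarithmic potential, because the drift theorems you invoke require a drift bound conditional on the current state, and the state includes the rate; the $q_i$-weighted argument is only used in the paper to bound the \emph{unconditional} negative drift and to show the process stays in the near region (Lemma~\ref{lem:goback}), not as a substitute for pointwise drift of the potential.

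The second issue is your claim that the far-region fitness drift is $\Omega(\log\lambda)$ throughout $d \ge d_0$. The correct conditional drift with a near-optimal rate is $\Theta(\ln\lambda/\ln(n/k))$ (Lemma~\ref{expected-drift-far}), which is $\Theta(\log\lambda)$ only for $k = \Theta(n)$ and degrades to $\Theta(1)$ already at $k = n/\mathrm{poly}(\lambda)$, well above your threshold $d_0 = \Theta(n\log\lambda/\lambda)$. The final bound survives because $\int_{n/\lambda}^{n/2} \ln(n/k)\,\mathrm{d}k /\ln\lambda = \Theta(n/\ln\lambda)$, but your piecewise $h$ with a flat $\Theta(\log\lambda)$ plateau is not a valid lower bound on the drift, so the variable-drift application as you set it up is unsound; you need the $k$-dependent drift function. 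Relatedly, your claim in step (i) that some offspring is non-worsening with probability $1-n^{-\Omega(1)}$ in every generation holds only while $r = O(F\ln(\lambda/\log n))$; for larger rates all $\lambda$ offspring worsen with high probability, which is precisely why the combination with the occupation-probability tail (your step (ii), the paper's Lemmas~\ref{lem:occprobnear} and~\ref{lem:goback}) is not optional but the crux of the argument.
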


The proof of this theorem is based on a careful, technically demanding drift analysis of both 
the current \om-value~$k_t$ (which is also the 
fitness distance, recall that our goal is the minimization of the objective function) and the current rate~$r_t$ of the parent. In very rough terms, a similar division  of the run as in 
\cite{DoerrGWY18} into regions of large \om-value, the 
far region (Section~\ref{sec:far}), and of small \om-value, the near region (Section~\ref{sec:near}) is made. The middle 
region considered in \cite{DoerrGWY18} is subsumed under the far region here.

In the remainder of our analysis, we assume that $n$ is sufficiently large, that $\lambda \ge  C \ln n$ with a sufficiently large constant $C$, and that $\lambda=n^{O(1)}$.

\subsection{The Far Region}
\label{sec:far}
In this section, we analyze the optimization behavior of our self-adaptive \oclea in the regime where the fitness distance $k$ is at least $n/\lambda$. Due to our assumption $\lambda \ge C \ln n$, it is very likely to have at least one
 copy of the parent among $\lambda$ offspring when $r=O(\ln\lambda)$. Thus the \oclea works almost the same as the $(1+\lambda)$~
EA when $r$ is small, but can lose fitness in general. The following lemma is crucial in order to analyze the drift of the rate depending 
on~$k$, which follows a similar scheme as with the $(1+\lambda)$~EA proposed in~\cite{DoerrGWY18}. 

Roughly speaking, the rate leading to optimal fitness progress is $n$ for $k \ge n/2 + \omega(\sqrt{n \ln(\lambda)})$, 
$n/2$ for $k = n/2 \pm o(\sqrt{n \log(\lambda)})$, and then the optimal rate quickly drops to $r = \Theta(\log \lambda)$ 
when $k \le n/2-\eps n$.

To ease the representation, we first define two fitness dependent bounds $L(k)$ and $R(k)$.
\begin{definition}\label{define-L-U}
	Let 
	$n/\ln\lambda<k<n/2$ and $F=32$. We define
	$L(k):=(F\ln(en/k))^{-1}$ and $U(k):=n(2n-k)/(22(n-2k)^2)$.
\end{definition}
According to the definition, both $L(k)$ and $R(k)$ monotonically increase when $k$ increases.

\begin{lemma}\label{far-drift-rate}
	Let $F=32$. Consider an iteration of the self-adaptive \oclea with current fitness distance~$k$ and current rate~$r$.  
	
	Then: 
	\begin{enumerate}
		\item \label{inc}If $n/\ln\lambda<k$ and $F\le r \le L(k)\ln\lambda$, the probability that all best offspring have been created with rate $Fr$ is at least $1-O(\ln^3(\lambda)/\lambda^{1/(4\ln\ln\lambda)})$.
		
		\item \label{dec}\label{itfardrifttoolarge}If $k<n/2$ and $n/(2F)\ge r \ge U(k)\ln\lambda$, then the probability that all best offspring have been created with rate $r/F$ is at least $1-\lambda^{1-(23/22)r/(U(k)\ln\lambda)}$.
		
	\end{enumerate}
\end{lemma}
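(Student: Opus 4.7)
The plan is to split the $\lambda$ offspring into a ``low-rate'' group (those created with rate $r/F$) and a ``high-rate'' group (those with rate $Fr$). By a Chernoff bound, each group contains at least $\lambda/3$ offspring with failure probability exponentially small in $\lambda$, which is negligible compared to either claimed bound. Conditioning on this, in each part it suffices to show that the best (smallest) $f$-value achieved in the ``winning'' group is strictly smaller than the best in the other.

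For part \ref{inc}, I would identify a fitness-improvement threshold $j^\ast = j^\ast(k,\lambda)$ of order $\Theta(\ln\lambda/\ln\ln\lambda)$ and compare, for each rate $s\in\{r/F,Fr\}$, the probability $p_s$ that a single rate-$s$ offspring flips at least $j^\ast$ of the $k$ one-bits while flipping none of the $n-k$ zero-bits. Lower-bounding $p_s \ge \binom{k}{j^\ast}(s/n)^{j^\ast}(1-s/n)^{n-j^\ast}$ (with the $(1-s/n)^{n-j^\ast}$ factor controlled via Lemma \ref{lestimate} as an $e^{-s}$-type correction) and upper-bounding $p_s \le \binom{k}{j^\ast}(s/n)^{j^\ast}$, the ratio $p_{Fr}/p_{r/F}$ is at least $F^{2j^\ast}$. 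Using the hypothesis $r \le L(k)\ln\lambda = \ln\lambda/(F\ln(en/k))$ and a Stirling-type estimate on $\binom{k}{j^\ast}$, the choice of $j^\ast$ is tuned so that $p_{Fr} \lambda/3 \gg \ln^3\lambda$ while $\lambda p_{r/F} = O(\lambda^{-1/(4\ln\ln\lambda)})$. The probability that some high-rate offspring reaches improvement $j^\ast$ is then $1-(1-p_{Fr})^{\lambda/3}$, while a union bound over the low-rate group shows that no low-rate offspring reaches $j^\ast$ except with probability $\lambda p_{r/F}$. Combining these yields the claimed bound.

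For part \ref{dec} the situation is reversed: the high rate $Fr$ is so large that $E[I-J]$ for a single high-rate offspring, where $I$ (resp. $J$) counts one-bits (resp. zero-bits) flipped, equals $Fr(2k-n)/n$ and is strongly negative. Here I would apply Bernstein's inequality (Theorem \ref{tbernstein}) to the sum of $\pm$-indicators of the $n$ bit-flips, with variance bounded by $Fr$, to show that $\Pr[I-J \ge 0 \mid \text{rate } Fr] \le \lambda^{-(23/22)r/(U(k)\ln\lambda)}$ in view of $U(k) = n(2n-k)/(22(n-2k)^2)$. A union bound over the at most $\lambda$ high-rate offspring produces the complementary probability claimed. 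In parallel, a single rate-$r/F$ offspring strictly improves with probability $\Omega(k(r/F)/n)\cdot(1-r/(Fn))^{n-1} = \Omega(kr/(Fn))$ (flipping exactly one one-bit and no zero-bit), and with $\Omega(\lambda)$ such offspring at least one strict improvement occurs except with exponentially small probability, which is absorbed into the stated bound.

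The main obstacle is the precise tracking in part \ref{inc} of the multiplicative gap between $p_{Fr}$ and $p_{r/F}$ as $k$ and $r$ vary jointly: a plain power-of-$F$ argument is not tight enough, and the non-standard failure exponent $1/(4\ln\ln\lambda)$ forces $j^\ast$ to grow like $\ln\lambda/\ln\ln\lambda$ rather than being a fixed constant. One must carry the $e^{-s}$ correction from $(1-s/n)^{n-j^\ast}$ together with the binomial coefficient expansion through the ratio computation and verify that the multiplicative $F^{2j^\ast}$ advantage survives after plugging in the explicit form of $L(k)$. In part \ref{dec} the analogous technical point is choosing the Bernstein deviation parameter so that the resulting tail probability has the exact exponent $(23/22)r/(U(k)\ln\lambda)$; the constant $23/22$ in $U(k)$ is engineered for precisely this balance.
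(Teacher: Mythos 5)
Your overall strategy for part~(a) -- a progress threshold $j^*$ that some high-rate offspring reaches but no low-rate offspring does -- is the paper's, and your observation that a net improvement of $j^*$ forces at least $j^*$ flips among the $k$ one-bits (so $\Pr[\text{improvement}\ge j^*]\le\binom{k}{j^*}(r/(Fn))^{j^*}$ by a union bound) would even simplify the paper's treatment of the low-rate group, which works with the full convolution $q(k,i,r)=\sum_j\binom{k}{i+j}\binom{n-k}{j}p^{i+2j}(1-p)^{n-i-2j}$ and a truncation after $3r$ terms. (Be careful, though: the restricted event ``$\ge j^*$ good flips and no bad flip'' is a \emph{subset} of the improvement event, so its probability is not the upper bound you need for the union bound; you must pass through $\Pr[X\ge j^*]$.) Two things break, however. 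First, $j^*=\Theta(\ln\lambda/\ln\ln\lambda)$ is the wrong order on part of the parameter range: for $k=\Theta(n)$ the quantity $L(k)=1/(F\ln(en/k))$ is a constant, so $r$ may be as large as $c\ln\lambda$, and then the advantage $F^{2j^*}e^{-Fr}=\exp(2j^*\ln F-Fr)$ with $j^*=o(\ln\lambda)$ tends to $0$ instead of exceeding $\lambda^{1/(4\ln\ln\lambda)}$ -- the $e^{-Fr}$ correction you yourself carry wipes out the gain. The threshold must be $\max\{(F-1)r,\ \ln\lambda/(8\ln\ln\lambda)\}$, as in the paper, so that $2j^*\ln F-Fr\ge(2\ln F-\frac{F}{F-1})j^*\ge 3j^*$. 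Second, you need a matching \emph{upper} bound of order $\ln^{O(1)}(\lambda)/\lambda$ on $\binom{k}{j^*}(Fr/n)^{j^*}e^{-Fr}$, not only the lower bound $\ge\ln(\lambda)/\lambda$; otherwise $\lambda\binom{k}{j^*}(r/(Fn))^{j^*}=\lambda\binom{k}{j^*}(Fr/n)^{j^*}F^{-2j^*}$ could still be large. The paper gets this by taking $i^*$ \emph{maximal} with $Q(k,i^*,Fr)\ge\ln(\lambda)/\lambda$ and bounding the ratio of consecutive terms by $O(\ln^2\lambda)$; the phrase ``tuned'' hides exactly this step, which is where the $\ln^3\lambda$ factor in the statement comes from.

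In part~(b) the high-rate half is essentially correct and matches the paper (Bernstein at threshold $0$ rather than at the paper's $\beta$; the constants still deliver an exponent $\ge 23/22$). The low-rate half fails. You require a rate-$r/F$ offspring that \emph{strictly improves}; its probability is $\Theta(\frac{kr}{Fn}e^{-r/F})$, not $\Omega(\frac{kr}{Fn})$ -- the factor $(1-\frac{r}{Fn})^{n-1}\approx e^{-r/F}$ cannot be dropped, since the hypothesis forces $r\ge U(k)\ln\lambda=\Omega(\ln\lambda)$ and allows $r$ up to $n/(2F)$. Moreover part~(b) has no lower bound on $k$, so even $\lambda\frac{kr}{Fn}$ need not be $\omega(1)$, and ``at least one strict improvement occurs except with exponentially small probability'' is simply false in general. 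The repair is the paper's choice of cut-off: compare both groups to $\beta=\E(Z(k,r/F))$, the \emph{expected} fitness change of a low-rate offspring. A single low-rate offspring does at least as well as $\beta$ with probability $\ge 1/5$ by a mean-versus-median argument for the two binomials, and that suffices -- the winning offspring only has to beat every high-rate offspring, not the parent.
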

\begin{proofof}{Lemma~\ref{far-drift-rate} part~\ref{inc}}
	Let $q(k,i,r)$ and $Q(k,i,r)$ be the probability that standard bit mutation with mutation rate $p=r/n$ creates from a parent with fitness distance $k$ an offspring with fitness distance exactly $k-i$ and at most $k-i$, respectively. Then
	\begin{equation}\label{expand-q}
		q(k,i,r)=\sum_{j=0}^{k-i}\binom{k}{i+j}\binom{n-k}{j}p^{i+2j}(1-p)^{n-i-2j}
	\end{equation}	
	and $Q(k,i,r)=\sum_{j=i}^{k}q(k,j,r)$. 
	We aim at finding $i$ such that $Q(k,i,Fr)\ge \ln(\lambda)/\lambda$ while $Q(k,i,r/F)=O(\ln^3(\lambda)/\lambda^{1+1/4(\ln\ln\lambda)})$. Then we use these to bound the probability that at least one offspring using rate $Fr$ obtains a progress of $i$ or more while at the same time all offspring using rate $r/F$ obtains less than $i$ progress.
	Let $i^*$ be the largest $i$ such that $Q(k,i,Fr)\ge \ln(\lambda)/\lambda$. Using the fact that $\ln(1-x)\ge-x-x^2$ for all $0<x<2/3$, we notice that $(1-Fp)^{n-i}\ge(1-Fp)^{n}\ge e^{-Fr-(Fr)^2/n}$. By the assumption that $r\le L(k)\ln\lambda\le \ln\lambda$, we obtain $(Fr)^2/n=O(\ln^2\lambda/n)=o(1)$. Thus  $(1-Fp)^{n-i}=(1-o(1))e^{-Fr}$. We also notice that $\binom{k}{i}=(k/i)((k-1)/(i-1))\cdots(k-i+1)>(k/i)^{i-1}(k-i)= (k/i)^{i}((k-i)i/k)>2(k/i)^i$ for $2<i<k-2$. Thus for $i>2$ we can bound $Q(k,i,Fr)$ by
	\begin{align}\label{eqn-Q}
		& Q(k,i,Fr)\ge q(k,i,Fr)\ge\binom{k}{i}(Fp)^{i}(1-Fp)^{n-i}
		\ge \left(\frac{k}{i}\cdot\frac{Fr}{n}\right)^{i}e^{-Fr}.
	\end{align}
	Let $i=\max\{(F-1)r,\ln\lambda/(8\ln\ln\lambda)\}$. We prove $i^*\ge i$ by distinguishing between two cases according to which argument maximizes $i$.
	
	If $i=\ln\lambda/(8\ln\ln\lambda)$, then $r\le i/(F-1)$ and $Fr\le 2i$. Referring to inequality \eqref{eqn-Q} and using the fact that $k/n\ge1/\ln\lambda$, $i<\ln\lambda$, and $\ln\ln(\lambda)>1$, we obtain
	\begin{align*}
		&\ln(Q(k,i,Fr))\ge i\ln\left(\frac{k}{in}\right)-Fr\ge -i\ln(\ln^2\lambda)-2i\\
		&=-2i\ln\ln\lambda-2i>-4i\ln\ln\lambda= -\frac{\ln\lambda}{2}\ge \ln\left(\frac{\ln\lambda}{\lambda}\right)
	\end{align*}
	and thus $Q(k,i,Fr)\ge \ln(\lambda)/\lambda$.

	If $i=(F-1)r$, then $r\ge \ln\lambda/(8(F-1)\ln\ln\lambda)$ since  $F$ is a constant.
	Using $r\le L(k)\ln\lambda$, we obtain $\ln\lambda\ge\ln(en/k)Fr$ which is equivalent to $(k/en)^{Fr}\ge 1/\lambda$. Furthermore,  $(k/n)^i>(k/n)^{Fr}$ since $i=(F-1)r<Fr$. Thus
	\begin{align*}
	& Q(k,i,Fr)
	\ge \left(\frac{k}{i}\cdot\frac{Fr}{n}\right)^{i}e^{-Fr}\ge \left(\frac{F}{F-1}\right)^{(F-1)r}\left(\frac{k}{en}\right)^{Fr}\ge 2^r\left(\frac{k}{en}\right)^{Fr}\ge\frac{\ln\lambda}{\lambda}.
	\end{align*}
	Since $Q(k,i,r)$ is decreasing in $i$, we obtain $i^*\ge\max\{(F-1)r,\ln\lambda/(8\ln\ln\lambda)\}$. Using a Chernoff bound and recalling that the expected number of flipped bits is bounded by $FL(k)\ln\lambda\le\ln\lambda/\ln(2e)$, we notice that $i^*\le \ln\lambda$. This upper bound will be used to estimate $Q(k,i^*,Fr)/Q(k,i^*+1,Fr)$ in the following part of the proof.
	
	We now prove that $Q(k,i^*,r/F)=o(1/\lambda)$. By comparing each component in $q(k,i,r/F)$ and $q(k,i,Fr)$, and applying Lemma~\ref{lestimate}~\ref{itestimate2} to estimate $(1-Fr/n)^n$ and $(1-r/(Fn))^n$ with $r=O(\ln\lambda)=o(n^{1/2})$ for large enough $n$, we obtain 
	\begin{align*}
		\frac{q(k,i,Fr)}{q(k,i,r/F)}&\ge F^{2i} \frac{(1-Fr/n)^n}{(1-r/(Fn))^n}\\
		&\ge \big(1-o(1)\big)F^{2i}e^{-(F-1/F)r}>F^{2i}e^{-Fr}.
	\end{align*}
	Therefore $Q(k,i^*,Fr)/Q(k,i^*,r/F)\ge F^{2i^*}e^{-Fr}= \exp(2i^*\ln F-Fr)\ge \exp(2i^*\ln F-Fi^*/(F-1))>\exp(3i^*)>\lambda^{1/(4\ln\ln\lambda)}$, where in the first inequality, we use the fact that $i^*\ge (F-1)r$. To prove $Q(k,i^*,r/F)=O(\ln^3(\lambda)/\lambda^{1+1/4(\ln\ln\lambda)})$, we first show $Q(k,i^*,Fr)/Q(k,i^*+1,Fr)=O(\ln^2\lambda)$. 
	Then we use this to bound $Q(k,i^*,Fr)=O(\ln^3(\lambda)/\lambda)$  according to the definition of $i^*$. Finally we obtain $Q(k,i^*,r/F)\le Q(k,i^*,Fr)/\lambda^{1/(4\ln\ln\lambda)}=O(\ln^3(\lambda)/\lambda^{1+1/4(\ln\ln\lambda)})$.  It remains to bound $Q(k,i^*,Fr)/Q(k,i^*+1,Fr)$.  We show that the majority of $q(k,i,r)$ are from the first $3r$ terms in the summation of equation \eqref{expand-q}. Let $q(k,i,r)_j$ denote the $j$-th item $\binom{k}{i+j}\binom{n-k}{j}p^{i+2j}(1-p)^{n-i-2j}$ in equation \eqref{expand-q}. Then
	\[
	\frac{q(k,i,r)_{j+1}}{q(k,i,r)_j}=\frac{k-i-j}{i+j+1}\cdot\frac{n-k-j}{j+1}\cdot p^2\cdot(1-p)^{-2}\le\frac{r^2}{(i+j+1)(j+1)}.
	\] 
	If $j>3r$, then $r^2/((i+j+1)(j+1))<1/9$, and thus
	\begin{eqnarray*}
	q(k,i,r)&\le& \left(\sum_{j=0}^{3r}q(k,i,r)_j\right)+q(k,i,r)_{3r}\left(\sum_{j=3r+1}^{k-i}(1/9)^{j-3r}\right)\\
	&\le &\left(\sum_{j=0}^{3r}q(k,i,r)_j\right)+q(k,i,r)_{3r}\cdot\frac{1/9}{1-1/9}\\
	&=&\left(\sum_{j=0}^{3r-1}q(k,i,r)_j\right)+\frac{9}{8} \cdot q(k,i,r)_{3r}\le \frac{9}{8}\sum_{j=0}^{3r}q(k,i,r)_j.
	\end{eqnarray*}
	We notice that 
	\[
	\frac{q(k,i+1,r)_j}{q(k,i,r)_j}=\frac{\binom{k}{i+j+1}\binom{n-k}{j}p^{i+2j+1}(1-p)^{n-i-2j-1}}{\binom{k}{i+j}\binom{n-k}{j}p^{i+2j}(1-p)^{n-i-2j}}=\frac{(k-i-j)p}{(i+j+1)(1-p)},
	\]
	using the fact that $\sum_{j=0}^{3r}q(k,i,r)_j\le q(k,i,r)\le(9/8)\sum_{j=0}^{3r}q(k,i,r)_j$ for all $(k,i,r)$, 
	we compute
	\[
	\frac{q(k,i^*+1,Fr)}{q(k,i^*,Fr)}\ge \frac{\sum_{j=0}^{3Fr}q(k,i^*+1,Fr)_j}{(9/8)\sum_{j=0}^{3Fr}q(k,i^*,Fr)_j}\ge \frac{8}{9}\cdot\frac{k-i^*-3Fr}{i^*+3Fr+1}\cdot \frac{p}{1-p}.
	\]
	Since $i^*\ge (F-1)r$, $i^*\le \ln\lambda$, and $k\ge n/\ln\lambda=\omega(\ln\lambda)$, we obtain
	\[
	\frac{q(k,i^*+1,Fr)}{q(k,i^*,Fr)}=\Omega\left(\frac{kp}{i^*}\right)=\Omega\left(\frac{kr}{i^*n}\right)=\Omega\left(\frac{1}{\ln^2\lambda}\right).
	\]
	Consequently we have $q(k,i^*,Fr)/Q(k,i^*+1,Fr)\le q(k,i^*,Fr)/q(k,i^*+1,Fr)=O(\ln^2\lambda)$ and
	\[
	\frac{Q(k,i^*,Fr)}{Q(k,i^*+1,Fr)}=1+\frac{q(k,i^*,Fr)}{Q(k,i^*+1,Fr)}= O(\ln^2\lambda).
	\]
	So finally $Q(k,i^*,Fr)=O(\ln^3(\lambda)/\lambda)$ due to the definition of $i^*$, and
	\[
	Q(k,i^*,r/F)\le \frac{Q(k,i^*,Fr)}{F^{2i^*}e^{-Fr}}=O\left(\frac{\ln^3\lambda}{\lambda\cdot\lambda^{1/(4\ln\ln\lambda)}}\right).
	\]
	A simple union bound shows that with probability	$1-O(\ln^3(\lambda)/\lambda^{1/(4\ln\ln\lambda)})$, no offspring of rate~$r/F$ manages to obtain a progress of $i^*$ or more. However, the probability that an offspring has rate $Fr$ and obtains at least $i^*$ progress is $\ln(\lambda)/(2\lambda)$. Thus the probability that no offspring generated with rate $Fr$ achieves a progress of at least $i^*$ is at most $(1-\ln(\lambda)/(2\lambda))^{\lambda}\le \lambda^{-1/2}=o(\ln^3(\lambda)/\lambda^{1/(4\ln\ln\lambda)})$. This proves the first statement of the lemma.
\end{proofof}
\begin{proofof}{Lemma~\ref{far-drift-rate} part~\ref{dec}}
	For $\tilde{r}\in\{r/F,Fr\}$ let the random variable $X(k,\tilde{r})$ denote the number of flipped bits in $k$ ones and $Y(k,\tilde{r})$ denote the number of flipped bits in $n-k$ zeros when applying standard bit mutation with probability $p=\tilde{r}/n$. Let $Z(k,\tilde{r}):=Y(k,\tilde{r})-X(k,\tilde{r})$ denote the improvement in fitness. Let $Z^*(k,\tilde{r})$ denote the minimal $Z(k,\tilde{r})$ among all offspring which apply rate $\tilde{r}$.  $\E(Z(k,\tilde{r}))=(n-k)\tilde{r}/n-k\tilde{r}/n=(n-2k)\tilde{r}/n$. Our aim is to find a $\beta$ such that $\Pr\left(Z(k,r/F)\le \beta\right)=\Theta(1)$ while $\Pr\left(Z(k,Fr)\le \beta\right)=o(1/\lambda)$, and use this to obtain a high value for $\Pr\left(Z^*(k,r/F)< Z^*(k,Fr)\right)$. 
	
	Let $\beta:=\E(Z(k,r/F))$. 
	We notice that $\Pr(X(k,r/F)>\E(X(k,r/F))-1)\ge 1/2$ since the median of binomial distribution $X(k,r/F)$ is $\lfloor E(X(k,r/F) \rfloor$ or $\lceil E(X(k,r/F)\rceil$. Applying Lemma~8 in \cite{Doerr18exceedexp} to $\Pr(Y(k,r/F)<\E(Y(k,r/F))-1)$ with $E(Y(k,r/F))=\Omega(\ln\lambda)=\omega(1)$ by assumption $r\ge U(k)\ln\lambda$ and $E(Y(k,r/F))<(n-k)/2$, we obtain for $n$ sufficiently large that
	\begin{align}
		&\Pr\Big(Y(k,r/F)<\E(Y(k,r/F))-1\Big)\nonumber
		\\&\ge\frac{1}{2}-\sqrt{\frac{n-k}{2\pi\lfloor (n-k)p \rfloor(n-k-\lfloor (n-k)p\rfloor)}}>\frac{2}{5}.\label{2/5}
	\end{align}
	Thus $\Pr(Z(k,r/F)\le\beta)>(1/2)(2/5)=1/5$. We use Bernstein's inequality (version Lemma~\ref{lemma:prob-single-offspring}) to bound $\Pr\left(Z(k,Fr)\le \beta\right)$ and obtain
	\[
	\Pr\Big(Z(k,Fr)\le E(Z(k,Fr))-\Delta\Big)\le\exp\left(-\frac{\Delta^2}{2(\Var(Z(k,Fr))+\Delta/3)}\right) \text{ for all }\Delta>0.
	\]
	With $\Delta=E(Z(k,Fr))-\beta=(n-2k)(Fr/n-r/(Fn))=(n-2k)(F^2-1)r/(Fn)$ and $\Var(Z(k,Fr))=Fr(1-Fr/n)<Fr$, we compute
	\begin{eqnarray*}
		Pr\Big(Z(k,Fr)\le\beta \Big)&\le& \exp\left(-\frac{1}{2}\cdot \frac{(F^2-1)^2(n-2k)^2r^2}{F^2n^2(Fr+(n-2k)(F^2-1)r/(3Fn))}\right)\\
		&=&\exp\left(-\frac{1}{2}\cdot \frac{(F^2-1)^2(n-2k)^2r}{F^3n^2+Fn(n-2k)(F^2-1)/3}\right)\\
		&\le&\exp\left(-\frac{3}{2}\cdot \frac{(F^2-1)^2(n-2k)^2r}{3F^3n^2+F^3n(n-2k)}\right)\\
		&=&\exp\left(-\frac{3}{4}\cdot \frac{(F^2-1)^2(n-2k)^2r}{F^3n(2n-k)}\right).
	\end{eqnarray*}
	Given $F=32$ and $r\ge U(k)\ln\lambda$ then 
	\begin{align*}
		&\Pr\Big(Z(k,Fr)\le \beta\Big)
		<\exp\left(-\frac{23.9(n-2k)^2r}{n(2n-k)}\right)<\lambda^{-\frac{23.9r}{22U(k)\ln\lambda}}.
	\end{align*}
	With a simple union bound, we obtain $\Pr(Z^*(k,Fr)\le \beta)<\lambda\Pr(Z(k,Fr)\le \beta)<\lambda^{1-23.9r/(22U(k)\ln\lambda)}$. The probability that an offspring has rate $r/F$ and obtains $\beta$ is at least $(1/2)(1/5)=1/10$. Thus the probability that no offspring generated with $r/F$ has a $Z$-value of at least $\beta$ is at most $(1-1/10)^{\lambda}=\exp(-\Theta(\lambda))$. Therefore $\Pr(Z^*(k,Fr)< Z^*(k,r/F))<\lambda^{1-23.9r/(22U(k)\ln\lambda)}(1-\exp(-\Theta(\lambda)))=o(\lambda^{1-23r/(22U(k)\ln\lambda)})$, which means with probability at least $1-\lambda^{1-(23r/(22U(k)\ln\lambda)}$ all best offspring have been created with rate $r/F$.
\end{proofof}

Lemma~\ref{far-drift-rate} will be crucial in order to bound the expected progress on fitness in the far region. 
We notice that $\ln\lambda=o(\sqrt{n})$ in the lemma we may allow $r>\ln\lambda$ when $k$ is large and $r=\Theta(n)$ when 
$k=n/2-\Theta(\sqrt{n\ln\lambda})$. It is easy to show a positive progress on fitness for $r<\ln\lambda$ 
since there will be sufficiently many offspring that do not flip zeroes. 
When $r\ge \ln\lambda$ we expect all offspring to flip zeros, but we can still show a positive drift when $k>7n/20$, as 
stated in the following lemma. The idea is that 
the standard variation of the number of flipping ones is $\sqrt{kr/n(1-r/n)}=\Theta(\sqrt{r})$. 
This makes a deviation  compensating bad flips among the remaining $n-2k$ zeros likely enough.

\begin{lemma}\label{far-prob}
	Let 
	$7n/20 \le k<n/2$, $F=32$ and $\alpha=10^{-4}$. 
	Assume $r\le \min\{n^2\ln\lambda/(12(n-2k)^2),n/(2F)\}$. Assume that from a parent with fitness distance $k$ we generate an offspring using standard bit mutation with mutation rate $p=r/n$. Then the probability that this offspring has a fitness distance of at most $k-s$ with $s:=\alpha(\min\{\ln\lambda,r\}+(n-2k)r/n)$, is at least $\lambda^{-0.98}$.
\end{lemma}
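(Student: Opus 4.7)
My plan is to exploit the independence between the flips among the $k$ one-bits and the flips among the $n-k$ zero-bits. Let $X\sim\mathrm{Bin}(k,r/n)$ count the former and $Y\sim\mathrm{Bin}(n-k,r/n)$ the latter; then $X$ and $Y$ are independent, the offspring has fitness distance $k-X+Y$, and the target event is $\{X-Y\ge s\}$. Since $E[X-Y]=-(n-2k)r/n<0$, this is an upward deviation by
$$s' := s + \tfrac{(n-2k)r}{n} = \alpha\min\{\ln\lambda,r\} + (1+\alpha)\tfrac{(n-2k)r}{n}.$$
The assumption $r\le n^2\ln\lambda/(12(n-2k)^2)$ rewrites as $(n-2k)r/n\le\sqrt{r\ln\lambda/12}$, yielding $s'\le\alpha\min\{\ln\lambda,r\}+(1+\alpha)\sqrt{r\ln\lambda/12}$. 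Since $k\ge 7n/20$ and $r\le n/(2F)$, both $\sigma_X^2$ and $\sigma_Y^2$ exceed $(7/20)(63/64)\,r>0.34\,r$, so $s'$ corresponds to at most $O(\sqrt{\ln\lambda})$ standard deviations of $X-Y$, a regime where one expects probability $\lambda^{-O(1)}$ with a constant well below $0.98$.

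The second step is to split the deviation: for any $\Delta_X,\Delta_Y$ with $\Delta_X+\Delta_Y\ge s'$, independence gives
$$\Pr(X-Y\ge s)\ge \Pr\bigl(X\ge E[X]+\Delta_X\bigr)\cdot\Pr\bigl(Y\le E[Y]-\Delta_Y\bigr).$$
I would take $\Delta_X=\Delta_Y=\lceil s'/2\rceil$, balancing the two factors since $\sigma_X\approx\sigma_Y$, and apply to each a sharp lower-tail estimate for the binomial of the form $\Pr(X\ge E[X]+\Delta_X)\ge \lambda^{-o(1)}\exp(-\Delta_X^2/(2\sigma_X^2))$. When $r$ exceeds a suitable constant, this follows from a Slud-type comparison with the Gaussian lower tail $1-\Phi(z)\ge(1/z-1/z^3)\phi(z)$. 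In the remaining regime of constant-order $r$, the same bound follows directly from Stirling's formula applied to the single point probability $\Pr(X=\lceil E[X]+\Delta_X\rceil)$; since $\Delta_X=O(\sqrt{\ln\lambda})$ still lies in the Poisson-tail range, the resulting prefactor is only $\lambda^{-o(1)}$.

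Multiplying the two factors gives $\Pr(X-Y\ge s)\ge \lambda^{-o(1)}\exp\bigl(-(s')^2/(4\min\{\sigma_X^2,\sigma_Y^2\})\bigr)$. A direct computation, valid regardless of whether $\min\{\ln\lambda,r\}$ equals $r$ or $\ln\lambda$, shows $(s')^2/r\le\ln\lambda\bigl(\alpha+(1+\alpha)/\sqrt{12}\bigr)^2$. With $\sigma^2\ge 0.34\,r$ and $\alpha=10^{-4}$, the exponent is at most about $0.07\,\ln\lambda$, well below the required $0.98\,\ln\lambda$, so the probability is at least $\lambda^{-0.08}\ge\lambda^{-0.98}$ for $n$ sufficiently large.

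The main technical obstacle is making the anti-concentration step quantitative with a polynomial prefactor small enough not to close the gap between the exponent we obtain ($\approx 0.07\,\ln\lambda$) and the required $0.98\,\ln\lambda$. Fortunately this gap is very comfortable, so a crude $1/\sqrt{\ln\lambda}$ loss from the Gaussian density (or a $\lambda^{-o(1)}$ loss from Stirling in the constant-variance case) suffices. A secondary subtlety is that the small-$r$ regime, where the Gaussian approximation breaks down, requires a separate argument using direct point-probability estimates; nonetheless the overall strategy—split the deviation between $X$ and $Y$, use independence, multiply lower-tail bounds—applies uniformly.
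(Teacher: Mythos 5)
Your strategy is sound and the numbers check out, but it is a genuinely different route from the paper's. The paper does \emph{not} split the deviation between $X$ and $Y$: it puts the entire deviation $\delta=\lceil (n-2k)p+s\rceil$ on the one-bit flips $X$ and only requires $Y\le E(Y)$, which holds with probability at least $2/5$ by a mean-versus-median argument. The upper tail of $X$ is then lower-bounded by hand via $F(u+\delta)\ge\delta B(\tilde u+2\delta)$ and a Robbins--Stirling estimate of the ratio $B(\tilde u+2\delta)/B(\tilde u)$, which costs roughly a factor $4$ in the exponent and lands the paper at $0.978\ln\lambda$, uncomfortably close to the $0.98$ threshold. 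Your balanced split with sharp tail lower bounds yields an exponent of only about $0.06$--$0.08\,\ln\lambda$, so your version is both tighter and more robust to the choice of constants; the price is that you must control \emph{two} tails instead of one. The small-$r$ regime you defer to point probabilities is handled in the paper even more simply: for $r<1/(2\alpha)$ one has $s<1$, so a single good flip and nothing else suffices, giving probability $\Theta(e^{-r})=\omega(\lambda^{-0.98})$.

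One step deserves a warning. The bound $\Pr(X\ge E[X]+\Delta)\ge\lambda^{-o(1)}\exp(-\Delta^2/(2\sigma^2))$ is fine for the \emph{upper} tail of $X$ (Slud applies since $p\le 1/64\le 1/4$, and the Cram\'er rate for the upper tail of a low-$p$ binomial is in fact \emph{smaller} than $\Delta^2/(2\sigma^2)$). But the analogous statement for the \emph{lower} tail $\Pr(Y\le E[Y]-\Delta_Y)$ is false in general: the correct rate is $m\,D\bigl(p(1-\epsilon)\,\|\,p\bigr)=\bigl(1+\Theta(\epsilon)\bigr)\Delta_Y^2/(2\sigma_Y^2)$ with $\epsilon=\Delta_Y/E[Y]$, i.e.\ the left tail of a right-skewed binomial is \emph{lighter} than Gaussian, and Slud does not apply after reflection since $1-p>3/4$. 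In your setting $\epsilon\le 0.3$ (because $\Delta_Y=s'/2\le 0.15r$ while $E[Y]\ge r/2$), so the inflation is a bounded constant factor and your margin absorbs it easily -- but you should either state the KL-form lower bound or follow the paper and demand only $Y\le E[Y]$, which sidesteps the issue entirely.
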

\begin{proof}
	We first look at the case when $r<1/(2\alpha)$. In this case $s\le \alpha(r+(n-2k)r/n)\le \alpha(2r)<1$. Then the probability that this offspring has a fitness distance of $k-1>k-s$ is at least
	\begin{align*}
		\binom{k}{1}\left(\frac{r}{n}\right)^1\left(1-\frac{r}{n}\right)^{n-1}=\Theta(e^{-r})=\omega(\lambda^{-0.98}).
	\end{align*}
	Therefore it remains to consider $r\ge 1/(2\alpha)$.
	
	Let random variables $X$ and $Y$ denote the number of flips in $k$ one-bits and $(n-k)$ zero-bits, respectively, in an offspring using rate $p=r/n$. Then $X-Y$ is the decrease of fitness distance. $X$ and $Y$ follow binomial distributions $\Bin(k,p)$ and $\Bin(n-k,p)$, respectively.
	Let
	\begin{align*}
		&B(x)\coloneqq\Pr(X=x)=\binom{k}{x}p^x(1-p)^{k-x} \text{ for all } x\in\{0,1,\dots,k\},\\
		&F(x)\coloneqq\Pr(X\ge x)=\sum_{i=\lceil x\rceil}^{k}B(i) \text{ for all } x\in[0,k].
	\end{align*}
	Since $r\ge 1/(2\alpha)\ge 5000$ and $n\le 20k/7$, then $p=r/n\ge 5000\cdot7/(20k)=1750/k$. Using this and the fact that $p\le 1/(2F)$,
	we apply Lemma 8 in \cite{Doerr18exceedexp} and obtain
	 \[
	 \Pr(X>E(X))>\frac{1}{2}-\sqrt{\frac{k}{2\pi\lfloor kp \rfloor(k-\lfloor kp\rfloor)}}>\frac{1}{2}-\sqrt{\frac{1}{2kp}}>\frac{2}{5}
	 \]
	 Similarly $\Pr(Y\le E(Y))=2/5$. Since $E(X-Y)=kp-(n-k)p=-(n-2k)p$, we  bound
	\begin{align*}
		\Pr\bigg(X-Y\ge s\bigg)&\ge \Pr\bigg(X\ge E(X)+(n-2k)p+s\bigg)\Pr\bigg(Y\le E(Y)\bigg)\\
		&\ge \frac{2}{5}F\bigg(kp+(n-2k)p+s\bigg).
	\end{align*}
	Let $\delta\coloneqq\lceil (n-2k)p+s\rceil$, $u\coloneqq  kp $ and $\tilde{u}:=\lceil u \rceil$. We notice that $u=rk/n\ge (1/(2\alpha))(7/20)=1750$. Furthermore, we have $\delta<\tilde{u}-2<u$ since 
	\begin{eqnarray*}
	\delta&= &\lceil(n-2k)p+s\rceil<(1+\alpha)(n-2k)p+\alpha\min\{\ln\lambda,r\}+1\\
	&\le& (1+\alpha)\frac{n-2k}{n}r+\alpha r+1
	\le\left(\left(1+\alpha\right)\frac{3}{10}+\alpha\right)r+1\\
	&=&\frac{3+13\alpha}{10}\cdot\frac{n}{k}\cdot u+1
	<\frac{3+13\alpha}{10}(3u)+1<0.91u+1\\
	&=& u-0.09u+1\le u-(0.09\cdot 1750-1 )=u-156.5.
 	\end{eqnarray*}
	We aim at proving $F(u+\delta)=\omega(\lambda^{-0.98})$ to obtain this lemma. If $F(u+\delta)=\Theta(1)$ then the conclusion holds. It remains to consider $F(u+\delta)=o(1)$ while $F(u)-F(u+\delta)\ge 2/5-o(1)$ as stated in equation \eqref{2/5}. 
	For any $x\in\Z_{\ge u}$ we have
	\[
	\frac{B(x+1)}{B(x)}=\frac{k-x}{x+1}\cdot \frac{p}{1-p}\le \frac{u-up}{u-up+1-p}<1.
	\]
	Since $\tilde{u}=\lceil u \rceil$ then $B(\tilde{u})>B(\tilde{u}+1)>\cdots>B(k)$, and thus $F(u+\delta)\ge  \delta  B(\tilde{u}+2\delta )$ as well as
	$F(u)-F(u+\delta)\le  \delta  B(\tilde{u})$. Using the fact that  $p/(1-p)=u/(k-u)$ and $\tilde{u}-1<u$, we see that
	\begin{align*}
	\frac{B(\tilde{u}+2 \delta)}{B(\tilde{u})}&=\frac{(k-\tilde{u})\cdots(k- (\tilde{u}+2 \delta)+1)}{(\tilde{u}+1)\cdots( \tilde{u}+2\delta)}\cdot\frac{p^{2\delta}}{(1-p)^{2\delta}}\\
	&\ge\frac{(k- (\tilde{u}-1)-2 \delta)^{2\delta}}{(\tilde{u}+1)\cdots( \tilde{u}+2\delta)}\cdot\frac{u^{2\delta}}{(k-u)^{2\delta}}
	\ge\left(1-\frac{2 \delta}{k-u}\right)^{2\delta}\frac{u^{2\delta}}{(\tilde{u}+1)\cdots(\tilde{u}+2\delta)}.
	\end{align*}
	We compute the following factorials using Robbins's Stirling's approximation in \cite{robbins55}
	\begin{align*}	
	& (\tilde{u}+2\delta)!\le \sqrt{2\pi(\tilde{u}+2\delta)}\left(\frac{\tilde{u}+2\delta}{e}\right)^{\tilde{u}+2\delta}\exp\left(\frac{1}{12(\tilde{u}+2\delta)}\right),\\
	& \tilde{u}!\ge \sqrt{2\pi \tilde{u}}\left(\frac{ \tilde{u}}{e}\right)^{ \tilde{u}}\exp\left(\frac{1}{12 \tilde{u}+1}\right).
	\end{align*}
	Notice that $12 \tilde{u}+1<12( \tilde{u}+2\delta)$, we obtain
	\[
	\frac{1}{ (\tilde{u}+1)\cdots(\tilde{u}+2\delta)}=\frac{\tilde{u}!}{(\tilde{u}+2\delta)!}\ge\sqrt{\frac{\tilde{u}}{\tilde{u}+2\delta}}\frac{\tilde{u}^{\tilde{u}}e^{2\delta}}{(\tilde{u}+2\delta)^{\tilde{u}+2\delta}}\ge \sqrt{\frac{\tilde{u}}{\tilde{u}+2\delta}}\frac{u^{\tilde{u}}e^{2\delta}}{(\tilde{u}+2\delta)^{\tilde{u}+2\delta}}.
	\]
	Therefore
	\begin{align*}
	&\frac{B(\tilde{u}+2\delta)}{B(\tilde{u})}\ge \left(1-\frac{2\delta}{k-u}\right)^{2\delta}\sqrt{\frac{\tilde{u}}{\tilde{u}+2\delta}}\frac{u^{\tilde{u}+2\delta}e^{2\delta}}{(\tilde{u}+2\delta)^{\tilde{u}+2\delta}}\\
	&=\sqrt{\frac{\tilde{u}}{\tilde{u}+2\delta}}\exp\left(2\delta\ln\left(1-\frac{2\delta}{k-u}\right)+(\tilde{u}+2\delta)\ln\left(\frac{u}{\tilde{u}+2\delta}\right)+2\delta\right)\\
	&\ge\sqrt{\frac{\tilde{u}}{\tilde{u}+2\delta}}\exp\left(2\delta\ln\left(1-\frac{2\delta}{k-u}\right)+(\tilde{u}+2\delta)\ln\left(1-\frac{2\delta+1}{\tilde{u}+2\delta}\right)+2\delta\right).
	\end{align*}
	We notice that $2\delta/(k-u)\le 2\delta/(2Fu-u)= 2\delta/(63u)<2/63<1/2$ and $(2\delta+1)/(\tilde{u}+2\delta)<(2\delta+1)/(3\delta+2)<2/3$. 
	Referring to Lemma~\ref{lestimate}, we compute
	\begin{align}
	2\delta\ln\left(1-\frac{2\delta}{k-u}\right)&\ge -\frac{3}{2}\cdot\frac{4\delta^2}{k-u}=-\frac{6\delta^2}{u/p-u}\ge-\frac{6\delta^2}{2Fu-u}\ge-\frac{\delta^2}{10u},\nonumber\\
	(\tilde{u}+2\delta)\ln\left(1-\frac{2\delta+1}{\tilde{u}+2\delta}\right)&\ge-(2\delta+1)-\frac{(2\delta+1)^2}{\tilde{u}+2\delta}\ge-2\delta-\frac{4\delta^2}{u}-3,\nonumber\\
	\frac{B(\tilde{u}+2\delta)}{B(\tilde{u})}&\ge\sqrt{\frac{\tilde{u}}{\tilde{u}+2\delta}}\exp\left(-\frac{41\delta^2}{10u}-3\right)\ge \sqrt{\frac{1}{3}}e^{-3}\exp\left(-\frac{41\delta^2}{10u}\right).\label{ratio_B}
	\end{align}
	where the last inequality used that $\tilde{u}/(\tilde{u}+2\delta)\ge1/3 $ since $\delta\le \tilde{u}$.
	Using $n/k\le 20/7$ and $r\le n^2\ln\lambda/(12(n-2k)^2)$, we obtain
	\begin{align*}
		\frac{41\delta^2}{10u}&=\frac{41n}{10k}\cdot \frac{\lceil(1+\alpha)((n-2k)/n) r+\alpha\min\{\ln\lambda,r\}\rceil^2}{r}\\
		&\le \frac{41n}{10k}\cdot \frac{\left((1+\alpha)((n-2k)/n) r+\alpha\min\{\ln\lambda,r\}+1\right)^2}{r}\\
		&\le \frac{41n}{10k}\cdot\left(\frac{\left((1+\alpha)((n-2k)/n) r+\alpha\min\{\ln\lambda,r\}\right)^2}{r}+2(1+\alpha)\frac{n-2k}{n}+2\alpha+\frac{1}{r}\right)\\
		&\le \frac{82}{7}\cdot\left((1+\alpha)^2\left(\frac{n-2k}{n}\right)^2r +2(1+\alpha)\frac{n-2k}{n}\alpha\ln\lambda+\alpha^2\ln\lambda+1\right)\\
		&\le \frac{82}{7}\cdot\left(\frac{(1+\alpha)^2\ln\lambda}{12} +\frac{3(1+\alpha)\alpha}{5}\ln\lambda+\alpha^2\ln\lambda+1\right)<0.978\ln\lambda+\frac{82}{7}.
	\end{align*}
	Plugging the last estimate into inequality \eqref{ratio_B}, we obtain $B(\tilde{u}+2\delta)/B(\tilde{u})=\omega(\lambda^{-0.98})$. Thus $F(u+\delta)/(F(u)-F(u+\delta))=\omega(\lambda^{-0.98})$ and  $F(u+\delta)=\omega(\lambda^{-0.98})$ which proves the statement in this lemma.
\qed\end{proof}

For $k < 7n/20$, we need a more careful analysis, where we will 
estimate the expected progress on fitness averaged over the random rates 
the algorithm may have at a time. Hence, we assume a fixed current fitness 
but a random current rate and compute the average drift of fitness with respect
 to the distribution on the rates.
This approach is similar to the one by J{\"a}gersk{\"u}pper \cite{JaegerskuepperAlgorithmica11}, who computes 
the average drift of the Hamming distance to the optimum when the \ooea is 
optimizing a linear function, where the average is taken with respect to a distribution on all search points 
with a certain Hamming distance.

Of course, we want to exploit that a rate yielding 
near-optimal fitness progress 
is used most of the time such that too high (or too low) rates do not have 
a significant impact. To this end,  Lemma~\ref{lem:occupation} about occupation probabilities will be crucial.

We now define two fitness dependent bounds $r_l(k)$ and $r_u(k)$. We show in Lemma~\ref{expected-drift-far} that for any rate, if $r/F$ or $Fr$ is within the bounds, then the algorithm has logarithmic drift on fitness. 

\begin{definition}\label{define-r_l-r_u}
	Let 
	$n/\ln\lambda<k<n/2$ and $F=32$. We define
	\begin{gather*}
		r_u(k):=\left\{
		\begin{array}{lll}
			n^2\ln(\lambda)/(12(n-2k)^2) &\text{~if~}& 7n/20\le k<n/2,\\
			10U(k)\ln(\lambda)/9 &\text{~if~}& n/\ln\lambda<k<7n/20.
		\end{array}
		\right.\\
		r_l(k):=\left\{
		\begin{array}{lll}
			L(k)\ln(\lambda)/2  &\text{~if~}& n/\ln\lambda \le k<n/2,\\
			F &\text{~if~}& n/\lambda<k<n/\ln\lambda.
		\end{array}
		\right.
	\end{gather*}
	where $L(k)$ and $U(k)$ are defined as in Definition \ref{define-L-U}.
\end{definition}
We notice that Lemma~\ref{far-drift-rate} can be applied to all $r>r_u$ or $r<r_l$ because for all $7n/20\le k<n/2$,
we have $r_u/(U(k)\ln\lambda)=22n/(12(2n-k))\ge22/(12(2-0.35))=10/9$. For $k<n/\ln\lambda$, we set $r_l$ to the minimal possible value of $r$. Finally note that $r_u$ is non-decreasing in $k$ due to the monotonicity of $n^2/(n-2k)^2$ and $U(k)$.


\begin{lemma}\label{expected-drift-far}
	Let 
	$n/\lambda<k<n/2$ with $F=32$. 
	Let $\Delta(k,r)$ denote the fitness gain of the best offspring using rate in $\{r/F,Fr\}$.
	\begin{enumerate}
		\item The negative drift of fitness for too high rates $ r\ge Fr_u$ is bounded by  \[
		E(\Delta(k,r))\ge -\big(1+o(1)\big)\frac{n-2k}{n}\frac{r}{F}.
		\]
		\item When $k\ge 7n/20$ the positive drift of fitness for good rate
		$r\le Fr_u$ is bounded by
		\[
		E(\Delta(k,r))\ge \big(1-o(1)\big)\cdot 10^{-4}\left(\frac{n-2k}{n}\cdot\frac{r}{F}+\min\bigg\{\ln\lambda,\frac{r}{F}\bigg\}\right).
		\]
		\item When $n/\lambda<k<7n/20$ the positive drift of fitness for good rate $r\le Fr_u$ is bounded by
		\[
		E(\Delta(k,r))\ge \big(1-o(1)\big)\min\bigg\{\frac{r}{F},\frac{\ln\lambda}{F\ln(en/k)}\bigg\}.
		\]
	\end{enumerate}
\end{lemma}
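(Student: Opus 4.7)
The proof splits naturally into the three stated cases. Throughout I write $Z_i := Y_i - X_i$ for the fitness-distance change of offspring~$i$ (so $\Delta(k,r) = -Z^*$ with $Z^* := \min_{i\in[1..\lambda]} Z_i$), and recall that half of the $\lambda$ offspring use rate $r/F$ while the other half use $Fr$.

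The first bound follows from the deterministic inequality $Z^* \le Z_j$ applied to any fixed rate-$r/F$ offspring~$j$. Taking expectation yields $E[Z^*] \le E[Z_j] = (n-2k)r/(Fn)$, so $E[\Delta] \ge -(n-2k)r/(Fn)$, which matches the claim.

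For the second bound, the hypothesis $r \le Fr_u(k)$ together with the definition $r_u(k) = n^2\ln\lambda/(12(n-2k)^2)$ for $k \ge 7n/20$ ensures that rate $r/F$ satisfies the hypothesis of Lemma~\ref{far-prob}. Hence each rate-$r/F$ offspring independently achieves fitness gain at least $s := \alpha(\min\{\ln\lambda,r/F\} + (n-2k)(r/F)/n)$ with probability at least $\lambda^{-0.98}$. The expected number of such lucky offspring among the $\lambda/2$ independent rate-$r/F$ offspring is $\lambda^{0.02}/2 = \omega(1)$, and a Chernoff bound shows that at least one such offspring exists with probability $1-\exp(-\Omega(\lambda^{0.02})) = 1-o(1)$; since $\alpha = 10^{-4}$, this produces a good-event contribution of $s(1-o(1))$ exactly matching the target. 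For the third bound (where Lemma~\ref{far-prob} does not apply because $k < 7n/20$), I would replace its use by the explicit estimate $\Pr(Z_i \le -\ell) \ge \binom{k}{\ell}(r/(Fn))^{\ell}(1-r/(Fn))^{n-\ell}$ (flipping exactly $\ell$ of the $k$ one-bits and no zero-bit), choose $\ell = \min\{r/F,\,\ln\lambda/(F\ln(en/k))\}$, and verify via Lemma~\ref{lestimate} that the resulting probability is $\Omega(\ln\lambda/\lambda)$; a Chernoff bound then again yields at least one successful offspring with probability $1-o(1)$.

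The \emph{main obstacle} is controlling the bad-event contribution (no offspring achieves the desired progress) to $E[\Delta]$, since the crude bound $|\Delta|\le n$ together with $\Pr(\text{bad}) = o(1)$ is not sharp enough when $s$ is as small as $\Theta(\ln\lambda)$ while $n$ is much larger. The decisive idea is to split the bad event into $\{0 \le \Delta < s\}$, whose contribution is non-negative and may be dropped, and $\{\Delta < 0\}$; on the latter, $Z^* > 0$, i.e., \emph{every} one of the $\lambda$ independent offspring satisfies $Z_i > 0$. By independence, $\Pr(\Delta < 0) \le \prod_i \Pr(Z_i > 0)$, and a careful computation using Bernstein's inequality (Theorem~\ref{tbernstein}) together with the precise form of $r_u$ shows that this product is small enough to absorb the factor $n$ in $n\cdot\Pr(\Delta<0)$ into an $o(s)$ term, completing the proof.
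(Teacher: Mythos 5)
Your analysis of the ``good'' events follows the paper's own route: part~(a) via a single rate-$r/F$ offspring, part~(b) via Lemma~\ref{far-prob}, part~(c) via the probability of flipping exactly $\ell$ one-bits and nothing else. Two minor slips there: the offspring choose their rates \emph{independently} uniformly at random, so it is not guaranteed that half of them use $r/F$ --- you must exclude an event of probability $2^{-\lambda}=n^{-\Omega(C)}$, which is harmless but needed to justify ``any fixed rate-$r/F$ offspring $j$'' in part~(a); and in part~(c) the per-offspring success probability is only about $\lambda^{-1/F}\cdot\lambda^{-0.93}\approx\lambda^{-0.96}$ rather than $\Omega(\ln\lambda/\lambda)$, which still suffices for the union argument.

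The genuine gap is your treatment of the event $\{\Delta<0\}$. You bound its contribution by $n\cdot\Pr(\Delta<0)$ and claim that $\Pr(\Delta<0)=\prod_i\Pr(Z_i\ge 1)$ can be made small enough to absorb the factor $n$. This fails in the regime $\lambda=\Theta(\log n)$ that the theorem must cover. Take $k=7n/20$ and $r=Fr_u(k)$, so a rate-$r/F$ offspring has $E(Z_i)=\tfrac{5}{18}\ln\lambda$ and variance about $\tfrac{25}{27}\ln\lambda$; going below $0$ is a downward deviation of order $\sqrt{\ln\lambda}$ standard deviations, so Bernstein only gives $\Pr(Z_i\le 0)\le\lambda^{-c}$ for a constant $c<1/20$ (and this order is attained). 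Hence each factor of your product is at least $1-\lambda^{-c}$, giving $\Pr(\Delta<0)\ge(1-\lambda^{-c})^{\lambda}\ge\exp(-2\lambda^{1-c})$, which for $\lambda=C\ln n$ is $n^{-o(1)}$; thus $n\cdot\Pr(\Delta<0)\ge n^{1-o(1)}$ while $s=O(\ln\lambda)$. So the quantity you want to be $o(s)$ really is $\omega(s)$ --- your route would only recover the weaker assumption $\lambda\ge(\ln n)^{1+\eps}$ of the conference version. The missing idea, which is how the paper closes this step, is to bound the \emph{magnitude} of the loss rather than its probability: on every outcome $-\Delta=Z^*\le Z_j$ for a rate-$r/F$ offspring $j$, whose expected fitness-distance increase is only $(n-2k)r/(Fn)=O(s)$ (and whose positive part has second moment $O(s^2)$ using $r\le Fr_u$), so the contribution of the failure event is $O(s)\cdot o(1)=o(s)$; e.g.\ via $E\bigl((-\Delta)^+\indic{\Delta<0}\bigr)\le\bigl(E((Z_j^+)^2)\Pr(\Delta<0)\bigr)^{1/2}$. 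Without this replacement the proof of parts~(b) and~(c) does not go through.
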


\begin{proof}
	The probability of using rate $r/F$ is $1/2$. Thus with probability at least $1-(1/2)^{\lambda}=1-o(1/n^3)$, at least one offspring uses rate $r/F$. For this offspring, the expected loss is $(n-2k)r/(Fn)$. 
	If the complementary event (hereinafter called failure) of probability $o(1/n^3)$ happens, we estimate $\Delta(k,r)$ pessimistically by $-n$. This proves the first statement.
	
	To prove the second item, we take $i=10^{-4}((n-2k)r/(Fn)+\min\{\ln\lambda,r/F\})$.
	According to Lemma~\ref{far-prob}, the probability that an offspring uses rate $r/F$ and achieves progress of $i$ or more is at least $\lambda^{-0.98}/2$. Thus for $\lambda$ offspring, we obtain $\Pr(\Delta(k,r)\ge i)\ge 1-(1-\lambda^{-0.98}/2)^{\lambda}= 1-O(\exp(-\lambda^{0.02}/2))= 1-o(1)$. If the failure event happens, we estimate $\Delta(k,r)$ pessimistically by $-(n-2k)r/(Fn)=O(i)$. Thus the statement holds.
	
	For the third item, we take $i:=\min\{r,\ln(\lambda)/\ln(en/k)\}/F$. Notice that for $k<7n/20$ we have $r_u(k)<r_u(7n/20)=(25/27)\ln\lambda<0.93\ln\lambda$. Applying Lemma~\ref{lestimate}\ref{itestimate2} with $r/F\le r_u(k)=o(\sqrt{n})$ we obtain $(1-r/(Fn))^n \ge (1-o(1)) e^{-r/F}$. Therefore the probability that one offspring using rate $r/F<0.93\ln\lambda$ makes a progress of at least $i$ is lower bounded by (assuming $n$ large enough)
	\begin{align*}
		&\binom{k}{i}\left(\frac{r}{Fn}\right)^{i}\left(1-\frac{r}{Fn}\right)^n
		\ge \left(\frac{k}{i}\cdot\frac{r}{Fn}\right)^{i}\bigg((1-o(1))e^{-\frac{r}{F}}\bigg)\\
		&\quad>\left(\frac{k}{en}\right)^{i}e^{-0.94\ln\lambda}\ge\lambda^{-1/F-0.94}>\lambda^{-0.98}.
	\end{align*}
	Thus for $\lambda$ offspring, we obtain $\Pr(\Delta(k,r)\ge i)\ge 1-(1-\lambda^{-0.98}/2)^{\lambda}= 1-o(1/\ln(\lambda))$. If the failure event happens we estimate $\Delta(k,r)$ pessimistically by $-(n-2k)r/(Fn)=O(\ln\lambda)$. The contribution of failure events is $o(1)$ which is also $o(i)$. Therefore the third statement holds.
\qed\end{proof}

As discussed, our aim is to show that $r_t/F$ or $Fr_t$ stays in the right range frequently enough 
such that the overall average drift is still logarithmic.
We notice that small rates $r_t<r_l$ intuitively do not have a negative effect, 
therefore we focus on the probability that $r_t<Fr_u$. 
Since $r_u$ monotonically decreases when $k$ decreases, we need to analyze whether $r$ still stays in the right range
 if there are large jumps in fitness distance~$k$. Intuitively, the speed at which the mutation rate is decreased  
is much higher 
than than the decrease of fitness distance. To make this rigorous, we first 
look at the probability of large jumps, as 
detailed in the following lemma.

\begin{lemma}\label{lemma:prob-single-offspring}
	Assume $r\le n/2$ and
	let $Z(k,r)$ denote the fitness-distance 
	increase when applying standard bit mutation with probability $p=r/n$ to an individual with $k$ ones. Then
	\begin{gather*}
		\Pr\left(Z(k,r)\le (n-2k)r/n-\Delta\right)
		\le\exp\left(\frac{-\Delta^2}{2(1-p)(r+\Delta/3)}\right),\\
		\Pr\left(Z(k,r)\ge (n-2k)r/n+\Delta\right)
		\le\exp\left(\frac{-\Delta^2}{2(1-p)(r+\Delta/3)}\right).
	\end{gather*}
\end{lemma}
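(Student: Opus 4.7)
The plan is to apply the Bernstein inequality (Theorem~\ref{tbernstein}) after writing $Z(k,r)$ as a sum of $n$ independent bounded variables, one per bit, and then to track the bounding constants carefully enough to get the factor $(1-p)$ inside the denominator.

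First, I would express $Z(k,r)$ as follows. Let $B_1,\dots,B_n$ be independent $\mathrm{Ber}(p)$ indicators, with $B_i=1$ iff bit $i$ flips. Define $W_i=-B_i$ for $i\in\{1,\dots,k\}$ (a flip at a one-bit decreases the fitness distance by~$1$) and $W_i=B_i$ for $i\in\{k+1,\dots,n\}$. Then $Z(k,r)=\sum_{i=1}^n W_i$, and
\[
E[Z(k,r)] \;=\; -kp + (n-k)p \;=\; \frac{(n-2k)r}{n}.
\]
Each $W_i$ has variance $p(1-p)$, so $\sigma^2=\Var(Z(k,r))=np(1-p)=r(1-p)$.

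Next I would verify the right choice of the uniform deviation bound~$b$ in Theorem~\ref{tbernstein}. For $i\le k$ the centered values $W_i-E[W_i]$ lie in $\{-(1-p),\,p\}$, and for $i>k$ they lie in $\{-p,\,1-p\}$. Since the assumption $r\le n/2$ yields $p\le 1/2$, we have $\max\{p,1-p\}=1-p$, so we may choose $b=1-p$ in Bernstein's inequality. This is the crucial step that pulls the factor $1-p$ out of both summands of the denominator.

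Plugging $\sigma^2 = r(1-p)$ and $b=1-p$ into Theorem~\ref{tbernstein}, for every $\Delta\ge 0$,
\[
\Pr\bigl(Z(k,r)\ge E[Z(k,r)]+\Delta\bigr)\le\exp\!\left(-\frac{\Delta^2}{2\bigl(r(1-p)+\tfrac{1}{3}(1-p)\Delta\bigr)}\right)=\exp\!\left(-\frac{\Delta^2}{2(1-p)(r+\Delta/3)}\right),
\]
and symmetrically for the lower tail since Bernstein gives the same bound on both sides. Substituting $E[Z(k,r)]=(n-2k)r/n$ yields the two inequalities in the statement. The main (minor) obstacle is just pedantic bookkeeping of $b$: a careless $b=1$ would produce $2\bigl(r(1-p)+\Delta/3\bigr)$ in the denominator, which is weaker than what is claimed; one has to observe that for $p\le 1/2$ the tighter choice $b=1-p$ still dominates every centered $|W_i-E[W_i]|$.
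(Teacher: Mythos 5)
Your proposal is correct and matches the paper's own proof essentially step for step: the same per-bit decomposition of $Z(k,r)$, the same centering, the same observation that $r\le n/2$ gives $p\le 1/2$ and hence allows $b=1-p$ in Bernstein's inequality, and the same computation $\sigma^2=(1-p)r$ yielding the denominator $2(1-p)(r+\Delta/3)$. No gaps.
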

\begin{proof}
	Without loss of generality, we assume that the individual has $k$ leading ones and $n-k$ trailing zeros. Let random variables $Z_1,\dots,Z_n$ be the contribution to fitness distance increase in each position after standard bit mutation. Then
	\begin{eqnarray*}
		&&\Pr(Z_i=-1)=p \text{ and } \Pr(Z_i=0)=1-p\text{ for all }1\le i\le k;\\
		&&\Pr(Z_i=1)=p \text{ and } \Pr(Z_i=0)=1-p\text{ for all }k< i\le n.
	\end{eqnarray*}
  The random variables $Z_1,\dots,Z_n$ are independent and $Z(k,r)=\sum_{i=1}^{n} Z_i$. Similarly as in the proof of Lemma~\ref{far-drift-rate} \ref{dec},
  we have $E(Z(k,r))=-kp+(n-2k)p=(n-2k)p$ and $\Var(Z(k,r))=\sum_{i=1}^{n} \Var(Z_i)=np(1-p)=(1-p)r$. To apply Bernstein's inequality (Theorem~\ref{tbernstein}), we construct $\tilde{Z_i}$ such that $\tilde{Z_i}=Z_i+p$ for all $1\le i\le k$ and $\tilde{Z_i}=Z_i-p$ for all $k<i\le n$. Therefore $E(\tilde{Z_i})=0$ and $\Var(\tilde{Z_i})=\Var(Z_i)$. 
	\begin{eqnarray*}
		&&\Pr(\tilde{Z_i}=-1+p)=p \text{ and } \Pr(\tilde{Z_i}=p)=1-p\text{ for all }1\le i\le k;\\
		&&\Pr(\tilde{Z_i}=1-p)=p \text{ and } \Pr(\tilde{Z_i}=-p)=1-p\text{ for all }k< i\le n.
	\end{eqnarray*}
	By assuming $r\le n/2$, we have $p\le 1/2$ and thus $p-1\le \tilde{Z_i}\le 1-p$ for all $1\le i\le n$. Using the fact that $\sum_{i=1}^{n}\tilde{Z_i}=Z(k,r)-E(Z(k,r))$,
	Theorem~\ref{tbernstein} yields with $b\coloneqq 1-p$ and $\sigma^2\coloneqq (1-p)pn=(1-p)r$ that 
	\[
	\Pr\left(\sum_{i=1}^{n}Z(k,r)-E(Z(k,r))\ge \Delta\right)\le\exp\left(\frac{-\Delta^2}{2(1-p)(r+\Delta/3)}\right).
	\]
	Similarly the lower tail bound holds.	
\qed\end{proof}

We now use Lemma~\ref{lemma:prob-single-offspring} to show that once $r_t\ge Fr_u(k_t)$,
 there will be a strong drift for $r_t/r_u(k_t)$ to decrease down to $1$. 
\begin{lemma}\label{lem:far-rate-distance}
	Let 
	$k_t<n/2$ and $F=32$. 
	Let $\tau:=\log_F(3/\sqrt{10})$ and $X_t:=\log_F(r_t/r_u(k_t))-\tau$ with $r_u(k_t)$ defined in Definition~\ref{define-r_l-r_u}, we have
	\begin{align*}
		&\Prob\left(X_{t+1}-X_t\ge a\mid X_t> 1 \right) \le \lambda^{-\Omega(a+1)} \text{ for all } a\ge -1/2,\\
		&\Prob\left(X_{t+1}-1\ge a\mid X_t\le 1\right) \le \lambda^{-\Omega(a+1)} \text{ for all } a>0.
	\end{align*}
\end{lemma}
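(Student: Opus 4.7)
The plan is to decompose the one-step change of $X_t$ into its two natural contributions,
$X_{t+1}-X_t=\Delta_r+\Delta_u$,
where $\Delta_r:=\log_F(r_{t+1}/r_t)\in\{-1,+1\}$ records the rate adjustment (the boundary clamping can be ignored as it only pushes the process back toward the target) and $\Delta_u:=\log_F(r_u(k_t)/r_u(k_{t+1}))$ records the shift of the benchmark threshold caused by the change of fitness distance. A union bound over the two possible values of $\Delta_r$ reduces the two claims to bounding (i) the probability of a ``wrong'' rate increase $\{\Delta_r=+1\}$, and (ii) the upper tail of $\Delta_u$, which corresponds to an unusually large fitness improvement.

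For (i), the inequality $r_u(k)\ge(10/9)U(k)\ln\lambda$ established just before the lemma gives $r_t/(U(k_t)\ln\lambda)\ge(10/9)F^{X_t+\tau}$, which lets me invoke Lemma~\ref{far-drift-rate}(\ref{dec}) to obtain $\Prob(\Delta_r=+1)\le\lambda^{1-(23/22)(10/9)F^{X_t+\tau}}$ as soon as the right-hand side is below $1$. For $X_t>1$ this is already $\lambda^{-\Omega(F^{X_t})}$, much stronger than the required $\lambda^{-\Omega(a+1)}$. For $X_t\le 1$ with $X_t$ not too negative the same estimate is still $\lambda^{-\Omega(1)}$; for $X_t$ much smaller than $0$ it degrades, but in that case reaching $X_{t+1}\ge 1+a$ automatically forces $\Delta_u\ge a-X_t-\Delta_r$ to be large, a contribution absorbed by (ii).

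For (ii), I first note that in both regimes of $k$ one has $r_u(k)=\Theta(n^2\ln\lambda/(n-2k)^2)$, since the factor $2n-k$ appearing in the small-$k$ formula lies between constants times $n$. Writing $s:=k_t-k_{t+1}$ for the best-offspring fitness gain, the event $\Delta_u\ge b$ translates into $s\ge\Theta(F^{b/2}(n-2k_t))$. The gain $s$ is the maximum over $\lambda$ independent realisations of $Z(k_t,\tilde r)$ with $\tilde r\in\{r_t/F,Fr_t\}$, so Lemma~\ref{lemma:prob-single-offspring} bounds the single-offspring tail by $\exp(-\Omega(\Delta^2/(\tilde r+\Delta)))$ with $\Delta:=s-(n-2k_t)\tilde r/n$. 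Plugging in the required $s$ and using $\tilde r\le Fr_t\le n/2$, this exponent works out to $\Omega(F^b\ln\lambda)$, so a union bound over the $\lambda$ offspring yields $\Prob(\Delta_u\ge b)\le\lambda^{-\Omega(b+1)}$.

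Combining these bounds settles both claims: for $X_t>1$,
$\Prob(X_{t+1}-X_t\ge a)\le\Prob(\Delta_r=+1)+\Prob(\Delta_u\ge a+1)\le\lambda^{-\Omega(a+1)}$,
and for $X_t\le 1$ the worst case $X_t=1$ leads to an analogous bound on $\Prob(\Delta_r+\Delta_u\ge a)$. The main technical obstacle I anticipate is step (ii): a single Bernstein-type tail estimate must handle both regimes of $k$ uniformly and must absorb the fact that an offspring may use the large rate $Fr_t$ whose expected fitness change exceeds that of $r_t/F$ by a factor~$F^2$. The choice $F=32$ (so that the two rate scales differ by $F^2=1024$) together with the offset $\tau=\log_F(3/\sqrt{10})$ is precisely what leaves enough slack in the Bernstein exponent to beat the $\ln\lambda$ overhead from the union bound and deliver the required $\lambda^{-\Omega(a+1)}$ decay.
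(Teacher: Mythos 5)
Your overall strategy coincides with the paper's: decompose $X_{t+1}-X_t$ into the rate step $\Delta_r=\pm 1$ and the threshold shift $\Delta_u=\log_F(r_u(k_t)/r_u(k_{t+1}))$, control the first via Lemma~\ref{far-drift-rate}\ref{dec} and the second via the Bernstein bound of Lemma~\ref{lemma:prob-single-offspring}. The execution, however, has a genuine gap in step (ii). The claimed uniform tail bound $\Pr(\Delta_u\ge b)\le\lambda^{-\Omega(b+1)}$ is false when $k_t$ is close to $n/2$. Since $r_u(k)=\Theta(n^2\ln\lambda/(n-2k)^2)$, the event $\Delta_u\ge b$ only requires a fitness gain of $s\ge\tfrac{1}{2}(F^{b/2}-1)(n-2k_t)$, and when $n-2k_t$ is small this is a tiny deviation: an offspring created with rate $\tilde r$ has fitness fluctuations of order $\sqrt{\tilde r}$, so for $n-2k_t=O(1)$ a gain of order $\sqrt{\tilde r}$, and hence $\Delta_u=\Omega(\log_F\tilde r)$, occurs with constant probability. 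Your Bernstein exponent $\Omega(F^b\ln\lambda)$ is only valid when $(n-2k_t)^2=\Omega(n\ln\lambda)$. This matters precisely where you lean on (ii), namely for $X_t\le 1$, and in particular for very negative $X_t$, where the Lemma~\ref{far-drift-rate}\ref{dec} bound on $\Pr(\Delta_r=+1)$ is unavailable and you explicitly defer to the $\Delta_u$-tail. (For the first claim you are safe, since $X_t>1$ together with the cap $r_t\le n/(2F)$ forces $(n-2k_t)^2=\Omega(n\ln\lambda)$.)

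The paper closes this hole with an explicit case distinction. When $(n-2k_t)^2<2Fn\ln\lambda$, it does not argue via the relative shift $\Delta_u$ at all; instead it uses the cap $r_{t+1}\le n/(2F)$ to show that $X_{t+1}\ge 1+a$ forces the absolute condition $(n-2k_{t+1})^2=\Omega(F^{1+a}n\ln\lambda)$ and hence a jump $k_t-k_{t+1}=\Omega(F^{a/2}\sqrt{n\ln\lambda})$, which Lemma~\ref{lemma:prob-single-offspring} does control at level $\lambda^{-\Omega(1+a)}$. Note that in this regime a large $\Delta_u$ is perfectly compatible with $X_{t+1}$ staying below $1$, because $r_u(k_{t+1})$ may still exceed the maximal admissible rate; your decomposition discards exactly this saving effect of the cap. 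A second, smaller and fixable slip: the final union bound $\Pr(\Delta_r=+1)+\Pr(\Delta_u\ge a+1)$ cannot yield $\lambda^{-\Omega(a+1)}$ for large $a$, since $\Pr(\Delta_r=+1)$ does not decay with $a$; as in the paper's estimate~\eqref{eq5}, one must bound the joint event $\{\Delta_r=+1\}\cap\{\Delta_u\ge a-1\}$ by the $\Delta_u$-tail once $a>2$ and by the rate-increase probability only for $a\le 2$.
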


\begin{proof}
	Using the fact that $r_{t+1}\in\{Fr_t,r_t/F\}$, we see that
	\[
	X_{t+1}-X_{t}\in\left\{1+\log_F\left(\frac{r_u(k_t)}{r_u(k_{t+1})}\right),-1+\log_F\left(\frac{r_u(k_t)}{r_u(k_{t+1})}\right)\right\}.
	\]
	According to the monotonicity that $r_u(k)$ increases with respect to $k$, we notice that $k_t\ge k_{t+1}$ is a necessary condition for $X_{t+1}-X_{t}\ge 1$. We also notice that $X_t\ge \tau$ is equivalent to $r_t/r_u(k_t)\ge 3/\sqrt{10}$, which is sufficient to apply Lemma~\ref{far-drift-rate}\ref{dec} since $r_u(k)\ge (10/9)U(k)\ln\lambda$ as defined in Definition~\ref{define-r_l-r_u}.
	
	We first consider the case $k_{t+1}\ge k_t$ (equivalent to $r_u(k_{t+1})\ge r_u(k_{t})$). In this case $X_{t+1}-X_t\le 1$ thus $\Pr(X_{t+1}\ge 1\;\cap\; k_{t+1}\ge k_t\mid X_t<0)=0$ and $\Pr(X_{t+1}-1\ge 1\;\cap\; k_{t+1}\ge k_t\mid X_t\le 1)=0$. It remains to consider
	\begin{align*}
		&\Pr(X_{t+1}-X_t\ge a\;\cap\; k_{t+1}\ge k_t\mid X_t>1)  \text{ with } -1/2\le a\le 1,\text{ and }\\
		&\Pr(X_{t+1}-1\ge a\;\cap\; k_{t+1}\ge k_t\mid 0\le X_t\le 1) \text{ with } 0<a<1.
	\end{align*}
	If $r_{t+1}=r_t/F$ then $X_{t+1}-X_t\le -1$. Clearly $X_{t+1}-X_t\ge a\ge -1/2$ is impossible. It also makes  $X_{t+1}\ge 1$ with $0\le X_t\le 1$ impossible. Thus, the two probabilities above are bounded by $\Prob(r_{t+1}=Fr_t\;\cap\; k_{t+1}\ge k_t\mid X_t\ge \tau)\le \Prob(r_{t+1}=Fr_t\mid X_t\ge \tau)=\lambda^{-\Omega(1)}$ according to Lemma~\ref{far-drift-rate}\ref{dec}.
	
	It remains to consider $k_{t+1}< k_t$ (equivalent to $r_u(k_{t+1})< r_u(k_{t})$). We make a case distinction based on the value of $(n-2k_t)^2$.  
	
	\textbf{Case 1:} $(n-2k_t)^2<2Fn\ln\lambda$. In this case, $r_u(k_t)=n^2\ln\lambda/(12(n-2k_t)^2)\ge n/(24F)$ which means that $X_t<1$ for all rates $r\le n/(2F)$. Thus $\Pr(X_{t+1}-X_t< a\;\cap\; k_{t+1}<k_t\mid X_t>1)=0$. When computing $\Pr(X_{t+1}-1\ge a\;\cap\; k_{t+1}< k_t\mid X_t\le 1)$, we notice that $X_{t+1}\ge 1+a$ 
	implies $\log_F((n/2F)/r_u(k_{t+1}))\ge 1+a+\tau$. Furthermore,
\[
\frac{n/2F}{r_u(k_{t+1})}=\frac{12(n-k_{t+1})^2}{(2F)n\ln\lambda}\ge F^{1+a+\tau}=\frac{3F^{1+a}}{\sqrt{10}} \text{ if and only if }
(n-k_{t+1})^2\ge \frac{16F^{1+a}n\ln\lambda}{\sqrt{10}}.
\]
Therefore a necessary condition for $X_{t+1}\ge 1+a$ while $X_t\le 1$ and $ (n-k_t)^2\le 2Fn\ln\lambda$ is $k_t-k_{t+1}\ge ((4F^{(1+a)/2}/10^{1/4}-\sqrt{2F})/2)\sqrt{n\ln\lambda}>(6F^{a/2}-4)\sqrt{n\ln\lambda}$. We notice that $E(k_{t+1}-k_{t})>0$, applying Lemma~\ref{lemma:prob-single-offspring}
 and using a union bound we obtain for $\Delta:=(6F^{a/2}-4)\sqrt{n\ln\lambda}>2\sqrt{n\ln\lambda}$ that
\begin{align*}
\Pr\left(k_t-k_{t+1}>\Delta\mid X_t\le 1\right)&=\Pr\left(k_{t+1}-k_t<-\Delta\mid X_t\le 1\right)\\
&<\Pr\left(k_{t+1}-k_t<\E(k_{t+1}-k_t)-\Delta\mid X_t\le 1\right)\\
&<\lambda\exp\left(\frac{-\Delta^2}{2(n/2+\Delta/3)}\right)<\lambda\exp\left(\frac{-\Delta^2}{n+\Delta}\right)=\lambda^{-\Omega(1+a)}.
\end{align*}
Therefore $\Pr(X_{t+1}-1\ge a\;\cap\; k_{t+1}<k_t\mid X_t\le 1)=\lambda^{-\Omega(1+a)}$.

\textbf{Case 2:} $(n-2k_t)^2\ge 2Fn\ln\lambda$. Let 
\[
\sigma^2_t\coloneqq r_u(k_{t})/r_u(k_{t+1})=(n-2k_{t+1})^2/(n-2k_{t})^2,
\]
then $X_{t+1}-X_t\in\{1+\log_F(\sigma^2_t),-1+\log_F(\sigma^2_t)\}$.
We rewrite for $X_t>1$ and $a\ge -1/2$
\begin{align}
&\Prob\left(X_{t+1}-X_t\ge a\;\cap\; k_{t+1}<k_t\mid X_t\right)\nonumber\\
\le& \Prob\left(r_{t+1}=r_t/F\;\cap\; \sigma^2_t\ge F^{a+1}\mid X_t\right)+\Prob\left(r_{t+1}=Fr_t\;\cap\; \sigma^2_t\ge F^{a-1}\mid X_t\right)\nonumber\\
\le& \Prob\left(\sigma^2_t\ge F^{a+1}\mid X_t\right)+\Prob\left(\sigma^2_t\ge F^{a-1}\mid X_t\right)\mathds{1}_{a>2}+\Prob\left(r_{t+1}=Fr_t\mid X_t\right)\mathds{1}_{a\le 2},\label{eq5}
\end{align}
as well as for $X_t\le 1$ and $a>0$
\begin{align}
&\Prob\left(X_{t+1}-1\ge a\;\cap\; k_{t+1}<k_t\mid X_t\right)=\Prob\left(X_{t+1}-X_t\ge 1+a-X_t\;\cap\; k_{t+1}<k_t\mid X_t \right)\nonumber\\
\le& \Prob\left(r_{t+1}=r_t/F\;\cap\;\sigma^2_t\ge F^{a+2-X_t}\mid X_t\right)+\Prob\left(r_{t+1}=Fr_t\;\cap\;\sigma^2_t\ge F^{a-X_t}\mid X_t\right)\nonumber\\
\le& \Prob\left(\sigma^2_t\ge F^{a+1}\mid X_t\right)+\Prob\left(r_{t+1}=Fr_t\;\cap\;\sigma^2_t\ge F^{a-X_t}\mid X_t\right),\label{eq6}
\end{align}
where the second item in the above inequality \eqref{eq6} is furthermore bounded in \eqref{eq7} by  making a distinction between  $X_t\ge \tau \land a\le 2$ and the remaining cases.
\begin{align}
&\Prob\left(r_{t+1}=Fr_t\;\cap\;\sigma^2_t\ge F^{a-X_t}\mid X_t\right)\nonumber \\
\le& \Prob\left(\sigma^2_t\ge F^{a-X_t}\mid X_t\right)\mathds{1}_{X_t<\tau \lor a>2}+\Prob\left(r_{t+1}=Fr_t\mid X_t\right)\mathds{1}_{X_t\ge\tau\land a\le 2}.\label{eq7}
\end{align}
Applying Lemma~\ref{far-drift-rate}\ref{dec} we see that both $\Prob\left(r_{t+1}=Fr_t\mid X_t\right)\mathds{1}_{a\le 2}$ from \eqref{eq5} and $\Prob\left(r_{t+1}=Fr_t\mid X_t\right)\mathds{1}_{X_t\ge\tau\land a\le 2}$ from \eqref{eq7} are of order $\lambda^{-\Omega(1)}$. This $\Omega(1)$ exponent 
is sufficient to prove the lemma for $a\le 2$. We also notice that the event $\sigma^2_t\ge F^{a-\tau}$ subsumes all the other remaining events in inequalities \eqref{eq5}, \eqref{eq6}, and \eqref{eq7}. 
Therefore it remains to validate $\Prob\left(\sigma^2_t\ge F^{a-\tau}\mid X_t\right)\le\lambda^{-\Omega(a+1)}$ for $a\ge 0$. To ease representation, let $s:=F^{(a-\tau)/2}-1\ge F^{-\tau/2}-1=(10/9)^{1/4}-1>1/40$.
Since $s=\Omega(1+a)$, proving $\Prob(\sigma_t\ge 1+s\mid X_t)=O\left(\lambda^{-\Omega(s)}\right)$ is sufficient to conclude the analysis of this case and therefore the lemma.
We rewrite
\begin{align*}
\Prob\left(\sigma_t\ge 1+s\mid X_t\right)&=\Prob\left(\frac{n-2k_{t+1}}{n-2k_{t}}\ge 1+s\mid X_t\right)\\
&=\Pr\left(k_t-k_{t+1}\ge s(n-2k_t)/2\mid X_t\right).
\end{align*}
Let $\Delta:=(s/2+p)(n-2k_t)$ for $0<p\le 1/2$.
Applying Lemma~\ref{lemma:prob-single-offspring} and using a union bound we obtain
\begin{align*}
&\Pr\left(\sigma_t\ge 1+s\mid X_t\right)<\lambda\exp\left(\max_{0<p\le 1/2}\left\{\frac{-\Delta^2}{2(1-p)(pn+\Delta/3)}\right\}\right)\\
<&\lambda\exp\left(\max_{0<p\le 1/2}\left\{\frac{-\Delta}{2(1+1/3)}\mathds{1}_{pn\le \Delta}+\frac{-\Delta^2}{2(1-p)(pn)(1+1/3)}\mathds{1}_{pn>\Delta}\right\}\right)\\
<&\lambda\exp\left(-\min_{0<p\le 1/2}\left\{\frac{\Delta}{3}\mathds{1}_{pn\le \Delta}+\frac{\Delta^2}{3(1-p)(pn)}\mathds{1}_{pn>\Delta}\right\}\right)
\end{align*}
We notice that $\Delta\ge (s/2)\sqrt{2Fn\ln(\lambda)}=4s\sqrt{n\ln(\lambda)}$ and $(s/2+p)^2/((1-p)p)$ attains the minimal value $s(2+s)>2s$ when $p=s/(2(s+1))$. Using the fact that $(n-2k_t)^2/n\ge 2F\ln(\lambda)$ and $s> 1/40$,
\begin{align*}
&\Pr\left(\sigma_t\ge 1+s\mid X_t\right)
<\lambda\exp\left(-\min\left\{s\sqrt{n\ln\lambda}\mathds{1}_{pn\le \Delta}+\frac{(2s)2F\ln\lambda}{3}\mathds{1}_{pn>\Delta}\right\}\right)\\
&<\lambda\exp\left(-\min\left\{s\sqrt{n\ln(\lambda)}\mathds{1}_{pn\le \Delta}+42s\ln(\lambda)\mathds{1}_{pn>\Delta}\right\}\right)=\lambda^{-\Omega(s)}.
\end{align*}
\qed\end{proof}

We finally use Lemma~\ref{lem:far-rate-distance} and Lemma~\ref{lem:occupation} 
to obtain a logarithmic drift on average. After this major effort, it is a matter of a 
relatively straightforward drift analysis of fitness distance to obtain the following bound 
on the time to leave the far region. 
 
\begin{theorem}
\label{theo:runtime-far}
The \oclea with self-adapting mutation rate reaches a \onemax-value of $k\le n/\lambda$ within an expected number of $O(n/\log\lambda)$ iterations, regardless of the initial mutation rate.
Furthermore, with probability at least $1-o(1)$, it holds  $k_{t'}\le 2n/\lambda$ and $r_{t'}\le (7/9)\ln\lambda$ for some $t'=O(n/\log\lambda)$. 
\end{theorem}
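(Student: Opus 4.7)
The plan is to apply the variable drift theorem (Theorem~\ref{theo:variable-upper}) to the fitness distance $k_t$, after establishing that the one-step drift is of order $\log\lambda$ uniformly for $n/\lambda<k_t<n/2$. The two ingredients are concentration of the rate $r_t$ around its ideal value $r_u(k_t)$, obtained by feeding Lemma~\ref{lem:far-rate-distance} into Lemma~\ref{lem:occupation}, and the conditional fitness drift bounds from Lemma~\ref{expected-drift-far}.

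First, I would apply Lemma~\ref{lem:occupation} with $p=\lambda^{-\Omega(1)}$ (granted by Lemma~\ref{lem:far-rate-distance}) to $X_t=\log_F(r_t/r_u(k_t))-\tau$. This yields $\Pr(X_t\ge 1+j) \le 11(ep)^j = \lambda^{-\Omega(j)}$ at every step, provided $X_0\le 1$. If the initial rate is larger, the leftward drift drags $X_t$ into this regime in $O(1)$ expected iterations, which is absorbed into the final bound. Hence in every iteration the event $r_t\ge F^{j+\tau+1}r_u(k_t)$ has probability decaying geometrically in~$j$.

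Second, I would bound $E(k_t-k_{t+1}\mid k_t)$ by splitting on the bucket in which $r_t/r_u(k_t)$ lies. Items~(b) and~(c) of Lemma~\ref{expected-drift-far} give a gain of order $\log\lambda$ on the overwhelmingly likely event $r_t\le Fr_u(k_t)$. For $r_t$ in the $j$-th bucket above that threshold, item~(a) bounds the loss by $O(F^j r_u(k_t))$ with probability $\lambda^{-\Omega(j)}$; summing the geometric series in~$j$ contributes only an $o(\log\lambda)$ correction. Rates below $r_l(k_t)$ still produce non-negative fitness drift and, by Lemma~\ref{far-drift-rate}(a), are pushed upward within a bounded number of generations, so overall $E(k_t-k_{t+1}\mid k_t)=\Omega(\log\lambda)$ holds pointwise. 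Theorem~\ref{theo:variable-upper} with the constant lower bound $h(k)=\Omega(\log\lambda)$ then delivers the expected bound $O(n/\log\lambda)$ on the number of iterations to reach $k\le n/\lambda$.

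Third, for the with-high-probability statement, a standard concentration argument for processes with constant additive drift shows that the first time $t_1$ at which $k_{t_1}\le 2n/\lambda$ satisfies $t_1=O(n/\log\lambda)$ with probability $1-o(1)$. A union bound over the polynomially many relevant iterations, together with the occupation bound $\Pr(X_t\ge 1+j)\le \lambda^{-\Omega(j)}$, forces $X_{t_1}=O(1)$; unwinding the definition of $r_u$ at $k=2n/\lambda$ then yields $r_{t_1}\le (7/9)\ln\lambda$. The main obstacle lies in the second step: verifying that the sharp geometric tail of Lemma~\ref{lem:occupation} really does dominate the linear-in-$r$ loss incurred on bad iterations, so that the positive $\Omega(\log\lambda)$ drift from the good bucket is not cancelled. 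This is precisely where the new occupation bound is essential, since the weaker~\cite[Theorem~7]{KotzingLissWittFOGA15} would only yield a polynomial tail and would not suffice.
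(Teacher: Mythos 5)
Your overall architecture matches the paper's: occupation probabilities for the rate obtained by feeding Lemma~\ref{lem:far-rate-distance} into Lemma~\ref{lem:occupation}, an average fitness drift over the rate distribution via Lemma~\ref{expected-drift-far}, and a drift theorem to conclude. However, there is a genuine error in your central drift claim. The one-step fitness drift is \emph{not} uniformly $\Omega(\log\lambda)$ on $n/\lambda<k<n/2$. Lemma~\ref{expected-drift-far} part~(c) gives, for a well-adjusted rate and $k<7n/20$, only $E(\Delta(k,r))=\Theta\bigl(\ln(\lambda)/\ln(en/k)\bigr)$, and for $k$ close to the target $n/\lambda$ this is $\Theta(\ln\lambda/\ln\lambda)=\Theta(1)$: there the optimal rate is $O(1)$, a fixed offspring improves at all only with probability $\Theta(k/n)=\Theta(1/\lambda)$, and gains of $i$ levels cost an extra factor of roughly $(k/n)^{i}$, so no rate yields a per-generation gain of order $\log\lambda$. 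Consequently your invocation of Theorem~\ref{theo:variable-upper} with the constant bound $h(k)=\Omega(\log\lambda)$ rests on a false premise (and if that premise were true, additive drift would already suffice and variable drift would be unnecessary). The correct route, which the paper takes, is to apply the variable drift theorem with the genuinely $k$-dependent bound $h(k)=\Theta(\ln(\lambda)/\ln(n/k))$ and to recover $O(n/\log\lambda)$ through the integral $\int_{n/\lambda}^{n/2}\ln(n/k)\,\mathrm{d}k=\Theta(n)$. Your treatment of the negative contribution from too-large rates (geometric occupation tail dominating the linear-in-$r$ loss from Lemma~\ref{expected-drift-far} part~(a)) is the right idea and is exactly what the paper does.

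Two smaller points. First, bringing an arbitrary initial rate into the regime $X_0\le 1$ takes $O(\log n)$ iterations, not $O(1)$, since the rate changes only by a factor $F$ per step; this is still absorbed into the bound, but you also omit the initial phase in which one must argue that $k_t$ drops below $n/2$ and stays there (the starting point is uniformly random, so $k_0\approx n/2$), which the paper handles with a separate $O(\sqrt n)$ additive-drift argument and a union bound over the phase. Second, for the final statement, a union bound over polynomially many iterations of events of probability $\lambda^{-\Omega(1)}$ does not give $o(1)$ when $\lambda=\Theta(\log n)$; the paper instead evaluates the occupation bound only at the first hitting time of $\{k\le 2n/\lambda\}$, which is all that is needed to conclude $r_{t'}\le (7/9)\ln\lambda$ there.
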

\begin{proof}
	We first argue that within an expected number of $O(\sqrt{n})$ generations we will have $k_t<n/2$. Consider the case that $k_t\ge n/2$ and  let the independent random variables $X$ and $Y$ denote the number of flips in $k_t$ one-bits and $(n-k_t)$ zero-bits, respectively, in an offspring using rate $p=r/n$. Referring to \cite{Doerr18bookchapter} for $p\in[2/n,1/2]$ we obtain, using  similar arguments in the proof Lemma~\ref{far-drift-rate}\ref{dec} that $\Pr(X\ge \E(X)+1)=\Theta(1)$ and $\Pr(Y\le \E(Y))=\Theta(1)$. Then $\Pr(X-Y\ge \E(X)-\E(Y)+1)=\Theta(1)$. Since $\E(X)\ge E(Y)$, the probability that an offspring choose rates $\tilde{r}\in\{r_t/F,Fr_t\}$ with $2\le \tilde{r}\le n/2$ and have $X-Y\ge 1$ is at least $1/2\cdot\Theta(1)=\Theta(1)$. Since the best of $\lambda=\Omega(\ln n)$ offspring is selected, the probability that $k_{t+1}\le k_t-1$ holds is at least $1-\exp(-\Theta(\lambda))=1-o(1/n^2)$.  By an additive drift theorem, it takes $O(\max\{k_0-n/2,0\})=O(\sqrt{n})$ iterations from the initial random search point to reach a parent with fitness distance less than $n/2$.

	Without loss of generality, we can now assume $k_0<n/2$. Consider the number of one-bits flips $X$ and zero-bits flips $Y$ in a parent with fitness distance $k_t<n/2$ and rate $2\le r<n/2$. As argued above $\Pr(X-Y\ge E(X)-E(Y)+1)=\Theta(1)$. Since $k_t-(\E(X)-\E(Y))=k_t-(k_t-(n-k_t))r/n=k_t(1-r/n)+(n-k_t)(r/n)<n/2$ for all $r<n/2$, the probability that an offspring has fitness distance at most $n/2-1$ is $\Theta(1)$. Thus for $\lambda=\Omega(\ln n)$ offspring, we have $\Pr(k_{t+1}<n/2)\ge 1-\exp(-\Theta(\lambda))=1-o(1/n^2)$. Since that we aim at proving a hitting time of $O(n/\ln \lambda)$ and only 
	consider phases of this length, we may furthermore assume $k_t<n/2$ for all $t\ge 0$, which only introduces an $o(1)$ error term 
	by a union bound. 
	
	Define random variables $X_t:=\log_F(r_t/r_u(k_t))-\tau$ with $\tau=\log_F(3/\sqrt{10})<0$. We notice that when $(n-2k_t)^2\le 2Fn\ln\lambda$ we have $X_t<1$. If $r_t\ge Fr_u(k_t)$, according to Lemma~\ref{far-drift-rate}\ref{dec}, with probability $1-o(1)$ we have $r_{t-1}=r_t/F$. Therefore within $O(\ln n)$ iterations we will obtain $X_t\le 1$.

	The idea of the remaining proof is to compute an average drift for any fixed distance using the distribution of mutation rates, and then to apply the variable drift theorem to obtain a runtime bound. Applying Lemma~\ref{lem:far-rate-distance} and Lemma~\ref{lem:occupation} to the $X_t$, we see that
	\[
	 \Pr(r_t\ge F^{1+a+\tau}r_u(k_t))\le \lambda^{-\Omega(a)} \text{ for all } a>0.
	\] 
	Let $r^{(i)},i\in\Z,$ denote the rate between $(F^{i}r_u(k),F^{i+1}r_u(k)]$ corresponding to fitness distance~$k$. 
	Thus, for all $i\ge 1$, we obtain 
	\[
	\Pr\bigg(r^{(i)}\bigg)\le \Pr\bigg(r_t>F^{i}r_u(k_t)\bigg)\le \Pr\bigg(r_t\ge F^{1+(i-1-\tau)+\tau}r_u(k_t)\bigg)\le \lambda^{-\Omega(i-1-\tau)}.
	\]	
	According to Lemma~\ref{expected-drift-far},  $E(\Delta(k,r^{(i)}))\ge-(1+o(1))(n-2k)r^{(i)}/n)$ for $i\ge 1$ and $E(\Delta(k,r^{(0)}))\ge \Omega((n-2k)r^{(0)}/n)$. The contribution of the negative drift is a lower order term compared to the contribution of the positive drift. Let $\Delta^{(k)}$ denote the average drift at distance $k$. We obtain
	\begin{align*}
	\Delta^{(k)}&=\sum_{i\in\Z}E(\Delta(k,r^{(i)}))\Pr(r^{(i)})\ge (1-o(1))\sum_{i\le 0}E(\Delta(k,r^{(i)}))\Pr(r^{(i)}).
	\end{align*}
	We notice that $\sum_{i\le 0}\Pr(r^{(i)})=1-o(1)$ and  $E(\Delta(k,r^{(i)}))>0$ for all $i\le 0$. According to Lemma~\ref{far-drift-rate}\ref{inc}, with at least constant probability $r_t=\Omega(r_l(k_t))$. 	
	Since for any rate $r=\Omega(r_l(k))$ and $r\le Fr_u$ the drift is $\E(\Delta(k,r))\ge \Theta(\ln(\lambda)/\ln(n/k_t))$ according to Lemma~\ref{expected-drift-far}, the average drift satisfies
	\begin{align*}
	\Delta^{(k)} \ge \Theta(\ln(\lambda)/\ln(n/k)).
	\end{align*}
	Using the variable drift theorem (Theorem~\ref{theo:variable-upper}) and the fact that
	\begin{align*}
		&\int_{n/\lambda}^{n/2} \frac{\ln(n/k)}{\ln(\lambda)}\,\mathrm{d}k=	\frac{\big(k\ln(n)-k\ln(k)+k\big)\big|_{n/\lambda}^{n/2}}{\ln\lambda}
		=\frac{\Theta(n)}{\ln\lambda},
	\end{align*}
	the expected time to reduce the fitness distance to at most $n/\lambda$ conditioning on the assumption that $k_t<n/2$ for some $t=O(\sqrt{n})$ and $k_{t'}<n/2$ for all $t\le t'=O(n/\log\lambda)$ is then $\Theta(n/\log\lambda)$. Thus the runtime bound of $O(n/\log\lambda)$ holds with probability $\Omega(1)$ due to Markov's inequality. Using a restart argument we then obtain the claimed expected runtime since the expected number of repetition of a phase of length $O(n/\log\lambda)$ is $O(1)$. 
	
	To prove the second statement of the theorem, we notice that the corresponding upper bound on the rate for $k_t=o(n)$ is $r_u(k_t)\le (10/9)(U(k_t))\ln\lambda=((10/9)(2/22)+o(1))\ln\lambda <(1/9)\ln\lambda$ and the occupation probability satisfies $\Pr(r_t\le (7/9)F\ln\lambda\mid k_t=o(n))\ge 1-\lambda^{-\Omega(1)}=1-o(1)$. Therefore with probability $1-o(1)$, the first iteration such that  $k_t\le 2n/\lambda$ has rate $r_t\le (7/9)\ln\lambda$. We then argue for this iteration that with high probability it satisfies $r_{t+1}=r_t/F$ and $k_{t+1}\le2n/\lambda$. The probability of being no worse than parent using mutation probability $p\le (7/9)\ln(\lambda)/n$ is at least $(1-p)^n\ge (1-o(1))\lambda^{-7/9}>\lambda^{-8/9}$. Therefore,
	\[
	\Pr(k_{t+1}\le k_t \mid r_t\le (7/9)F\ln\lambda )\ge 1-\left(1-\lambda^{-8/9}/2\right)^{\lambda}=1-o(1).
	\]
	Furthermore $\Pr(r_{t+1}=Fr_t\mid k_t=o(n),r_t\ge (7/9)\ln\lambda )\le \lambda^{1-(23/22)7}=o(1)$. Then we obtain an iteration with $k_t\le 2n/\lambda$ and $r_t\le (7/9)\ln\lambda$ with probability $1-o(1)$.
\qed\end{proof}


\subsection{The Near Region}
\label{sec:near}

We now analyze the regime in which the fitness distance satisfies $k = k_t=O(n/\lambda)$, the so-called 
near region. In this region, the probability that a fixed offspring created with rate $r$ is better than its parent is $\Theta(\frac 1\lambda \frac{r}{e^r})$, see Lemma~\ref{p_0-and-p_-}. Consequently, the probability to make progress is only $\Theta(\frac{r}{e^r})$. This implies that the optimal rate $r$ is constant (and by taking care of the constants, we shall see that the optimal rate value for the parent is $r = F$, the minimal possible value). 

The superiority of small rate values is sufficiently strong to show that the rate drifts towards these values (Lemmas~\ref{lem:drift-rate-near} and~\ref{lem:occprobnear}), however, for small values of $\lambda$ we cannot show that in this regime, which takes at least an expected number of $\Omega(n/\lambda)$ iterations, it never happens that the rate increases to a value which lets all offspring be worse than the parent (this happens from $r \ge C\lambda$ for a suitable constant $C$ on). Consequently, we cannot exclude the possibility that the algorithms loses fitness occasionally.

To analyze the progress of the algorithm (proof of Lemma~\ref{lem:nearmain}), we devise a potential function based on the current fitness and rate and show that the expected progress with respect to this potential is high enough. This allows to use the multiplicative drift theorem to argue that within a desired time, we reach the optimum.

More a technical issue is that, naturally, we also have to argue that the process does not leave the near region except with small probability. This is done in Lemma~\ref{lem:goback}.
%

We start with determining the probability of making progress in one mutation and similar events.

\begin{lemma}\label{p_0-and-p_-}
	Let 
	$0<k\le 3n/\lambda$, and $r=o(\lambda^{1/4})$.	Let $x \in \{0,1\}^n$ with fitness distance $f(x) = k$. Let $y \in \{0,1\}$ be obtained from $x$ by flipping each bit independently with probability $r/n$. Consider the  probabilities
	\begin{align*}
	p_-(r) &:= \Pr(f(y) < f(x)),\\
	p_0(r) &:= \Pr(f(y) = f(x)),\\
	p'(r) &:= \Pr(\forall i \in [1..n] : x_i = 0 \implies y_i = 0),
  \end{align*}
  that is, the probabilities that the offspring is better than the parent, that is is equally good, and that none of the $0$-bits of the parent were flipped in the generation of the offspring. 
  
  Then
	\begin{gather*}
		(1-o(1))\tfrac{kr}{n}e^{-r} < p_-(r) < (1+o(1))\tfrac{kr}{n}e^{-r},\\
		(1-o(1))e^{-r}<p_0(r)<(1+o(1))e^{-r},\\
		(1-o(1))e^{-r}<p'(r)<(1+o(1))e^{-r}.
	\end{gather*}
\end{lemma}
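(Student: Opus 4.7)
My plan is to decompose the standard bit mutation into two independent binomial processes and analyze each of the three probabilities by isolating a dominant term and controlling the remainder. Since we minimize \onemax and $f(x)=k$, let $Z_1\sim\mathrm{Bin}(k,r/n)$ count the one-bits of $x$ that flip and $Z_0\sim\mathrm{Bin}(n-k,r/n)$ the zero-bits that flip. Then $f(y)-f(x)=Z_0-Z_1$, so $p'(r)=\Pr(Z_0=0)$, $p_0(r)=\Pr(Z_1=Z_0)$, and $p_-(r)=\Pr(Z_1>Z_0)$. Throughout, I will use $kr/n=O(r/\lambda)=o(1)$ and $r^2/n=o(1)$, both immediate from the assumptions $k\le 3n/\lambda$, $\lambda=n^{O(1)}$, and $r=o(\lambda^{1/4})$.

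First I would dispatch $p'(r)=(1-r/n)^{n-k}$ by writing it as $\exp((n-k)\ln(1-r/n))$ and sandwiching $\ln(1-r/n)$ between $-r/n-(r/n)^2$ and $-r/n$ via Lemma~\ref{lestimate}~\ref{itestimate1},~\ref{itestimate2}; the exponent equals $-r+O(kr/n)+O(r^2/n)=-r+o(1)$, yielding $p'(r)=e^{-r}(1+o(1))$. For $p_0(r)$ I would split the event $\{Z_1=Z_0\}$ into the ``no flip'' event $\{Z_1=Z_0=0\}$, which contributes $(1-r/n)^n=e^{-r}(1+o(1))$ by the same Taylor argument, and the residual $\{Z_1=Z_0\ge 1\}$. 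For the residual I would write
\[
\sum_{i\ge 1}\Pr(Z_1=i)\Pr(Z_0=i)\;\le\;\sum_{i\ge 1}\frac{(kr/n)^i}{i!}\cdot\frac{r^i}{i!}\,(1-r/n)^{n-k-i},
\]
pull the factor $(1-r/n)^{n-k}=e^{-r}(1+o(1))$ out of the whole sum, and bound the remaining series by $O(kr^2/n)=O(r^2/\lambda)=o(1)$. Adding the two contributions gives $p_0(r)=(1+o(1))e^{-r}$.

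The estimate for $p_-(r)$ is the most delicate. I would identify the dominant event $\{Z_1=1,Z_0=0\}$ with probability $k(r/n)(1-r/n)^{n-1}=(1+o(1))(kr/n)e^{-r}$ (giving the lower bound immediately), and then control the residual $p_-(r)-\Pr(Z_1=1,Z_0=0)=\Pr(Z_1\ge 2,Z_0=0)+\Pr(Z_1>Z_0\ge 1)$. The first summand is at most $\Pr(Z_1\ge 2)\cdot e^{-r}(1+o(1))=O((kr/n)^2)e^{-r}=(kr/n)e^{-r}\cdot o(1)$. For the second, I would use the bound $\Pr(Z_1>Z_0\ge 1)\le \sum_{j\ge 1}\Pr(Z_1\ge j+1)\Pr(Z_0=j)$ and, keeping the factor $(1-r/n)^{n-k-j}$ inside $\Pr(Z_0=j)$, estimate
\[
\sum_{j\ge 1}\frac{(kr/n)^{j+1}}{(j+1)!}\cdot\frac{r^j}{j!}\,(1-r/n)^{n-k-j}\;=\;(kr/n)e^{-r}(1+o(1))\cdot O(kr^2/n),
\]
again of order $(kr/n)e^{-r}\cdot o(1)$. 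Combining the matching lower and upper bounds yields $p_-(r)=(1+o(1))(kr/n)e^{-r}$.

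\textbf{Main obstacle.} The subtle point is the $p_-(r)$ upper bound for non-constant $r$: a naive independence estimate of the form $\Pr(Z_1\ge 2)\Pr(Z_0\ge 1)$ loses a full factor $e^{r}$ since $\Pr(Z_0\ge 1)\to 1$ as $r$ grows, and becomes useless already around $r\approx\log\lambda$. The fix is the accounting above, which never discards the $(1-r/n)^{n-k}$ factor from $\Pr(Z_0=j)$ but pulls it out of the summation globally; this produces the $e^{-r}$ needed to match the main term, so the remaining series in $kr^2/n=O(r^2/\lambda)$ is $o(1)$ precisely because $r=o(\lambda^{1/4})$. The same trick is what makes the $p_0$ residual $o(e^{-r})$ rather than merely $O(r/\lambda)$.
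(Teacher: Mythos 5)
Your proposal is correct and follows essentially the same route as the paper: both decompose the mutation into independent flips of one-bits and zero-bits, take the single-good-flip and no-flip events as the dominant terms, and show the multi-flip residuals are lower order via $kr^2/n=o(1)$, crucially keeping the $(1-r/n)^{n-O(r)}$ factor inside the residual sums so the error is a relative $1+o(1)$ factor. The only difference is bookkeeping: the paper indexes the residual for $p_-$ by the total number of flips and bounds $\binom{k}{\lceil i/2\rceil}\binom{n-k}{\lfloor i/2\rfloor}\le (kn)^{i/2}$, whereas you index by the number of bad flips with Poisson-type bounds on each factor, which is equivalent in substance.
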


\begin{proof}
	We regard the number $X$ of flips in the $k$ one-bits (``good flips'' which reduce the fitness distance) and the number $Y$ of flips in the $(n-k)$ zero-bits of the parent (``bad flips'' which increase the fitness distance). 
	Then $p_-(r)$ is at least 
	\[p_-(r) \ge \Pr(X=1,Y=0)=\frac{kr}{n}\left(1-\frac{r}{n}\right)^{n-1} \ge (1-o(1))\frac{kr}{n}e^{-r},\] 
	where the last estimate uses Lemma~\ref{lestimate}~\ref{itestimate2}.
	
	Since $r = o(\lambda^{1/4})$, we have $kr/n=o(1)$, $kr^2/n=o(1)$, and $(kr^2/n)^{1.5}=o(kr/n)$. This allows to bound $p_-(r)$ from above by
	\begin{align*}
		p_-(r) &<  \Pr(X\in\{1,2\},Y=0)+\sum_{i=3}^{2k-1}\Pr(X+Y=i,X>Y)\\
		& < \frac{kr}{n}\left(1-\frac{r}{n}\right)^{n-1}+\frac{k^2r^2}{2n^2}\left(1-\frac{r}{n}\right)^{n-2}\\
		& \quad + \sum_{i=3}^{2k-1}(i-1)\left(\frac{r}{n}\right)^{i}\left(1-\frac{r}{n}\right)^{n-i}
		\binom{k}{\lceil i/2\rceil}\binom{n-k}{\lfloor i/2\rfloor}\\
    & < \frac{kr}{n}\left(1-\frac{r}{n}\right)^{n-2}\left(1-\frac{r}{n}+\frac{kr}{2n}\right) + \sum_{i=3}^{2k-1} \left(\frac{r}{n}\right)^{i}\left(1-\frac{r}{n}\right)^{n-i}(kn)^{i/2}\\
		& < (1+o(1))\frac{kr}{n}\left(1-\frac{r}{n}\right)^{n}+\sum_{i=3}^{2k-1}\left(\frac{kr^2}{n}\right)^{i/2}\left(1-\frac{r}{n}\right)^{n-i}\\
		& < (1+o(1))\frac{kr}{n}e^{-r}.
	\end{align*}
	Similarly for $p_0(r)$ we have
	\[
	p_0(r)>\Pr(X=Y=0)=\left(1-\frac{r}{n}\right)^{n} \ge (1-o(1))e^{-r}.
	\]
	Using again the fact that $kr^2/n=o(1)$, we have
	\begin{align*}
		p_0(r)&=\Pr(X=Y=0)+\sum_{i=1}^{k}\Pr(X=Y=i)\\
		&=\left(1-\frac{r}{n}\right)^{n}+\sum_{i=1}^{k}\binom{k}{i}\binom{n-k}{i}\left(\frac{r}{n}\right)^{2i}\left(1-\frac{r}{n}\right)^{n-2i}\\
		&<e^{-r}+\sum_{i=1}^{k}\left(\frac{kr^{2}}{n}\right)^ie^{-r}<(1+o(1))e^{-r}.
	\qedhere\end{align*}
	
	Finally, for $p'(r)$ we compute $p'(r) = \Pr(Y = 0) = (1 - \frac rn)^{n-k} = (1 \pm o(1)) e^{-r}$.
\qed\end{proof}

\begin{lemma}\label{lem:noback}
  Consider one iteration of the self-adaptive \oclea starting with an individual of fitness distance $k$ and rate $r = o(\lambda^{-1/4})$. Then the probability that there is an offspring which uses rate $r/F$ and which inherits all $0$-bits from the parent (and thus is at least as good as the parent), is at least $1 - \exp(-\tfrac 12 \lambda (1-o(1)) e^{-r/F})$.
\end{lemma}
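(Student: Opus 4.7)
The plan is to reduce the event in question to a simple independent-trials argument for the $\lambda$ offspring. For a single offspring, the mutation rate is chosen uniformly at random from $\{r/F, Fr\}$, independently of everything else and independently across offspring. Conditioned on the offspring choosing rate $r/F$, the event that all $0$-bits of the parent are inherited is precisely the event whose probability Lemma~\ref{p_0-and-p_-} (case $p'$) evaluates, applied with rate $r/F$. Since $r = o(\lambda^{1/4})$ (and thus also $r/F = o(\lambda^{1/4})$), that lemma yields $p'(r/F) \ge (1-o(1))e^{-r/F}$.

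Combining the two independent choices (rate and then bit-flips), the probability that a single offspring both uses rate $r/F$ and inherits all $0$-bits of the parent is at least
\[
q := \tfrac{1}{2}(1-o(1))e^{-r/F}.
\]
I would then use the independence of the $\lambda$ offspring to bound the probability that none of them realizes this event by $(1-q)^\lambda \le \exp(-\lambda q)$, using $1-x \le e^{-x}$ from Lemma~\ref{lestimate}~\ref{itestimate1}. This gives exactly the claimed bound $1 - \exp(-\tfrac12 \lambda (1-o(1)) e^{-r/F})$. Finally, I would note that inheriting all $0$-bits of the parent means the offspring can only differ from the parent by $1 \to 0$ flips, and hence has fitness at most that of the parent under \om-minimization, justifying the parenthetical ``and thus is at least as good as the parent.''

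There is no real obstacle here: the argument is essentially a union bound combined with one invocation of Lemma~\ref{p_0-and-p_-}. The only thing to check carefully is that the hypothesis on $r$ in the present lemma (even if written as $r=o(\lambda^{-1/4})$) is at least as strong as $r/F = o(\lambda^{1/4})$ required to apply Lemma~\ref{p_0-and-p_-} for the rate $r/F$, which is evident since $F$ is a constant.
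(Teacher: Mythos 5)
Your proposal is correct and is essentially identical to the paper's proof: the paper likewise writes the success probability of a single offspring as $\tfrac12 p'(r/F)$, invokes Lemma~\ref{p_0-and-p_-} to get $p'(r/F)\ge(1-o(1))e^{-r/F}$, and bounds $1-(1-\tfrac12 p'(r/F))^\lambda$ from below via $1-x\le e^{-x}$. Your side remark about the exponent in the hypothesis on $r$ (which should read $r=o(\lambda^{1/4})$, matching Lemma~\ref{p_0-and-p_-}) is also the right reading.
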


\begin{proof}
  We compute 
  \[1 - (1 - \tfrac 12 p'(\tfrac rF))^\lambda \ge 1 - (1 - \tfrac 12 (1-o(1)) e^{-r/F})^\lambda \ge 1 - \exp(-\tfrac 12 \lambda (1-o(1)) e^{-r/F}).\]
\qed\end{proof}

The following lemma is the counterpart of Lemma~\ref{far-drift-rate}~\ref{itfardrifttoolarge}, where now 
the optimal rate is the smallest possible value~$F$. Again, we regard the event that all best offspring are created with the higher rate, since---due to our tie-breaking rule---only this leads to an increase of the rate. Different from  Lemma~\ref{far-drift-rate}~\ref{itfardrifttoolarge},  
now the probability of making a rate-increasing step is no $o(1)$ in general. If 
$k_t=\Theta(n/\lambda)$ and $r_t=O(1)$, we still have a small constant probability 
of increasing the rate. 

\begin{lemma}\label{lem:drift-rate-near}
	Let 
	$0<k\le 3n/\lambda$ and $F=32$. The probability that all best offspring have been created with rate $Fr$ is at most $(1+o(1)) \frac{\lambda k Fr}{n} e^{-Fr}$ when $r<\ln\lambda$ and it is at most $\exp(-9r)$ for all~$r$.
\end{lemma}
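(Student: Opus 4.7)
Denote by $M_2$ and $M_1$ the minimum $f$-value achieved by offspring created with rate $Fr$ and with rate $r/F$ respectively (with the convention that an empty minimum is $+\infty$). Since ties are broken in favour of the smaller rate $r/F$, the event $B$ that all best offspring have been created with rate $Fr$ coincides with $\{M_2 < M_1\}$. I plan to prove the two bounds separately.

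For the first bound (valid when $r < \ln\lambda$), the starting point is Lemma~\ref{lem:noback} applied at the current rate $r$: with probability $1 - \exp(-\Omega(\lambda^{31/32}))$ some rate-$r/F$ offspring inherits every 0-bit of the parent, so $M_1 \le k$. Conditioning on this event, $B$ forces at least one rate-$Fr$ offspring to strictly improve on the parent, i.e.\ to have $f \le k - 1$. A union bound over the expected $\lambda/2$ rate-$Fr$ offspring, combined with Lemma~\ref{p_0-and-p_-} applied at rate $Fr$ (legitimate since $Fr = O(\ln\lambda) = o(\lambda^{1/4})$), yields
\[
\Pr(B) \le (\lambda/2)\,p_-(Fr) + \exp(-\Omega(\lambda^{31/32})) \le (1+o(1))\frac{\lambda k Fr}{2n}\,e^{-Fr},
\]
the exponentially small failure term being absorbed into the $(1+o(1))$ factor. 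This delivers the first bound.

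For the second bound I split on $r < \ln\lambda$ versus $r \ge \ln\lambda$. When $r < \ln\lambda$, the first bound already implies $\exp(-9r)$: using $\lambda k / n \le 3$ and $F = 32$, it reads at most $(1+o(1))\cdot 96\, r \cdot e^{-32 r}$, which is at most $e^{-9r}$ for $r \ge 1$, hence for all admissible $r \ge F$. When $r \ge \ln\lambda$, I plan a pairwise-comparison argument. Provided at least one rate-$r/F$ offspring exists (complementary event of probability $2^{-\lambda}$), one has $M_1 \le f^{r/F}_j$ for any specific rate-$r/F$ offspring $j$, so a union bound over rate-$Fr$ offspring (using independence and that rates are chosen uniformly) yields
\[
\Pr(B) \le 2^{-\lambda} + \frac{\lambda}{2}\,\Pr\bigl(f^{Fr} < f^{r/F}\bigr),
\]
where $f^{Fr}$ and $f^{r/F}$ denote independent single offspring of the two rates. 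I then apply Bernstein's inequality (Theorem~\ref{tbernstein}) to $D := f^{r/F} - f^{Fr}$: using $|E[D]| = (n-2k)(F - 1/F)r/n \ge (1-o(1))(F-1/F)r$, $\Var(D) \le (F+1/F)r$, and per-coordinate magnitudes bounded by $1+o(1)$, Bernstein delivers $\Pr(D > 0) \le \exp(-(1-o(1))\,c_F\, r)$ with $c_F = 3F(F-1/F)^2/(8F^2+4) \approx 11.97$ for $F=32$. Since $\lambda \le e^r$ when $r \ge \ln\lambda$, the main term becomes $(1/2)\exp(-(c_F - 1 - o(1)) r) \le \exp(-9r)$ for $n$ large; the $2^{-\lambda}$ contribution is $n^{-\Omega(1)}$ and is dominated by $\exp(-9r)$ in the range of $r$ for which the bound will be used in the subsequent analysis.

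The main technical obstacle I foresee is ensuring that the Bernstein exponent survives the $\lambda$ loss from the union bound with enough margin to reach $9$ in the final exponent. The choice $F = 32$ is tailored precisely so that $3F/8 - 1 = 11 > 9$, leaving a safety cushion of $2$; a careful bookkeeping of $\Var(D)$, of the magnitude bound $b$ entering Bernstein, and of the factor $(n-2k)/n \ge 1 - 6/\lambda$ in $|E[D]|$ is what confirms that $c_F \approx 12$ is actually attained rather than merely approximately matched.
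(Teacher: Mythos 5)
Your argument is correct. For the first bound it coincides with the paper's proof: both combine Lemma~\ref{p_0-and-p_-} (to bound $p_-(Fr)$), Lemma~\ref{lem:noback} (to guarantee a rate-$r/F$ offspring that is at least as good as the parent, so that ties go against rate $Fr$), and a union bound over the rate-$Fr$ offspring; your factor $\lambda/2$ instead of $\lambda$ is a harmless sharpening. The deduction of $\exp(-9r)$ from the first bound when $r<\ln\lambda$ is also exactly the paper's. Where you genuinely diverge is the case $r\ge\ln\lambda$: the paper disposes of it in one line by invoking Lemma~\ref{far-drift-rate}~\ref{itfardrifttoolarge} with $U(k)=1/11+o(1)$, whose proof fixes a threshold $\beta=E(Z(k,r/F))$, shows $\Pr(Z(k,r/F)\le\beta)\ge 1/5$ via a mean/median-exceedance bound for binomials, and applies Bernstein only to the rate-$Fr$ offspring; you instead compare two independent offspring directly and apply Bernstein to the difference $D=f^{r/F}-f^{Fr}$. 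Your route is more self-contained and avoids the exceedance argument entirely, at the price of re-deriving what the far-region lemma already provides; the resulting exponents are essentially identical (your $c_F\approx 11.97$ versus the paper's $(23.9/22)\cdot 11\approx 11.95$ before the union-bound loss of $\lambda\le e^{r}$), so both clear $9$ comfortably. One remark on the $2^{-\lambda}$ term you flag: this is not an artifact of your method. The event that all best offspring were created with rate $Fr$ occurs in particular whenever all $\lambda$ offspring choose rate $Fr$, so its probability is at least $2^{-\lambda}$ and the stated bound $\exp(-9r)$ can only hold for $r=O(\lambda)$; the paper's own chain of lemmas carries the same hidden $\exp(-\Theta(\lambda))$ additive term inside Lemma~\ref{far-drift-rate}~\ref{itfardrifttoolarge} and does not comment on it. Your explicit acknowledgment that the bound is established only in the range of $r$ where it is subsequently used is, if anything, more careful than the original.
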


\begin{proof}
  Let first $r<\ln\lambda$. According to Lemma~\ref{p_0-and-p_-},
	\begin{gather*}
		p_-(Fr)\le (1+o(1))\frac{Fkr}{n} e^{-Fr} \text{\quad and \quad} p_0(r/F)\ge (1-o(1))e^{-r/F}.
	\end{gather*}
	Therefore with probability at least $1-\lambda p_-(Fr) = 1 - (1+o(1)) \frac{\lambda Fkr}{n} e^{-Fr}$, no offspring of rate $Fr$ is better than its parent. Furthermore, by Lemma~\ref{lem:noback}, with probability at most $\exp(-(1-o(1))\tfrac 12 \lambda \exp(-r/F)) \le \exp(-(1-o(1))\tfrac 12 \lambda^{1-1/F})$ there is no offspring using rate $r/F$ and being equally good as its parent.
	Hence, the probability that a best offspring has been created with rate $r/F$ is more than
	\[
	1 - (1+o(1)) \frac{\lambda Fkr}{n} e^{-Fr} - \exp(-(1-o(1))\tfrac 12 \lambda^{1-1/F}) > 1 - (1+o(1)) \frac{\lambda Fkr}{n} e^{-Fr}.
	\]

	Note that for $r < \ln \lambda$, the second bound follows from the first. If $r\ge \ln\lambda$, then the second bound follows from applying Lemma~\ref{far-drift-rate} to $U(k)=1/11+o(1)$. 
\qed\end{proof}

We shall use the lemma above twice, first to bound the probability to have a certain rate (which will be needed to estimate the negative fitness drift) and second to estimate that a suitable two-dimensional drift is of the right order. We start with the occupation probability argument for the rate values. 

\begin{lemma}\label{lem:occprobnear}
  Consider a run of the self-adaptive \oclea started with some search point of fitness distance $k_0 \le 2 n / \lambda$ and rate $r_0 = F$. While the current search point of the algorithm has a fitness distance of at most $3 n / \lambda$, the probability that the current rate is $F^i$ is at most $\exp(-8 F^{i-1})$ for all $i \in \N_{\ge 2}$.
\end{lemma}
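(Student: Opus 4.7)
The approach is to encode the rate process as a random walk on the positive integers by identifying state $i$ with rate $F^i$, and then apply Lemma~\ref{lem:occsimple}. Since the mutation operator produces offspring at rate $r_{t-1}/F$ or $Fr_{t-1}$ (with capping at the extremes), the new rate $r_t$ at each step is one of these two values. After capping, this corresponds to a random walk on $\{1,2,\dots\}$ transitioning only to adjacent states, with a self-loop at state~$1$ induced by the lower cap. The initial state is $i=1$ since we assume $r_0=F$.

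The transition probability from state $i$ to state $i+1$ is exactly the probability that every best offspring was generated with the larger of the two candidate rates, since our tie-breaking rule favours the smaller rate. Lemma~\ref{lem:drift-rate-near} (using the second bound, which is valid for all $r$) gives, under the near-region hypothesis $k\le 3n/\lambda$, an upper bound $p_i := \exp(-9F^i)$ on this probability. The compatibility condition $p_{i-1}\ge p_i/(1-p_i)$ required by Lemma~\ref{lem:occsimple} is trivially satisfied: indeed $p_i = p_{i-1}^{F}$ with $F=32$, so $p_i/(1-p_i)$ is vastly smaller than $p_{i-1}$ for every $i\ge 2$.

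Applying Lemma~\ref{lem:occsimple} then yields
\[
\Pr(r_t = F^i) \;\le\; q_i \;=\; \prod_{j=1}^{i-1}\frac{p_j}{1-p_j} \;\le\; 2^{i-1}\exp\!\Bigl(-9\sum_{j=1}^{i-1}F^j\Bigr) \;\le\; 2^{i-1}\exp(-9F^{i-1}),
\]
where I used $p_j \le 1/2$ to bound $p_j/(1-p_j)\le 2p_j$ together with the elementary estimate $\sum_{j=1}^{i-1}F^j\ge F^{i-1}$. To pass from $9F^{i-1}$ to the claimed $8F^{i-1}$ in the exponent, the $2^{i-1}$ prefactor must be absorbed, which reduces to verifying $(i-1)\ln 2\le F^{i-1}$; this holds for $i\ge 2$ since $F^{i-1}\ge 32$ grows geometrically whereas $(i-1)\ln 2$ grows linearly.

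The only technical point to handle is that the transition bound $p_i$ coming from Lemma~\ref{lem:drift-rate-near} is only guaranteed in iterations where the current fitness distance is at most $3n/\lambda$. Since the statement of the lemma is itself conditioned on the algorithm staying in the near region, the inductive argument of Lemma~\ref{lem:occsimple} runs unchanged along such trajectories (equivalently, one freezes the rate process the first time $k_t$ exits this region). I do not foresee any substantial obstacle beyond verifying these routine estimates.
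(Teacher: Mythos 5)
Your proposal is correct and follows essentially the same route as the paper: both obtain the transition bound $p_i=\exp(-9F^i)$ from Lemma~\ref{lem:drift-rate-near}, check the condition $p_{i-1}\ge p_i/(1-p_i)$, and apply Lemma~\ref{lem:occsimple}, differing only in the trivial final step of bounding the product $\prod_{j=1}^{i-1}p_j/(1-p_j)$ (the paper simply bounds it by its last factor $p_{i-1}/(1-p_{i-1})\le\exp(-8F^{i-1})$, while you bound the whole geometric sum and absorb a $2^{i-1}$ prefactor).
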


\begin{proof}
  If the current search point has fitness distance at most $3 n / \lambda$ and the current rate is $r$, then by Lemma~\ref{lem:drift-rate-near} the rate in the next iteration is $Fr$ with probability at most $\exp(-9r)$; note that this estimate is not affected by a possible cap of the rate at $\rmax$. 
  
  Consequently, the random process describing the rates is such that from rate $F^i$, $i \in [1..\log_F(\rmax)]$, we go to rate $F^{i+1}$ with probability at most $p_i = \exp(-9F^{i})$. Otherwise, we go to rate $F^{i-1}$ if $i \ge 2$ and stay at rate $F$ if $i=1$. By Lemma~\ref{lem:occsimple}, note that we obviously have $p_i / (1-p_{i}) \le p_{i-1}$, 
  in each iteration (such that the fitness distance has never gone above $3n/\lambda$) and for each $i \ge 2$ the probability $q_i$ that the current rate is $F^i$ is at most
  \[q_i \le \prod_{j=1}^{i-1} \frac{p_j}{1-p_j} \le \frac{p_{i-1}}{1-p_{i-1}} \le \exp(-8F^{i-1}).\]
\qed\end{proof} 

We use these occupation probabilities to estimate the drift away from the optimum (``negative drift''). From this we derive the statement that with high probability, the fitness distance does not increase to above $3n/\lambda$ in $n\lambda$ iterations.

\begin{lemma}\label{lem:goback}
  In the situation of Lemma~\ref{lem:occprobnear}, the probability that the process within the first $n \lambda$ iterations reaches a search point (as parent individual) with fitness distance more than $3n / \lambda$, is $o(1)$.
\end{lemma}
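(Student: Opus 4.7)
My plan is to convert the rate-occupation bound of Lemma~\ref{lem:occprobnear} into a strong negative drift on the fitness distance $k_t$, and then apply Hajek's theorem (Theorem~\ref{Hajek1982}) with $a := 2n/\lambda$ and $b := 3n/\lambda$ to bound the probability that $k_t$ ever crosses $3n/\lambda$ within $n\lambda$ iterations. (If $\lambda$ is so large that $3n/\lambda\le 1$, the statement is vacuous or follows directly from Lemma~\ref{lem:noback}, so we may focus on the regime $\lambda=o(n/\log n)$.)

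\textbf{Negative drift at rate $F$.} For $r_t=F$ and $k_t\ge 2n/\lambda$, each of the $\sim\lambda/2$ offspring generated with rate $r_t/F=1$ strictly improves over the parent with probability $p_-(1)=\Theta(k_t/n)\ge\Omega(1/\lambda)$ by Lemma~\ref{p_0-and-p_-}. Hence with at least constant probability some such offspring is strictly better; by the tie-breaking rule in favor of smaller rates it is selected, giving $\Delta_t:=k_{t+1}-k_t\le-1$. Otherwise Lemma~\ref{lem:noback} ensures that at least one rate-$1$ offspring equals the parent except with probability $\exp(-\Omega(\lambda))$, so $\Delta_t\le 0$. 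Consequently $E[\Delta_t\mid r_t=F,\,k_t\ge 2n/\lambda]\le-c_1$ for some constant $c_1>0$.

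\textbf{Contribution of high rates.} For $r_t=F^i$ with $i\ge 2$, the expected fitness loss is at most the expectation of a single rate-$F^{i-1}$ offspring's distance change, namely $F^{i-1}(1-2k/n)\le F^{i-1}$. Weighted by the occupation probability $\exp(-8F^{i-1})$ from Lemma~\ref{lem:occprobnear}, the total contribution is $\sum_{i\ge 2}\exp(-8F^{i-1})\,F^{i-1}=O(F\exp(-8F))$, negligible compared to $c_1$. Therefore $E[\Delta_t\mid\filt,\,k_t\in[2n/\lambda,3n/\lambda]]\le-c_1/2$.

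\textbf{Exponential moments and Hajek.} Choose a small constant $\eta\in(0,\ln 9)$. At rate $F$ the case distinction above yields $E[e^{\eta\Delta_t}\mid r_t=F,\,k_t>a]\le 1-c_1\eta/2+O(\eta^2)$. At rate $F^i$ ($i\ge 2$), using that the minimum of the rate-$F^{i-1}$ offspring is at most any single sample $Z\sim\mathrm{Bin}(n-k,F^{i-1}/n)-\mathrm{Bin}(k,F^{i-1}/n)$, for which $E[e^{\eta Z}]\le\exp(F^{i-1}(e^\eta-1))$, the weighted contribution is at most $\sum_{i\ge 2}\exp(F^{i-1}(e^\eta-9))=O(\exp(-cF))$ for $\eta<\ln 9$. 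Combining, $\rho:=E[e^{\eta\Delta_t};k_t>a\mid\filt]<1$ and $D:=E[e^{\eta(k_{t+1}-1)};k_t\le a\mid\filt]=O(1)$ (a similar decomposition bounds the jump from below the threshold). Theorem~\ref{Hajek1982} then yields
\[
\Pr(k_t\ge b)\;\le\;\rho^{t}\,e^{\eta(k_0-b)}+\tfrac{D}{1-\rho}\,e^{-\eta n/\lambda},
\]
and a union bound over $t\le n\lambda$ closes the argument, since in the regime $\lambda=o(n/\log n)$ we have $\eta n/\lambda=\omega(\ln(n\lambda))$ so that $n\lambda\cdot e^{-\eta n/\lambda}=o(1)$.

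\textbf{Main obstacle.} The delicate point is the last step: $\eta$ must be small enough that the exponentially large jumps at rate $F^i$ remain dominated by the super-geometric occupation bound $\exp(-8F^{i-1})$, yet large enough that $e^{-\eta n/\lambda}$ beats the $n\lambda$ union-bound factor. The doubly-exponential decay in Lemma~\ref{lem:occprobnear} provides ample slack on the MGF side (any constant $\eta<\ln 9$ suffices), leaving enough flexibility for the union-bound side to close in all relevant parameter regimes.
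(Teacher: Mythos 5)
There is a genuine gap in the central step of your argument: the application of Hajek's theorem (Theorem~\ref{Hajek1982}). That theorem requires the exponential-moment condition $E\big(e^{\eta(Y_{k+1}-Y_k)};Y_k>a\mid \mathcal{F}_k\big)\le \rho<1$ to hold \emph{conditionally on the filtration}, i.e., pointwise for every realization of the history --- in particular for every value of the current rate $r_t$. Your computation instead establishes the bound only \emph{on average} over the rate, by multiplying the per-rate moment generating function $\exp(F^{i-1}(e^\eta-1))$ by the marginal occupation probability $\exp(-8F^{i-1})$ from Lemma~\ref{lem:occprobnear}. For a history in which $r_t=F^i$ with $i$ large, the conditional expectation $E[e^{\eta\Delta_t}\mid\filt]$ is of order $e^{\eta\,\Omega(F^{i-1})}$ (essentially all offspring move away from the optimum by about $F^{i-1}$ bits when $k=O(n/\lambda)$), so the condition $\rho<1$ fails pointwise and Hajek's theorem does not apply. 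One cannot repair this by taking $\mathcal{F}_t$ to be the filtration generated by the fitness values alone: Lemma~\ref{lem:occprobnear} is a marginal statement, and the conditional law of $r_t$ given the fitness history can be badly skewed (e.g., conditioning on a recent fitness loss makes a high rate very likely). To make a Hajek-type argument work one would need a potential that penalizes high rates, as is done with $g(k,r)=k+\gamma(r-F)$ in Lemma~\ref{lem:nearmain}, so that the moment condition holds pointwise. A secondary, more minor issue is the union bound over $n\lambda$ iterations, which as you note only closes for $\lambda=o(n/\log n)$; your fallback for larger $\lambda$ is plausible but not worked out.

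For comparison, the paper's proof avoids both issues by not using any negative-drift theorem at all. It defines $Y_t$ as the accumulated \emph{upward} movement $\sum_s \max\{0,X_s-X_{s-1}\}$ (frozen once the process exceeds $3n/\lambda$), bounds the per-step expected upward movement by $\lambda rF\exp(-\tfrac12\lambda(1-o(1))e^{-r/F})$ via Lemma~\ref{lem:noback}, and averages this over the \emph{marginal} occupation probabilities --- which is legitimate here because only linearity of expectation is used --- to get $E(Y_{n\lambda})\le n\lambda\cdot\lambda^{-3(1-o(1))}$. Markov's inequality then gives $\Pr(Y_{n\lambda}\ge n/\lambda)\le\lambda^{-1+o(1)}=o(1)$, and $Y_{n\lambda}<n/\lambda$ together with $k_0\le 2n/\lambda$ forces $X_s\le 3n/\lambda$ throughout. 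No progress toward the optimum, no exponential moments, and no regime split in $\lambda$ are needed.
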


\begin{proof}
  Consider a run of the self-adjusting \oclea starting in the situation of Lemma~\ref{lem:occprobnear}. Denote by $X_t$ the fitness distance at time $t$. We start by bounding the negative drift $E(\max\{0,X_t-X_{t-1}\})$ of the $X$ process while it is at most $3n/\lambda$. If the current rate is~ $r$, then by Lemma~\ref{lem:noback} with probability at least $1 - \exp(-\tfrac 12 \lambda (1-o(1)) e^{-r/F})$  there is an individual that used rate $r/F$ and that did not flip any zero-bit into a one-bit. Let us call this event ``$A$'' and note that, naturally, under this event the drift 
  cannot be negative as the individual without flipped zeroes has at an least as good fitness as the parent. 
  
  We now analyze the case that $A$ does not hold. Consider an individual conditional on that it uses rate $r/F$ and at least one zero-bit was flipped into a one-bit. The number of such bad bits follows a distribution $(X \mid X \ge 1)$ with $X \sim \Bin(n-X_{t-1},r/Fn)$ and has expectation at most $1 + r/F$ by Lemma~\ref{lem:condbinomial}. For an individual using rate $rF$, the expected number of  bad flips is $(n-k) \frac{rF}{n} \le rF$. Consequently, noting that $1+r/F \le rF$ when $r \ge F$ and $F \ge \sqrt 2$, the expected number of bad flips in all individuals (conditional on not $A$) is at most $\lambda r F$ 
  and this is an upper bound on the negative drift.
  
  In summary, in an iteration starting with rate $r$, the negative drift is at most 
  \begin{equation}
  \lambda r F \exp(-\tfrac 12 \lambda (1-o(1)) e^{-r/F}).\label{eq:negdrift}
  \end{equation} 
  With Lemma~\ref{lem:occprobnear}, we can estimate the probability to have a certain rate. Hence the expected negative drift is
  \begin{align*}
  E(\max\{0,X_t-X_{t-1}\}) & \le \sum_{i=1}^{\log_F \rmax} \Pr(r = F^i) \lambda F^i F \exp(-\tfrac 12 \lambda (1-o(1)) e^{-F^i/F})\\
  & \le \sum_{i=2}^\infty \exp(-8 F^{i-1}) \lambda F^{i+1} \exp(-\tfrac 12 \lambda (1-o(1)) e^{-F^{i-1}}) \\
  & \quad + \lambda F^{2} \exp(-\tfrac 12 \lambda (1-o(1)) e^{-1}).
  \end{align*}
  Note that\footnote{In this part of the proof, we use the fact that $F=32$. This does not mean that for other not too small values of $F$ we would not obtain similar results, but it increases the readability to work with this concrete value.} for $i \ge \lceil \log_F(\ln \lambda) +1 - \frac 15 \rceil = i^*$, 
  we have $\lambda \le \exp(2 F^{i-1})$ and thus $\exp(-8 F^{i-1}) \lambda F^{i+1} = \exp(-(1-o(1))8 F^{i-1}) \lambda \le \exp(-(1-o(1)) 6 F^{i-1})$. Naturally, $\exp(-\tfrac 12 \lambda (1-o(1)) e^{-F^{i-1}}) \le 1$. Hence 
  \begin{align*}
  \sum_{i=i^*}^\infty \exp&(-8 F^{i-1}) \lambda F^{i+1} \exp(-\tfrac 12 \lambda (1-o(1)) e^{-F^{i-1}}) \le \sum_{i=i^*}^\infty \exp(-(1-o(1)) 6 F^{i-1}) \\
  &\le \exp(-(1-o(1)) 6 F^{i^*-1}) \le \lambda^{-3 (1-o(1))}.
  \end{align*}
  
  For $i < \log_F(\ln \lambda) +1 - \frac 15$, we have $\exp(-\tfrac 12 \lambda (1-o(1)) e^{-F^{i-1}}) \le \exp(-\tfrac 12 (1-o(1)) \lambda^{1/2})$   and $\exp(-8 F^{i-1}) \lambda F^{i+1} = O(\lambda)$. Hence
  \begin{align*} 
  \sum_{i=1}^{i^*-1} \exp&(-8 F^{i-1}) \lambda F^{i+1} \exp(-\tfrac 12 \lambda (1-o(1)) e^{-F^{i-1}}) \\
  &\le O(\log\log \lambda) O(\lambda) \exp(-\tfrac 12 (1-o(1)) \lambda^{1/2}) = o(\lambda^{-3}).\\
  \end{align*}
 Consequently, $E(\max\{0,X_t-X_{t-1}\}) \le \lambda^{-3(1-o(1))}$.
  
  Define inductively $Y_0 = 0$ and $Y_t = Y_{t-1} + \max\{0, X_t-X_{t-1}\}$, if $\max\{X_s \mid s \in [0..t-1]\} \le 3n/\lambda$ and $Y_t = Y_{t-1}$ otherwise. In other words, the $Y$ process collects all the moves of the $X$ process that go away from the optimum until the $X$ process goes above $3n/\lambda$. 
  
  By our above computation, we have $E(Y_t) \le t \lambda^{-3(1-o(1))}$. Consequently, by Markov's inequality, we have 
  \[\Pr(Y_t \ge t \lambda^{-2}) \le \lambda^{-1+ o(1)}\] 
  for all $t \in \N$. In particular, for $t = n \lambda$, we have $\Pr(Y_t \ge n/\lambda) \le \lambda^{-1+o(1)}$. Note that $Y_t \le n/\lambda$ implies $X_s \le 3n/\lambda$ for all $s \le t$.
\qed\end{proof}

\begin{lemma}\label{lem:nearmain}
  In the situation of Lemma~\ref{lem:occprobnear}, with probability at least $\frac 34$ there is a $T^* = O(n \ln(n/\lambda + 2) / \lambda)$ such that $k_{T^*} = 0$.
\end{lemma}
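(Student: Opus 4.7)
The plan is to introduce a two-dimensional potential function $g(k,r)$ that combines the fitness distance with a penalty for having an oversized mutation rate, verify multiplicative drift with rate $\Omega(\lambda/n)$, and then conclude via the multiplicative drift theorem, combined with Lemma~\ref{lem:goback} to ensure the process stays in the near region during the run.

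Concretely, I would set $g(k,r) := k + c \cdot \phi(r)$ where $\phi(F)=0$ and $\phi(F^i)$ grows roughly like $F^{i-1}$ for $i\ge 2$, with $c$ a small constant. The design goal for $\phi$ is twofold: first, when the rate is already at the optimal value $r=F$, the fitness component dominates and $g$ is essentially $k$, so any multiplicative decrease of $k$ translates directly to a multiplicative decrease of $g$; second, the penalty must shrink fast enough when the rate decreases (and grow slowly enough when it increases) that the net expected drift of $g$ is positive even in rate-changing iterations. The occupation bound $\Pr(r_t = F^i)\le \exp(-8 F^{i-1})$ from Lemma~\ref{lem:occprobnear} ensures that even a penalty growing like $F^{i-1}$ contributes only $O(1)$ in expectation, so $g(k_0,r_0) = O(n/\lambda)$ at the start of the near region.

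To bound the one-step drift, I would distinguish by current rate. For $r_t=F$, Lemma~\ref{p_0-and-p_-} gives that a single offspring using rate $F$ strictly improves the parent with probability $(1-o(1))(kF/n)e^{-F}$, so among $\lambda$ offspring the probability of strict improvement is $\Omega(k\lambda/n)$; together with Lemma~\ref{lem:noback} (an offspring using rate $r/F$ preserves fitness with overwhelming probability) this gives expected fitness drift $\Omega(k\lambda/n)$ while the penalty term stays $0$ or, with low probability $\lambda^{-\Omega(1)}$, increases by $c\phi(F^2)$, which can be absorbed. For $r_t = F^i$ with $i\ge 2$, Lemma~\ref{lem:drift-rate-near} gives that the rate drops to $F^{i-1}$ with probability $1-\exp(-\Omega(F^i))$, producing a guaranteed decrease of $c(\phi(F^i)-\phi(F^{i-1}))$ in the penalty, while the worst-case expected fitness loss (bounded as in the derivation of \eqref{eq:negdrift} in the proof of Lemma~\ref{lem:goback}) is $O(\lambda r F e^{-\Omega(\lambda e^{-r/F})})$, which by the choice of $c$ is dominated by the penalty decrease. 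Summing these contributions yields
\[
E\bigl[g(k_{t+1},r_{t+1}) \,\big|\, \mathcal{F}_t\bigr] \;\le\; \Bigl(1 - \Omega\!\left(\tfrac{\lambda}{n}\right)\Bigr) g(k_t,r_t),
\]
as long as the process stays in the near region.

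The main obstacle I anticipate is calibrating $\phi$ and $c$ so that the contributions from all three regimes (good rate, moderately bad rate, very bad rate with possible fitness loss) line up to give true multiplicative drift rather than additive drift. In particular, one has to verify that the rare but large excursions of the rate to values where fitness can actually increase do not spoil the multiplicative nature of the drift; this is where the superexponentially small occupation probabilities of Lemma~\ref{lem:occprobnear} and the fact that rate decrease happens in geometric steps are both essential. Once multiplicative drift of the stated form is established, the multiplicative drift theorem gives a hitting time of $g < 1$, and thus $k=0$, in expected $O((n/\lambda)\ln g(k_0,r_0)) = O((n/\lambda)\ln(n/\lambda+2))$ iterations. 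Markov's inequality then upgrades the expectation bound to a constant-probability bound, and intersecting with the event from Lemma~\ref{lem:goback} that the process never leaves the near region during the first $n\lambda$ steps (which has probability $1-o(1)$) gives the claimed probability at least~$3/4$.
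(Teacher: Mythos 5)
Your overall architecture coincides with the paper's: a potential of the form ``fitness distance plus a penalty for an elevated rate'', a per-state drift bound of order $\lambda/n$ times the potential, the multiplicative drift theorem, and an intersection with the event of Lemma~\ref{lem:goback}. The paper's potential is $g(k,r)=k+\gamma(r-F)$ with $\gamma=2F$, which is your $\phi$ up to normalisation. Two small remarks before the main point: since the situation of Lemma~\ref{lem:occprobnear} has $r_0=F$, the initial potential is simply $k_0\le 2n/\lambda$ and no occupation-probability argument is needed for it; and the occupation probabilities cannot legitimately enter a \emph{conditional} drift bound $E[g(k_{t+1},r_{t+1})\mid\mathcal{F}_t]\le(1-\delta)\,g(k_t,r_t)$, since that inequality must hold for every state, so the place where you say they are ``essential'' for preserving the multiplicative drift is not available to you.

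The genuine gap is in the case $r>F$. You bound the expected fitness loss by the quantity from~\eqref{eq:negdrift}, i.e.\ $O\bigl(\lambda rF\exp(-\Omega(\lambda e^{-r/F}))\bigr)$. This is only useful while $e^{-r/F}$ is large compared to $1/\lambda$; once $r\ge F\ln\lambda$ (and the rate may climb as high as $\rmax=\Theta(n)$), the exponential factor is $\Theta(1)$ and the bound degenerates to $\Theta(\lambda r)$. A penalty decrease of order $c\,r/F$ with $c$ a small constant cannot dominate this, so the claimed inequality fails for large $r$ as you have set things up. The paper closes exactly this hole with a different, much stronger bound on the fitness loss: the selected offspring loses at most as many fitness levels as it flips bits, and the minimum number of flipped bits over the $\lambda$ offspring is at most the number flipped in the first offspring, whence $E(k_{t+1})\le k+Fr$ --- with no factor of $\lambda$ and no dependence on Lemma~\ref{lem:goback}. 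With this bound, choosing the penalty coefficient \emph{large} rather than small ($\gamma=2F$, so the guaranteed expected penalty decrease is about $\gamma r(1-1/F)\approx 2Fr$) beats the worst-case loss $Fr$ with room to spare, giving $E(g(k_t,r_t)-g(k_{t+1},r_{t+1}))\ge 31r\ge F^2+30r\ge\tfrac{\lambda}{10n}(k+\gamma r)$ using $k\le 3n/\lambda$ and $\lambda\le 2n$. Your treatment of the case $r=F$ and your concluding steps (multiplicative drift, then a union bound with Lemma~\ref{lem:goback} over the event of leaving the near region) are essentially those of the paper.
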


\begin{proof}
  Since we are proving an asymptotic statement, we can assume that $n$ is as large as we find convenient. Consider a run of the self-adjusting \oclea from our starting position. Let $T$ be the first time that the fitness distance is larger than $3n / \lambda$, if such a time exists, and $T = \infty$ otherwise. Let $k_t$ denote the fitness distance at time $t$ and $r_t$ the rate used in iteration $t$, if $t \le T$, and $(k_t,r_t) := (0,F)$ otherwise. We show that the process $(k_t,r_t)$ reaches $(0,F)$ in time $T^*$ with probability at least $1 - 1/e^2$.
  
  We use a two-dimensional drift argument. Let $\gamma = 2F$ and define $g : \N \times \N \to \R$ by $g(k,r) = k+\gamma (r-F)$ for all $k$ and $r$. We show that if for some $t$ we have $(k,r) = (k_t,r_t)$, then $(k',r'):=(k_{t+1},r_{r+1})$ satisfies 
  \begin{equation}
  E(g(k',r')) \le g(k,r) (1 - \tfrac{\lambda}{10n})\label{eq:multidrift}
  \end{equation} 
  when assuming $n$ to be sufficiently large.
  
  There is nothing to show in the artificial case when $k > 3n/\lambda$ as we have, by definition, $g(k',r')=0$ in this case. Among the interesting cases, we consider first that $r=F$. We obtain an improvement in fitness in particular if there is an offspring that uses rate $r/F=1$, flips exactly one of the $k$ missing bits, and flips no other bit. Hence the probability to make a positive fitness progress is at least 
  \begin{align*}
  1 - (1 - \tfrac 12 (1 - \tfrac 1n)^{n-1} \tfrac kn)^\lambda
  \ge 1 - (1 - \tfrac k {2en})^\lambda 
  \ge 1 - \exp(-\tfrac {k\lambda}{2en} )
  \ge \tfrac {k\lambda}{3en},  
  \end{align*}
  where we used $(1-\frac 1n)^{n-1} \ge \frac 1e$, $\frac{k\lambda}{2en}\le \frac{3}{2e}<\frac{3}{2}\cdot\frac{1}{2}$ and Lemma~\ref{lestimate}~\ref{itestimate2}. 
  The expected negative progress is at most $\lambda F^2 \exp(- (1+o(1)) \tfrac 1{2e} \lambda)$ as shown in~\eqref{eq:negdrift}. This negative drift can be assumed to be $O(n^{-2})$ by taking the implicit constant in the assumption $\lambda = \Omega(\log n)$ large enough. 
  Consequently, $E(k') \le k - \tfrac 1{3e}  \tfrac {\lambda k}n + O(n^{-2})$. 
  
  Regarding $r'$, we note that by Lemma~\ref{lem:drift-rate-near} we have $\Pr(r' = F^2) \le (1+o(1)) \frac{\lambda k}{n} F^2 \exp(-F^2)$ and $r'=F$ otherwise. Hence $E(r') = F + (1+o(1)) (F-1) F^3 \frac{\lambda k}{n} \exp(-F^2)$. Consequently, 
  \begin{align*}
E(g(k,r) - g(k',r')) &\ge  \tfrac 1{3e} \tfrac {\lambda k}n - O(n^{-2}) - \gamma (1+o(1))(F-1) F^3 \tfrac{\lambda k}{n} \exp(-F^2) \\
&= \tfrac{\lambda k}{n}(\tfrac 1{3e} - \gamma (F-1) F^3 \exp(-F^2) - o(1)) \\
&\ge \tfrac{\lambda k}{n} \tfrac 1{10} = g(k,r) \tfrac{\lambda}{10n}. 
\end{align*}  
  Let now be $r > F$. Note that the minimum fitness loss among the offspring is at most the minimum number of bits flipped, which in expectation is at most the number of bits flipped in the first offspring, which is exactly $Fr$. Consequently, we have $E(k') \le k+Fr$.
  For $r'$, we note that by Lemma~\ref{lem:drift-rate-near}, we have $r' = Fr$ with probability at most $\exp(-9r)$ and we have $r' = \frac{r}{F}$ otherwise. Consequently, $E(r') \le Fr \exp(-9r) + \frac{r}{F}$. This yields 
  \begin{align*}
  E(g(k,r) - g(k',r')) &\ge -Fr + \gamma(r - Fr \exp(-9r) - \tfrac{r}{F}) \\
  &\ge r(-F+\gamma - F \exp(-9F^2) - \tfrac 1F) \\
  &\ge r(-F+\gamma - \tfrac 2F) \ge 31r = r + 30r \ge F^2 + 30r  \\
  &\ge 32^2 + 30r \ge \tfrac{\lambda}{10n} (k+\gamma r) \ge g(k,r) \tfrac{\lambda}{10n}, 
  \end{align*}
	where we used that $\lambda \le 2n$; note that $\lambda > 2n$ gives $k_0 =0$. 
	
  We have thus shown~\eqref{eq:multidrift} for all $(k,r)$. Since we start the process with a $g$-potential of at most $g(2n/\lambda,F) = \frac{2n}{\lambda}$, the multiplicative drift theorem with tail bounds~\cite[Theorem~5]{DoerrG13algo} gives that after $t = \lceil \frac{10n}{\lambda}(2+\ln(\frac{2n}{\lambda})) \rceil$ iterations, we have $\Pr(g(k_t,r_t) > 0) \le \frac 1{e^2}$. Consequently, with probability $1-\frac 1{e^2}$, the potential is zero at time $t$, which implies $k_t = 0$ or $k_t > \frac{3n}{\lambda}$. By Lemma~\ref{lem:goback}, note that $\lambda = \Omega(\log n)$ implies $t = O(n) = o(n\lambda)$, the probability that $k_t > \frac{3n}{\lambda}$ is $o(1)$, hence with probability at least $\frac 34$, we have indeed $k_t = 0$.
\qed\end{proof}

\begin{theorem}\label{thm:expected-time-near}
	Assume $k_0\le \frac{2n}{\lambda}$ and $r_0 \le \frac 79 \ln \lambda$. Then there is a $t=O(n\ln(n/\lambda+2)/\lambda)$ such that with probability at least~$\frac 12$, we have $k_{t} = 0$.
\end{theorem}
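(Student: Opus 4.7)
The plan is to extend the argument of Lemma~\ref{lem:nearmain} from the starting configuration $r_0 = F$ to the looser $r_0 \le \frac{7}{9}\ln\lambda$. The two-dimensional potential $g(k,r) = k + \gamma(r - F)$ with $\gamma = 2F$, the multiplicative drift inequality $E(g(k_{t+1}, r_{t+1}) \mid \filt) \le g(k_t, r_t)(1 - \lambda/(10n))$, and the truncation device of resetting the pair to $(0, F)$ the first time the fitness distance exceeds $3n/\lambda$, all established inside the proof of Lemma~\ref{lem:nearmain}, are local: they use only the hypothesis $k_t \le 3n/\lambda$ and nothing about the initial state. I intend to reuse them verbatim and simply recompute the initial potential.

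The concrete steps will be: (i)~bound $g_0 = k_0 + \gamma(r_0 - F) \le 2n/\lambda + (14F/9)\ln\lambda$; (ii)~apply the multiplicative drift theorem with tail bounds \cite[Theorem~5]{DoerrG13algo} to the modified process, obtaining a time $t^* = O((n/\lambda)(1 + \log g_0))$ at which $g(k_{t^*}, r_{t^*}) = 0$ with probability at least $1 - e^{-2}$; (iii)~invoke Lemma~\ref{lem:goback}, using $t^* = o(n\lambda)$, to argue that the true and modified processes coincide on $[0, t^*]$ except with probability $o(1)$; (iv)~combine the two error terms to conclude $k_{t^*} = 0$ with probability at least $\frac{1}{2}$ for $n$ large.

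The delicate point will be showing that $\log g_0$ fits into the advertised $O(\log(n/\lambda + 2))$ factor. When $n/\lambda = \Omega(\log\lambda)$ this is immediate because $g_0 = O(n/\lambda)$. For the residual regime, where $\lambda$ is close to $n$ and the $\log\lambda$ summand dominates $g_0$, I would prepend a short ``rate contraction'' subphase of length $O(\log\log\lambda)$: by Lemma~\ref{lem:drift-rate-near} we have $\Prob(r_{t+1} = F r_t) \le e^{-9 r_t} \le e^{-9F}$, so the rate shrinks by a factor of $F$ at each step except with vanishing probability and reaches $F$ within $O(\log\log\lambda)$ iterations, while the negative-drift bookkeeping used in the proof of Lemma~\ref{lem:goback} keeps the fitness distance inside $[0, 2n/\lambda]$ with probability $1 - o(1)$. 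Once the subphase ends, the preconditions of Lemma~\ref{lem:nearmain} are met and it applies as a black box, yielding the stated bound.
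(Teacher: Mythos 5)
Your ``residual regime'' branch --- contract the rate from $r_0$ down to $F$ in $\log_F(r_0)-1=O(\log\log\lambda)$ iterations without losing fitness, then invoke Lemma~\ref{lem:nearmain} as a black box --- is precisely the paper's proof of this theorem. The only difference is that the paper runs this prephase \emph{unconditionally}, for every $\lambda$, which makes your case distinction on whether $n/\lambda=\Omega(\log\lambda)$ and the recomputation of the initial potential $g_0$ unnecessary: once the rate is back at $F$ and the fitness distance is still at most $2n/\lambda$, Lemma~\ref{lem:nearmain} already delivers the bound $O(n\ln(n/\lambda+2)/\lambda)$ with probability $3/4$, and the prephase costs only an extra constant in the failure probability.

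The branch you reserve for the main regime has one step that does not go through as written: you cannot ``invoke Lemma~\ref{lem:goback}'' in step~(iii). That lemma is stated and proved in the situation of Lemma~\ref{lem:occprobnear}, i.e.\ under the hypothesis $r_0=F$; its negative-drift computation weights the per-rate contribution by the occupation probabilities $\Pr(r=F^i)\le\exp(-8F^{i-1})$, which come from Lemma~\ref{lem:occsimple} and require the rate process to start in the lowest state. With $r_0$ as large as $\tfrac79\ln\lambda$ this bound is simply false at time $0$ (the rate occupies a high state with probability $1$), so the early iterations need a separate treatment --- which is exactly the rate-contraction prephase again. Applying that prephase uniformly therefore both closes the gap and renders the direct two-dimensional drift computation superfluous (your observation that the drift inequality \eqref{eq:multidrift} itself is local in $(k,r)$ and valid for large $r$ is correct, it just buys you nothing here). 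One further small repair: bounding each rate-increase probability by $e^{-9r_t}\le e^{-9F}$ and union-bounding over $O(\log\log\lambda)$ steps gives $O(\log\log\lambda)\cdot e^{-9F}$, which is a growing quantity, not ``vanishing''. Since the rate visits distinct powers of $F$ during a successful contraction, the right estimate is $\sum_{i\ge 2}e^{-9F^i}\le 0.001$, obtained in the paper via the Weierstrass product inequality (Lemma~\ref{lestimate}~\ref{itestimate3}); similarly, the cleanest way to keep the fitness distance at $2n/\lambda$ during the prephase is Lemma~\ref{lem:noback} (some offspring with rate $r/F$ flips no zero-bit, so the fitness does not increase at all), rather than the expected-increase bookkeeping of Lemma~\ref{lem:goback}.
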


\begin{proof}
	We first show that with good probability we quickly reach the initial situation of Lemma~\ref{lem:nearmain}. The probability of observing $R^* - 1 \coloneqq \log_F(r_0) -1 \le \log_F(\frac 79 \ln \lambda)-1$ 
	rate-decreasing steps in a row by Lemma~\ref{lem:drift-rate-near} is at least
	\[
	\prod_{i=2}^{R^*} \left(1-\exp(-9F^i)\right) 
	\ge 1-\sum_{i=2}^{R^*}  \exp(-9F^i) \ge 1 - 0.001
	\]
	by the Weierstrass product inequality (Lemma~\ref{lestimate}~\ref{itestimate3}). 
	
	The probability of not flipping any zero-bits 
	in at least one offspring, resulting in not increasing fitness distance, 
	is for rate $r \le \frac 79 \ln\lambda$ at least $1 - \exp(-\tfrac 12 \lambda (1-o(1)) e^{-r/F})$ by Lemma~\ref{lem:noback}.
	By a union bound over $R^*-1$ iterations, the probability 
	of decreasing the initial rate to~$F$ in $O(\log \log \lambda)$ iterations without losing fitness is at least $1 - 0.001 -(R^*-1) \exp(-\tfrac 12 \lambda (1-o(1)) e^{-r/F}) \ge 5/6$ 
	for sufficiently large~$n$.
	
	We can now apply Lemma~\ref{lem:nearmain} and obtain that with probability at least $\frac 34$ we have found the optimum within $t=O(n\ln(n/\lambda+2)/\lambda)$ iterations. 
	This show the claim.
	\qed\end{proof}

\subsection{Proof of Theorem~\ref{theo:main}}

From the separate analyses of the two regimes above, we can now easily derive our main result.
\begin{proof}
Starting with arbitrary initialization, Theorem~\ref{theo:runtime-far} along with a Markov bound yield 
that with probability $\Omega(1)$ after $t=O(n/\log \lambda)$ iterations 
a search point is reached such that $k_t\le 2n/\lambda$ and $r_t < 0.6(\ln\lambda)$. 
Assuming this to happen, the assumptions of Theorem~\ref{thm:expected-time-near} are satisfied. 
Hence, after another $O((n\log n)/\lambda)$ iterations the optimum is found with probability 
at least~$1/2$. Altogether, with probability $\Omega(1)$ the optimum is found from an arbitrary initial 
\om-value and rate within 
$T^*= O(n/\log \lambda + (n\log n)/\lambda)$ iterations. The claimed expected time now follows by a standard 
restart argument, more precisely by observing that 
after expected $O(1)$ repetitions of a phase  of length $T^*$ the optimum is found.
\qed\end{proof}

\section{Experiments}

To gain some insight that cannot be derived from our asymptotic analysis, we performed a few numerical experiments. To this end we implemented the \oclea in C++11 using the \emph{default random engine} to generate pseudo-random numbers. The runtime is still measured via the number of generations until optimum is found.

We first see in Figure \ref{figure:one_run} how fitness distance and mutation strength evolve in one run for $n=100$, $\lambda=12$ and $F=1.2$. We used this small value of $n$ to increase the readability of the figure, we used larger values for $n$ in the remainder. Given the small value of $n$, we used a small mutation update factor of $1.2$ instead of the value $F=32$ used in our theoretical analysis. This run uses Algorithm \ref{alg:onelambda} with $r^{\init}=F$. We see that the algorithm prefers large mutation strengths at the beginning and small mutation strengths near the end of the optimization process. We also see that fitness distance can increase occasionally, in particular, when the rate is higher (in the plot, this happened in iteration 52 and iteration 88).

In Figure \ref{figure:runtime_cmp}, we display the average runtime over 100 runs of different versions of the \oclea on \onemax for $n=10^5$ and $\lambda=100,200,\dots,1000$. For our self-adaptive \oclea (Algorithm \ref{alg:onelambda}), we used the update strengths $F \in \{1.2, 2, 32\}$. We did experiments also for $F = 1.05$, but the results were clearly inferior, so to not overload this figure we do not visualize them. We always set the initial mutation strength to $r^{\init}=F$. We further regard the classic \oclea using a static mutation rate of $\tfrac 1n$ and the \oclea with fitness-dependent mutation rate $p=\max\{\frac{\ln\lambda}{n\ln(en/d)},\frac{1}{n}\}$ as presented in \cite{BadkobehLS14}. 

The results clearly show that the update factor of $F=32$ used in our mathematical analysis gives sub-optimal results for these values of $\lambda$ and $n$. Recalling the working principle of the self-adaptive \oclea, this is not overly surprising. Even using the minimal possible rate $r=F$, the algorithm creates half of the offspring using an incredible large mutation probability of $F^2/n = 1024/n$. It is quite clear that this cannot be overly effective, but this can also be seen from the figure. The runtime of the self-adaptive \oclea with $F=32$ is very close to the runtime of the static \oclea for half the $\lambda$-value, suggesting that half the offspring created by the self-adaptive \oclea, most likely the ones created with a mutation rate of $F^2/n$, had no impact on the process. 

The results in Figure~\ref{figure:runtime_cmp} also show that the fitness-dependent mutation strength of~\cite{BadkobehLS14} leads to a very good performance. In principle, of course, it is clear that the best fitness-dependent rate gives better results than any self-regulating rate since the latter needs to use also sub-optimal rates to find out what is the best rate. That the rate suggested in~\cite{BadkobehLS14}, a paper mostly concerned with asymptotic runtimes, shows such good results, is remarkable.

To ease the comparison of the algorithms having a similar performance, we plot in Figure \ref{figure:relative_runtime_cmp} these runtimes relative to the one of the classic \oclea. This shows that in most cases, the EAs using a dynamic mutation rate outperform the classic \oclea. We also notice that the self-adaptive EA appears to outperform the one using the fitness-dependent rate for sufficiently large values of $\lambda$, e.g., for $\lambda \ge 200$ when $F=1.2$.

To understand how our tie-breaking rule influences the performance, we also ran the self-adapting \oclea without the bias towards smaller rates when breaking ties. In Figure \ref{figure:breaking_ties}, we again plot the average runtimes over 100 runs relative to the results of static \oclea. We use the three update factors $1.2$, $2$, and $32$ and the two tie-breaking rule of preferring the smaller rate in case of ties (as in our theoretical analysis) and random tie-breaking, that is, choosing uniformly at random an offspring with maximal fitness and taking its rate as the new rate of the algorithm. While for the two larger factors $F=2$ and $F=32$, no significant differences are visible, we see that for $F=1.2$ random tie-breaking surpasses biased tie-breaking significantly when $\lambda$ become larger than $200$. 

To understand how the tie-breaking rule influences the mutation strength chosen by the algorithm, we plot in Figure~\ref{figure:rate} the mutation strength used at each fitness distance with a setting of $n=10000$, $\lambda=500$, and $F=1.2$. We regarded one exemplary runs of our algorithm with each tie breaking rule. In each of these two experiments, we determined the set of all pairs $(d_t,r_t)$ such that in iteration $t$, the fitness distance of the parent individual was $d_t$ and its rate was $r_t$. We then plotted these sets, where to increase the readability we connected the points to polygonal curve. This visualization clearly shows that random tie breaking lets the algorithm pick larger rates more frequently. Together with the better runtimes, it appears that biased tie-breaking has a small negative effect on the choice of the mutation strength.

Finally, we regard the question of how to set the initial rate $r^{\init}$. From the general experience that larger mutation rates are more profitable at the start of the search process, one could guess that it is a good idea to start with the largest possible rate $\rmax = F^{\lfloor \log_F (n/(2F)) \rfloor}$ instead of the smallest possible rate $r_{\min}=F$. For the settings used in Figure~\ref{figure:rate}, that is, $n = 10000$, $\lambda = 500$, and $F = 1.2$, we obtain (as average of 100 runs) the runtimes given in Table~\ref{table:init}. So indeed an initialization with a larger rate gives some improvement. Since it might be a particularity of the \onemax test function that huge rates are initially beneficial, we would not give out a general recommendation to start with the rate $\rmax$, but only state that we observed moderate performance differences from using different initial rates, making the initial rate not the most critical parameter of the algorithm, but still one that can be worth optimizing.

\begin{table}
	\centering
	\begin{tabular}{|c|c|c|}
		\hline
		Average runtime & Biased ties breaking & Random ties breaking\\
		\hline
		$r^{\init}=r_{\min}$ & 2137 & 2011\\
		\hline
		$r^{\init}=r_{\max}$ & 2080 & 1974\\
		\hline
	\end{tabular}
	\caption{Comparison of the average runtime of 100 runs for different initial mutation rates ($n=10000$, $\lambda=500$, and $F=1.2$) }
	\label{table:init}
\end{table}

\begin{figure}
	\includegraphics[scale=0.90]{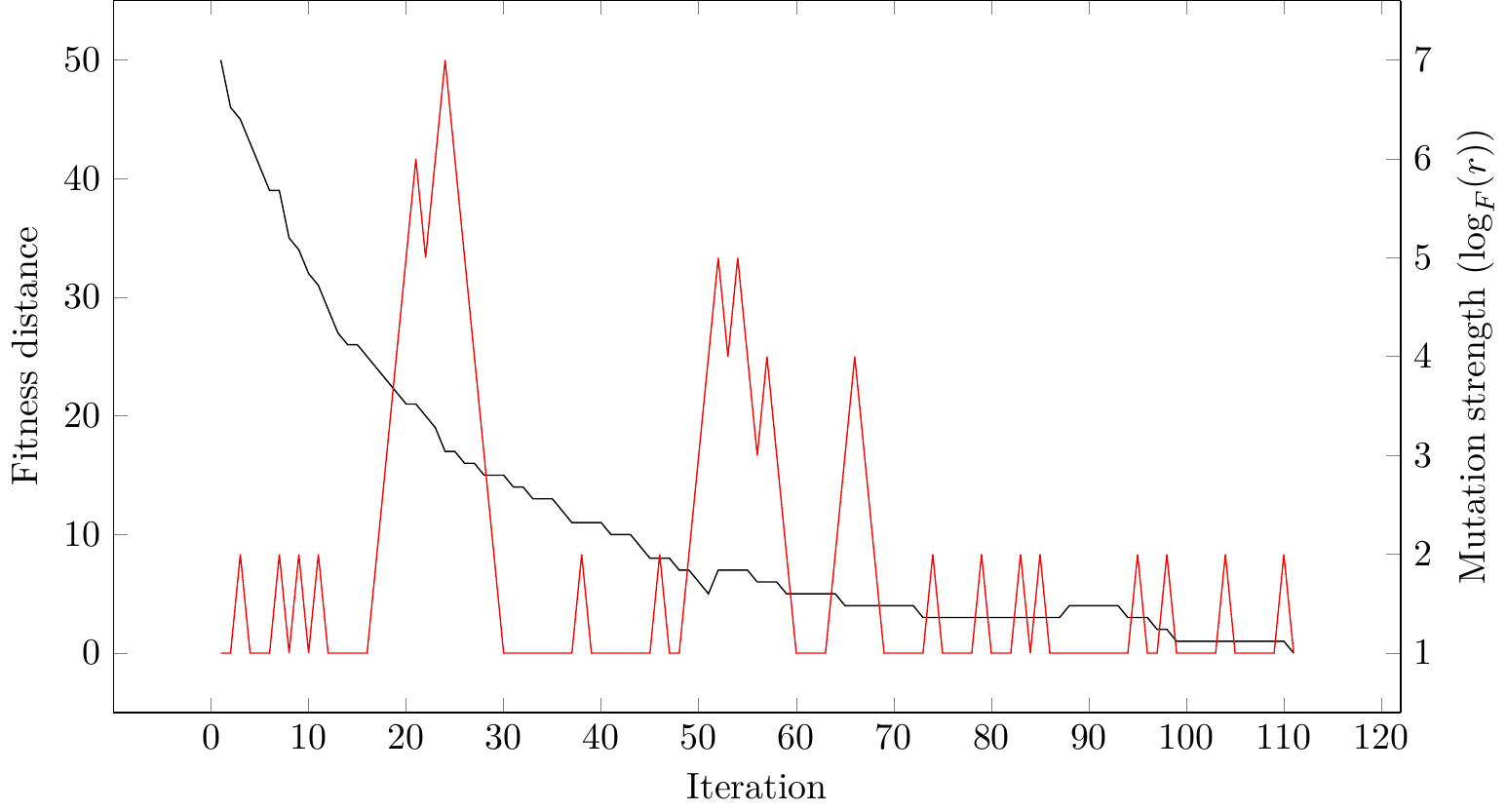}
	\caption{Development of fitness distance and mutation strength in one run of self-adapting \oclea on \onemax ($n=100$, $F=1.2$, $\lambda=12$)}
	\label{figure:one_run}
\end{figure}

\begin{figure}
	\centering
	\includegraphics[scale=0.90]{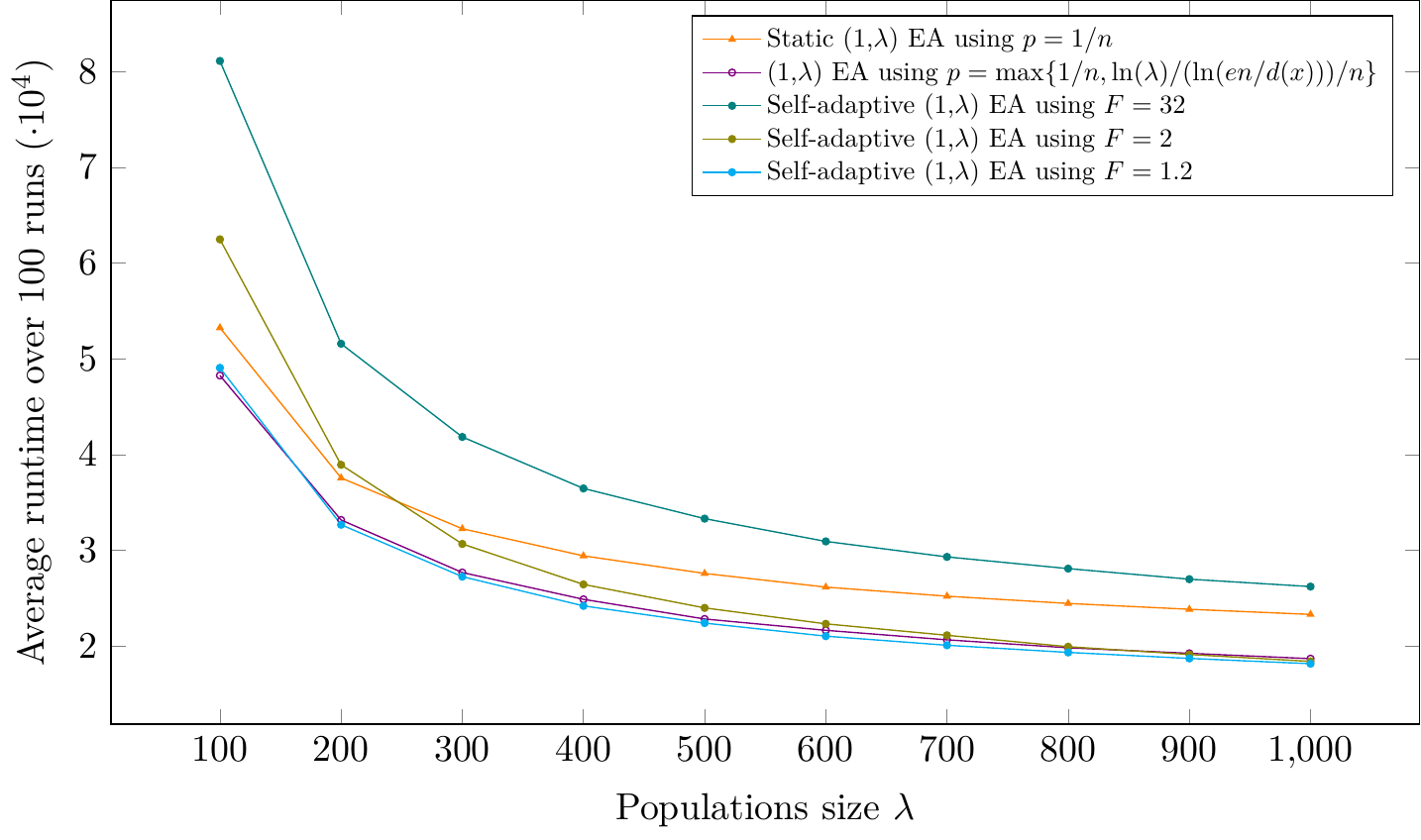}
	\caption{Average runtime over 100 runs of five variants of the \oclea  on \onemax for $n=10^5$.}
	\label{figure:runtime_cmp}
\end{figure}

\begin{figure}
	\centering
	\includegraphics[scale=0.90]{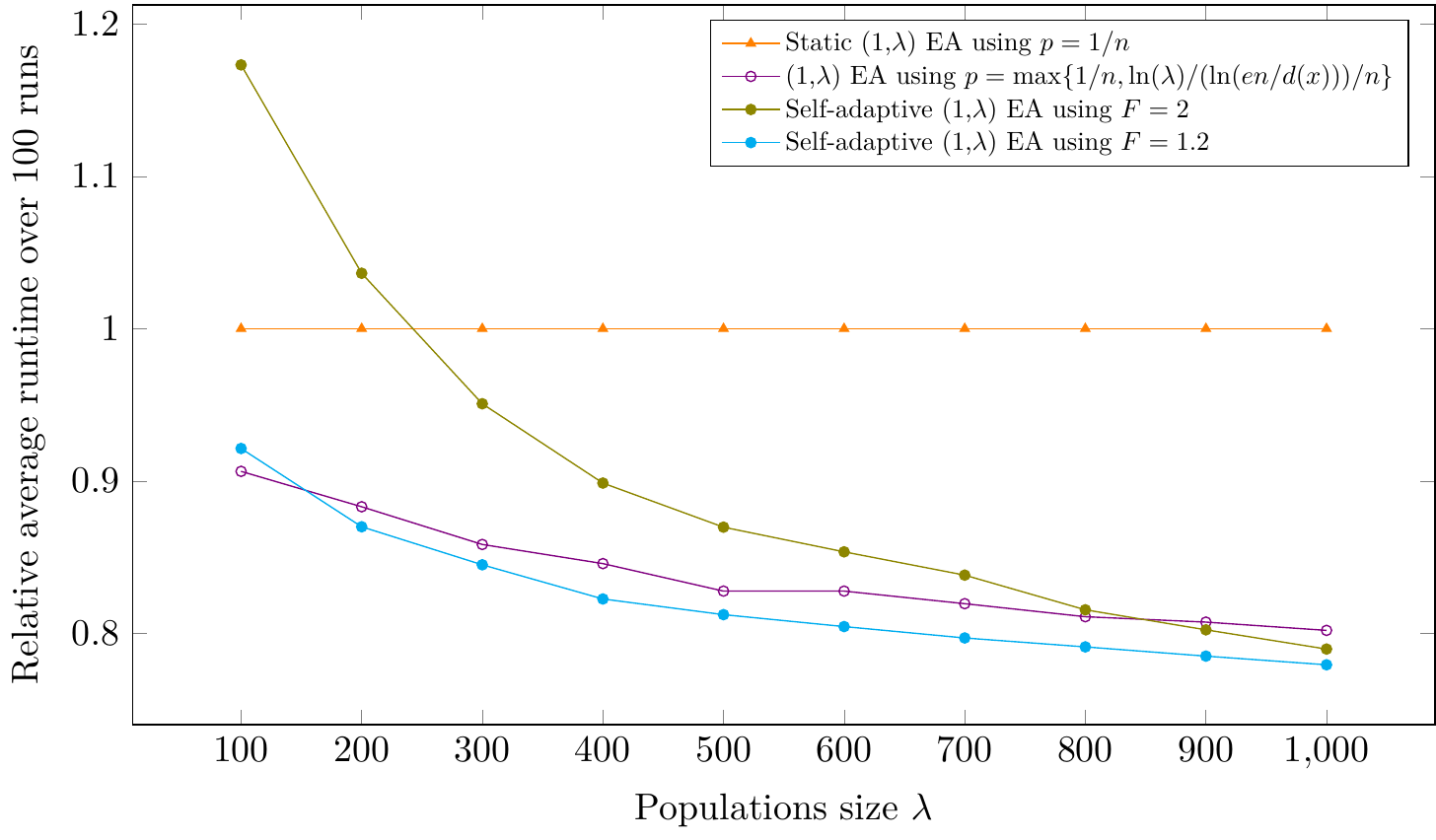}
	\caption{Average runtime of three dynamic \oclea{}s relative to the average runtime of the static \oclea on \onemax($n=10^5$)}
	\label{figure:relative_runtime_cmp}
\end{figure}

\begin{figure}
	\centering
	\includegraphics[scale=0.90]{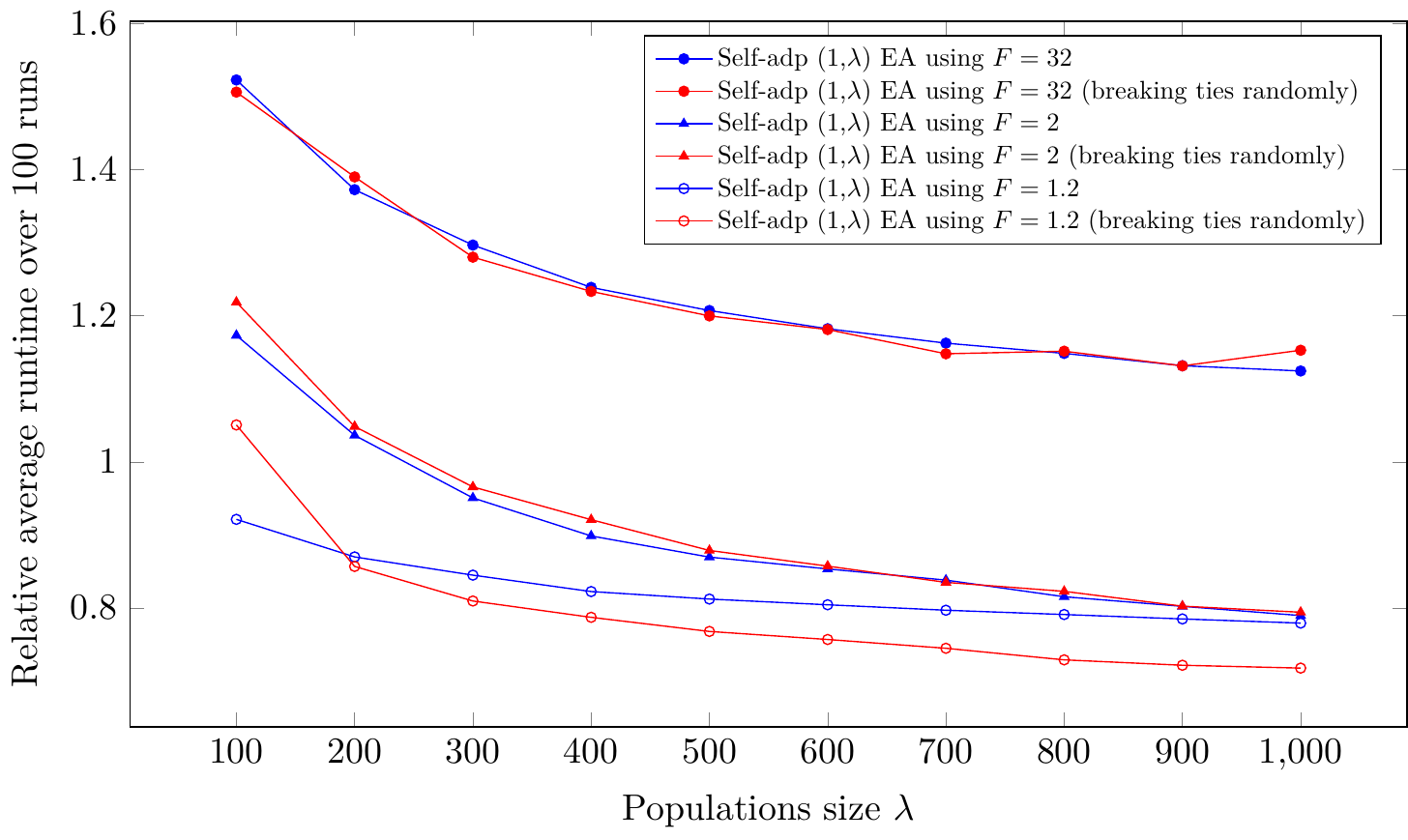}
	\caption{Relative average runtime of self-adapting \oclea{}s  with different tie breaking rules on \onemax ($n=10^5$)}
	\label{figure:breaking_ties}
\end{figure}

\begin{figure}
	\centering\includegraphics[scale=0.90]{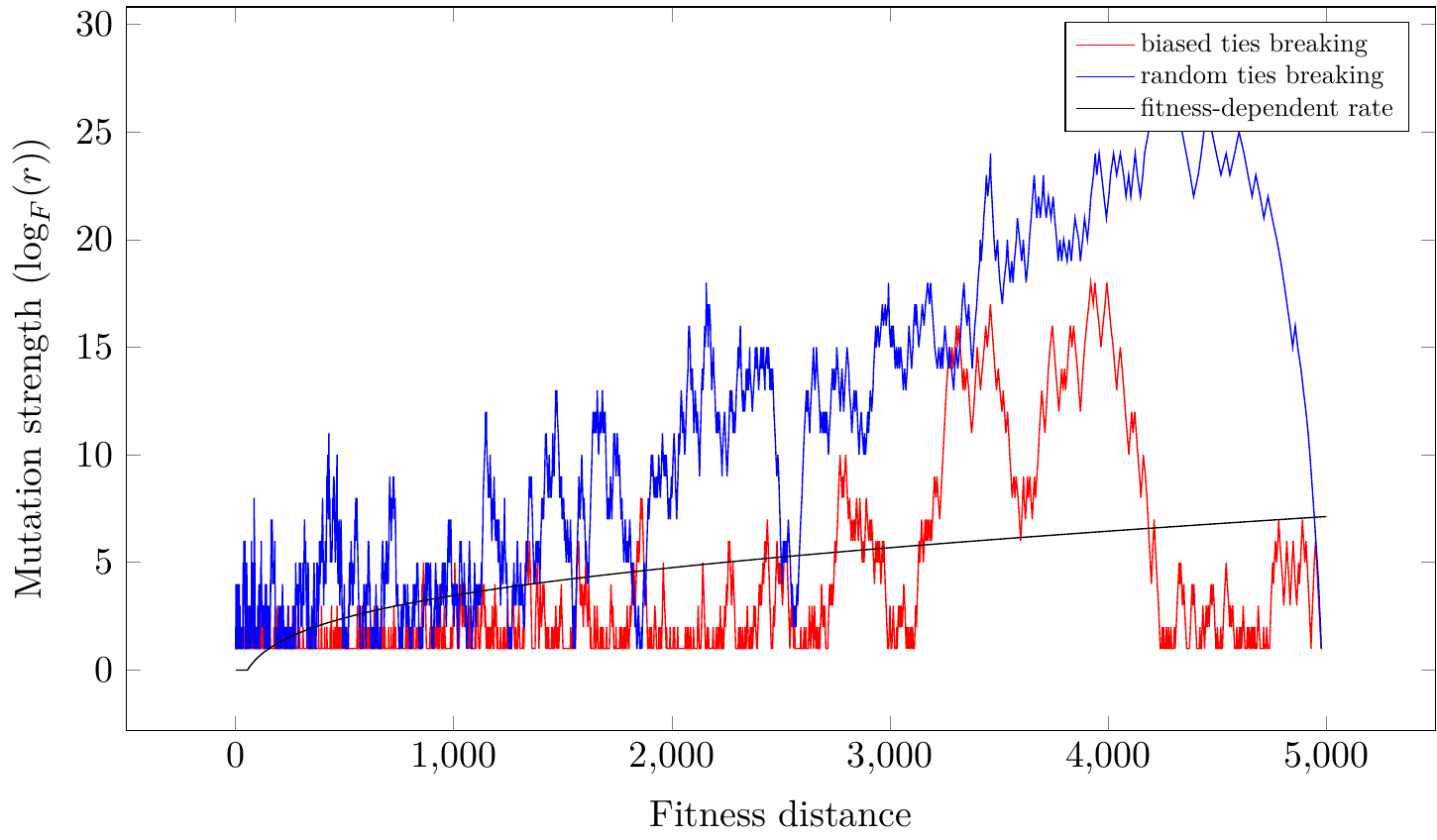}
	\caption{Mutation strengths used at a certain fitness distance level in two example runs of the self-adapting \oclea on \onemax ($n=10000$, $\lambda=500$, $F=1.2$). For comparison, also the fitness-dependent rate proposed in~\cite{BadkobehLS14} is plotted. Recall that a mutation strength of $r$ in the self-adapting runs means that in average half the offspring use the rate $r/F$ and half use the rate $rF$.}
	\label{figure:rate}
\end{figure}

\section*{Conclusions}

In this work, we have designed and analyzed a self-adaptive \oclea using a  
simple scheme for mutating the mutation rate. We have proven that for $\lambda = \Omega(\log n)$ it achieves an expected runtime (number of fitness evaluations) of $O(n\lambda/\log\lambda+n\log n)$ on \om, 
which is best-possible for $\lambda$-parallel mutation-based unbiased black-box algorithms. Hence, 
we have identified a simple and natural example where self-adaptation of strategy parameters 
in discrete EAs can lead to provably optimal runtimes that beat all known static parameter settings. 
Moreover, a 
relatively complicated and partly unintuitive self-adjusting scheme for the mutation rate proposed 
in~\cite{DoerrGWY18} can be replaced by our simple endogenous scheme.

The analysis of this \oclea has revealed a non-trivial stochastic process in the cross product of fitness distances and mutation rates. 
We have advanced the techniques for the analysis of such two-dimensional processes, both via two new lemmas on occupation probabilities and by proposing suitable potential functions allowing to use classic drift theorems. 

Altogether, we are optimistic that 
our research helps pave the ground for further uses and analyses of self-adaptive EAs.

\paragraph{Acknowledgments}

The authors thank Christian Gie{\ss}en for useful discussions on this topic. 
This work was supported by a public grant as part of the Investissement d'avenir project, reference ANR-11-LABX-0056-LMH, LabEx LMH, in a joint call with Gaspard Monge Program for optimization, operations research and their interactions with data sciences. 
This publication is based upon work from COST Action CA15140, supported by COST.  

\bibliographystyle{alpha}

\newcommand{\etalchar}[1]{$^{#1}$}

%
}
\end{document}